\long\def\addtocontents#1#2{%
  \protected@write\@auxout
    {\let\label\@gobble \let\index\@gobble \let\glossary\@gobble}%
    {\string\@writefile{#1}{#2}}}
\definecolor{mydarkblue}{rgb}{0,0.08,0.45}
\theoremstyle{plain}
\newtheorem{theorem}{Theorem}[section]
\newtheorem{proposition}[theorem]{Proposition}
\newtheorem{lemma}[theorem]{Lemma}
\newtheorem{corollary}[theorem]{Corollary}
\theoremstyle{definition}
\newtheorem{definition}[theorem]{Definition}
\newtheorem{assumption}[theorem]{Assumption}
\theoremstyle{remark}
\newtheorem{remark}[theorem]{Remark}
\newtheorem{example}[theorem]{\textbf{\emph{Example}}}
\newtheorem{claim}[theorem]{\textbf{\emph{Claim}}}
\newenvironment{counterexample}{%
  \proof}{\endproof}
\patchcmd{\algorithmic}{\addtolength{\ALC@tlm}{\leftmargin} }{\addtolength{\ALC@tlm}{\leftmargin}}{}{}
\newcommand{\alglinelabel}{%
  \addtocounter{ALC@line}{-1}
  \refstepcounter{ALC@line}
  \label
}
\newcommand{\ind}{\perp\!\!\!\!\perp} 
\icmltitlerunning{Which Invariance Should We Transfer? A Causal Minimax Learning Approach}
\begin{document}

\twocolumn[
\icmltitle{Which Invariance Should We Transfer? A Causal Minimax Learning Approach}




\begin{icmlauthorlist}
\icmlauthor{Mingzhou Liu}{pkucs,pkucfcs}
\icmlauthor{Xiangyu Zheng}{pkugh}
\icmlauthor{Xinwei Sun \Letter}{fdu}
\icmlauthor{Fang Fang}{pkuphy}
\icmlauthor{Yizhou Wang}{pkucs,pkucfcs,pkuai}
\end{icmlauthorlist}

\icmlaffiliation{pkucs}{Sch. of Computer Science, Peking University}
\icmlaffiliation{pkucfcs}{Center on Frontiers of Computing Studies, Peking University}
\icmlaffiliation{pkugh}{Dep. of Statistics, Guanghua Sch. of Management, Peking University}
\icmlaffiliation{fdu}{Sch. of Data Science, Fudan University}
\icmlaffiliation{pkuphy}{Sch. of Psychological and Cognitive Sciences, Peking University}
\icmlaffiliation{pkuai}{Inst. for Artificial Intelligence,
Peking University}

\icmlcorrespondingauthor{Xinwei Sun}{sunxinwei@fudan.edu.cn}

\icmlkeywords{robustness, minimax, subset selection, structural causal model, $\sim_G$-equivalence}

\vskip 0.3in
]



\printAffiliationsAndNotice{}  

\addtocontents{toc}{\protect\setcounter{tocdepth}{0}}

\begin{abstract}
  A major barrier to deploying current machine learning models lies in their non-reliability to dataset shifts. To resolve this problem, most existing studies attempted to transfer stable information to unseen environments. Particularly, \emph{independent causal mechanisms}-based methods proposed to remove mutable causal mechanisms via the \emph{do}-operator. Compared to previous methods, the obtained stable predictors are more effective in identifying stable information. However, a key question remains: \emph{which subset of this whole stable information should the model transfer, in order to achieve optimal generalization ability?} To answer this question, we present a comprehensive minimax analysis from a causal perspective. Specifically, we first provide a graphical condition for the whole stable set to be optimal. When this condition fails, we surprisingly find with an example that this whole stable set, although can fully exploit stable information, is not the optimal one to transfer. To identify the optimal subset under this case, we propose to estimate the worst-case risk with a novel \emph{optimization} scheme over the intervention functions on mutable causal mechanisms. We then propose an efficient algorithm to search for the subset with minimal worst-case risk, based on a newly defined equivalence relation between stable subsets. Compared to the exponential cost of exhaustively searching over all subsets, our searching strategy enjoys a polynomial complexity. The effectiveness and efficiency of our methods are demonstrated on synthetic data and the diagnosis of Alzheimer's disease. 
\end{abstract}
\section{Introduction}

Current machine learning systems, which are commonly deployed based on their in-distribution performance, often encounter \emph{dataset shifts} \cite{quinonero2008dataset} such as covariate shift, label shift, \emph{ect.}, due to changes in the data generating process. When such shifts exist in deployment environments, the model may give unreliable prediction results, which can cause severe consequences in safe-critical tasks such as healthcare \citep{hendrycks2021many}. At the heart of this unreliability issue are \emph{stability} and \emph{robustness} aspects, which respectively denote the insensitivity of prediction behavior and generalization errors to dataset shifts. 

For example, consider the system deployed to predict the Functional Activities Questionnaire (FAQ) score that is commonly adopted \cite{mayo2016use} to measure the severity of Alzheimer's disease (AD). During the prediction, the system can only access biomarkers and volumes of brain regions as covariates, with demographic information anonymous for privacy consideration. However, the changes in such demographics can cause shifts in covariates. To achieve reliability for the deployed model, its prediction is desired to be stable against demographic changes, and meanwhile to be constantly accurate across all populations. For this purpose, this paper aims to find the most robust (\emph{i.e.}, minimax optimal) predictor, among the set of stable predictors across all deployed environments.

\begin{figure*}[t!]
\centering
\includegraphics[width=0.9\textwidth]{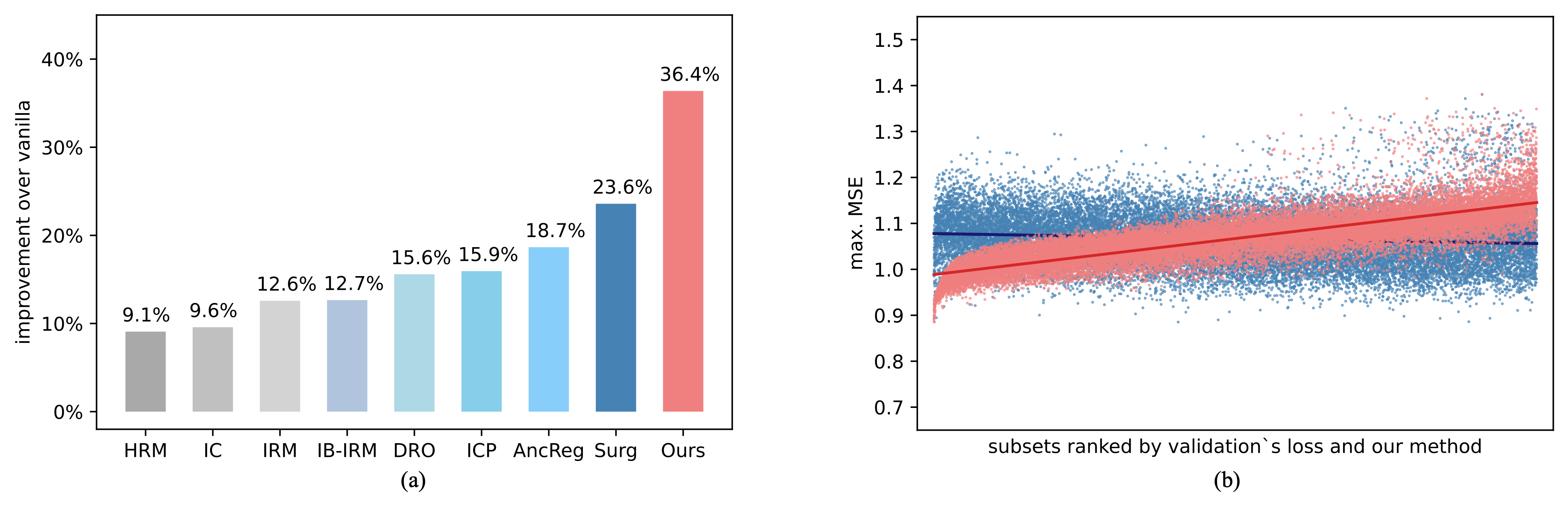}
\caption{FAQ prediction in Alzheimer's disease. (a) Comparison of maximal mean square error (max. MSE) over deployed environments. (b) Max. MSE of subsets that are ranked in ascending order from left to right, respectively according to the estimated worst-case risk of our method (marked by red) and the validation‘s loss adopted by \cite{subbaswamy2019preventing} (marked by blue). }
\label{fig:adni-intro}
\end{figure*}

To achieve this goal, many studies attempted to learn invariance to transfer to unseen data. Examples include ICP \cite{peters2016causal} and  \citep{rojas2018invariant, liu2021heterogeneous, ausset2022empirical} that assumed the prediction mechanism given causal features or representations to be invariant; or \citep{subbaswamy2019preventing, rothenhausler2021anchor} that explicitly attributed the variation to a prior selection diagram or an exogenous variable. Particularly, the recent \emph{independent causal mechanisms} (ICM)-based methods \cite{subbaswamy2019preventing,scholkopf2021toward} causally factorized the joint distribution into the mutable ($M$) set and the stable ($S$) set, which contained variables with changed and unchanged causal mechanisms, respectively. By intervening on $M$, they obtained a set of stable predictors, with each containing a stable subset of $S$ to transfer. Compared to ICP-related methods \cite{peters2016causal, buhlmann2020invariance}, these stable predictors exploited more types of invariance and thus potentially had better prediction power. However, an important question on robustness has not been studied: \emph{which subset of $S$ should the model transfer, in order to achieve optimal generalization ability?}

In this paper, we give a comprehensive answer from the perspective of the structural causal model. Specifically, we first provide a graphical condition that is sufficient for the whole stable set to be optimal. This condition can be easily tested via causal discovery. When this condition fails, we construct an example that counter-intuitively shows that this whole stable set, although keeps all the stable information, is \textbf{NOT} the optimal one to transfer. Under this case, we propose an \emph{optimization} scheme over the intervention functions on $M$, which is provable to identify the worst-case risk for each stable subset. Our key observation is that the source of dataset shifts is governed by $M$; therefore, the intervention on $M$, if set appropriately, can well mimic the worst-case deployed environment. Back to the FAQ prediction example, Fig.~\ref{fig:adni-intro} (b) shows that our method can consistently reflect the maximal mean squared error (max. MSE) of stable subsets; as a contrast, the validation's loss adopted by \cite{subbaswamy2019preventing} fails to do so. This explains our advantage in predicting FAQ across patient groups shown in Fig.~\ref{fig:adni-intro} (a). 

To efficiently search for the optimal subset, we define an equivalence relation between stable subsets via $d$-separation such that two equivalent subsets share the same worst-case risk. We theoretically show that compared to exhaustively searching over all subsets, searching over only equivalence classes can reduce the exponential complexity to a polynomial one. The effectiveness and efficiency of our methods are demonstrated by the improved robustness, stability, and computation efficiency on a synthetic dataset and the diagnosis of Alzheimer's disease. 

\textbf{Contributions.} To summarize, our contributions are: 
\begin{enumerate}[noitemsep,topsep=0.1pt]
    \item We propose to select the optimal subset of invariance to transfer, guided by a comprehensive minimax analysis from the causal perspective. To the best of our knowledge, this is the \emph{first} work to study the problem of \emph{which invariance should we transfer}, in the literature of robust learning. 
    \item We define an equivalence relation between stable subsets, and accordingly propose to search over only equivalence classes. This new search algorithm can be efficiently solved in polynomial time. 
    \item We achieve better robustness and stability than others on synthetic data and Alzheimer's disease diagnosis. 
\end{enumerate}
\section{Related work}

\textbf{Causality-based domain generalization.} There are emerging works that considered domain generalization from the causal perspective. One line of works \cite{arjovsky2019invariant,liu2021heterogeneous,ahuja2021invariance,ausset2022empirical} promoted invariance as a key surrogate feature of causation where the causal graph was more of a motivation. Another line of works \cite{peters2016causal,rojas2018invariant,martinet2022variance} was based on invariance assumptions regarding the causal mechanisms. The works most relevant to us are \citep{subbaswamy2019preventing, scholkopf2021toward}, which followed the principle of independent causal mechanisms \cite{scholkopf2012causal} to identify invariance by removing the mutable causal mechanisms. However, they \textbf{did not} study how to select the optimal subset in terms of robustness on out-of-distribution generalization. 

\textbf{Optimization-based domain generalization.} Some recent works, \emph{e.g.}, DRO \cite{sinha2018certifiable} and \citep{sagawa2019distributionally,wu2022generalization} formulated domain generalization as a minimax optimization problem and optimized the predictor for robustness. For optimization convenience, they usually constrained the dataset shifts to a limited extent, which limited their application in the real world. \textbf{In contrast}, we adopt optimization to estimate the worst-case risks of predictors, then select the best one via comparison. Our method can generalize well in a broader distribution family, where the extent of dataset shifts can be unbounded.

\textbf{Heterogeneous causal discovery.} Our work benefits from the recent progress in heterogeneous causal discovery \cite{ghassami2018multi,huang2020causal,perry2022causal}, a field that seeks to learn the causal graph with data from multiple environments. However, unlike causal discovery that recovers causal relationships, we focus on minimax analysis and robust subset selection.

\section{Preliminary} 
\label{sec.background}

We consider the supervised regression scenario, where the system includes a target variable $Y \!\in\! \mathcal{Y}$, covariates $\mathbf{X}\!:=\! [X_1,...,X_d] \!\in\! \mathcal{X}$, and data collected from heterogeneous environments. In practice, different ``environments" can refer to different groups of subjects or different experimental settings. We denote the set of training environments as $\mathcal{E}_{\mathrm{tr}}$, and the broader set of environments for deployment as $\mathcal{E}$. We denote $E$ as the environmental indicator variable with support $\mathcal{E}$. We use $\{\mathcal{D}_{e}\}_{e\in\mathcal{E}_{\text{tr}}}$ to denote our training data, with ${\mathcal{D}_{e}\!:=\!\{(\boldsymbol{x}_{k}^e, {y}_{k}^e)\}_{k=1}^{n_e} \sim_{i.i.d} P^e(\mathbf{X},Y)}$ being data collected from environment $e$. In a directed acyclic graph (DAG) $G$, we denote the parents, children, neighbors, and descendants of the vertex $V_i$ as $\mathbf{Pa}(V_i)$, $\mathbf{Ch}(V_i)$, $\mathbf{Neig}(V_i)$, and $\mathbf{De}(V_i)$, respectively. We use $\ind_G$ to denote \emph{d}-separation in $G$. We denote $G_{\overline{V_i}}$ as the graph attained via deleting all arrows pointing into $V_i$. 

Our goal is to find the most robust predictor $f^*$ among stable predictors with data from $\mathcal{E}_{\mathrm{tr}}$. Here, we say a predictor $f \!:\! \mathcal{X} \to \mathcal{Y}$ is stable if it is independent of $E$. We denote the set of stable predictors as $\mathcal{F}_S$. For robustness, a commonly adopted measurement \cite{peters2016causal,ahuja2021invariance} is to investigate a predictor's worst-case risk, which provides a safeguard for deployment in unseen environments. That is, we want $f^*$ to have the following minimax property:
\begin{align}
    f^{*}(\boldsymbol{x})=\mathop{\mathrm{argmin}}\limits_{f \in \mathcal{F}_S}        \max_{e\in\mathcal{E}}
    \mathbb{E}_{P^e}[(Y-f(\boldsymbol{x}))^2].
\label{eq.robust}
\end{align}
Next, we introduce some basic assumptions, which are commonly made in causal inference and learning \cite{spirtes2000causation,pearl2009causality,arjovsky2019invariant}.
\begin{assumption}[Structural causal model]
\label{eq:assum-dag}
We assume that $P^e(\mathbf{X},Y)$ is entailed by an \emph{unknown} DAG $G$ over $\mathbf{V}$ for all $e \in \mathcal{E}$, where $\mathbf{V}:= \mathbf{X} \cup Y$. Each variable $V_i \in \mathbf{V}$ is generated by a structural equation $V_i=g^e_{i}(\mathbf{Pa}({V_i}), U_i)$, where $U_i$ denotes an exogenous variable. We assume  
each $g^e_{i}$ is continuous and bounded. Each edge $V_i \to V_j$ in $G$ means $V_i$ is a direct cause of $V_j$. Besides, we assume the model is Markovian which states that $\mathbf{A} \ind_G \mathbf{B} | \mathbf{Z} \Rightarrow \mathbf{A} \ind \mathbf{B} | \mathbf{Z}$ for disjoint vertex sets $\mathbf{A},\mathbf{B},\mathbf{Z} \subseteq \mathbf{V}$. 
\end{assumption}

According to the Causal Markov Condition theorem \cite{pearl2009causality}, the joint distribution can be causally factorized into $P^e(\mathbf{V})=\prod_i P^e(V_i|\mathbf{Pa}(V_i)))$, where $P^e(V_i|\mathbf{Pa}(V_i)))$ is the causal factor of $V_i$. Based on the principle of independent causal mechanisms \cite{scholkopf2012causal}, these causal factors are autonomous of each other. On the basis of this, the interventional distribution is defined as $P(\mathbf{V}|do(V_i\!=\!v_i)):=\prod_{j \neq i} P^e(V_j|\mathbf{Pa}(V_j)))\mathbbm{1}_{V_i=v_i}$. Here $do(V_i\!=\!v_i)$ means lifting $V_i$ from its original causal mechanism $g^e_{i}(\mathbf{Pa}({V_i}), U_i)$ and setting it to a constant value $v_i$. 

In addition to the Markovian assumption, we also assume the causal faithfulness, which enables us to infer the graph structure from probability properties:

\begin{assumption}[Causal faithfulness]
\label{asm:faithfulness}
    For disjoint vertex sets $\mathbf{A},\mathbf{B},\mathbf{Z} \subseteq \mathbf{V}$, $\mathbf{A} \ind \mathbf{B} | \mathbf{Z} \Rightarrow \mathbf{A} \ind_G \mathbf{B} | \mathbf{Z}$. 
\end{assumption}

\textbf{Sparse mechanism shift hypothesis across $\mathcal{E}$.} To build the connection between seen and unseen environments for transfer, we adopt the \emph{sparse mechanism shift hypothesis} \cite{scholkopf2021toward}, \emph{i.e.}, distributional shifts in $P^e(\mathbf{X},Y)$ are the results of changes in only a subset of causal factors. Formally, 
\begin{align}
\label{eq: causal factorization}
    & P^e(\mathbf{X},Y) = P(Y|\mathbf{Pa}(Y)) \prod_{i \in S} P\left(X_i|\mathbf{Pa}(X_i)\right) \nonumber \\
    & \prod_{i \in M} P^e(X_i|\mathbf{Pa}(X_i)), d_S:=|S|, d_M:=|M|,
\end{align}
where $S, M$ respectively denote stable and mutable sets such that each $X_i \in \mathbf{X}_S$ has an invariant causal factor $P(X_i|\mathbf{Pa}(X_i))$; while the factor of each $X_i \in \mathbf{X}_M$ varies across $\mathcal{E}$. Correspondingly, we call $\mathbf{X}_S$ as stable variables and $\mathbf{X}_M$ as mutable variables. In addition to $\mathbf{X}_S$, we also assume the causal factor of $Y$ keeps invariant across $\mathcal{E}$, as widely adopted by the existing literature \cite{arjovsky2019invariant,sun2021recovering,mitrovic2021representation}.

To recover the $\mathbf{X}_M$ from the training distribution, it is also necessary to assume that $\mathcal{E}_{\mathrm{tr}}$ can reflect the mutation of $\mathbf{X}_M$ across $\mathcal{E}$. Formally,
\begin{assumption}[Consistent heterogeneity]
\label{assum:e-train}
For each $X_i \in \mathbf{X}_M$, there exists two different environments $e, e^\prime \in \mathcal{E}_{\mathrm{tr}}$ such that $P^{e}(X_i|\mathbf{Pa}(X_i)) \neq P^{e^\prime}(X_i|\mathbf{Pa}(X_i))$.
\end{assumption}

\textbf{Stable predictor set $\mathcal{F}_S$ via $do(\boldsymbol{x}_M)$.} Based on Eq.~\eqref{eq: causal factorization}, the \citep{subbaswamy2019preventing} obtained a \emph{stable predictor} set $\mathcal{F}_S:=\{f_{S^\prime}(\boldsymbol{x})|S^\prime\!\subseteq \!{S}\}, f_{S^\prime}(\boldsymbol{x})\!:=\!\mathbb{E}[Y|\boldsymbol{x}_{S^\prime}, do(\boldsymbol{x}_M)]$ by intervening on $\mathbf{X}_M$. Compared to \cite{peters2016causal, rojas2018invariant} that only used invariance from stable causal features, these stable predictors in $\mathcal{F}_S$ could additionally exploit invariance from mutable features, thus potentially having better transfer ability.

However, regarding robustness, it remains unknown which predictor in $\mathcal{F}_S$ is optimal. As identifying $f^* \in \mathcal{F}_S$ is equivalent to selecting the optimal stable subset $S^* \subseteq S$ such that $f_{S^*}=f^*$, it turns to the following question: \emph{which subset of $S$ is the most robust one to transfer?} 

\section{Minimax analysis for the optimal subset}
\label{sec.identify}

In this section, we provide a comprehensive minimax analysis to answer the above question. At a first glance, one may take $S$ as optimal since it keeps all stable information. We shall show that this is not necessarily the case. To this end, we first provide a graphical condition for the whole stable set to be optimal, \emph{i.e.}, $S^* = S$. This graphical condition can be easily tested via causal discovery. Second, when this condition is not met, we offer a counter-example in which $S$ is not optimal. Then, to identify $S^*$ in this case, we propose an optimization scheme that is provable to identify the worst-case risk for each subset, equipped with which we can pick up the $S^*$ as the one with minimal worst-case risk. 

\begin{figure}[t]
    \centering
    \includegraphics[width=0.75\columnwidth]{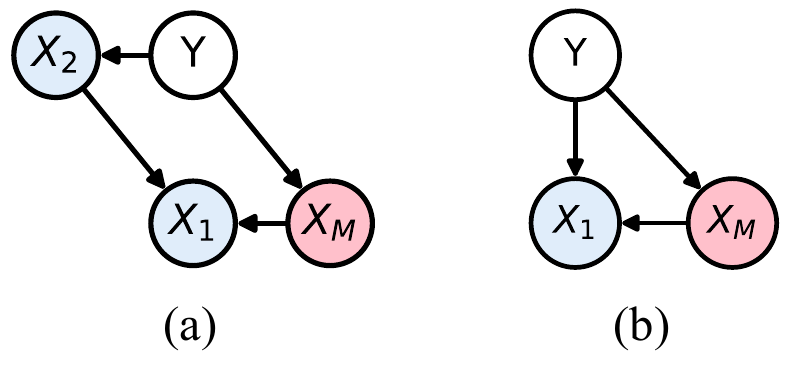}
    \caption{Illustration of the graphical condition in Thm.~\ref{thm:graph degenerate}. Stable and mutable variables are respectively marked blue and red. In both (a) and (b), we have $\mathbf{X}_M^0=\{X_M\}, \mathbf{W}=\{X_1\}$.}
    \label{fig: example graphical condition}
\end{figure}

Next, we first introduce a graphical condition and show that the whole stable set $S$ is optimal under this condition. 

\begin{theorem}[Graphical condition for $S^*\!=\!S$]
\label{thm:graph degenerate}
Suppose Asm.~\ref{eq:assum-dag} holds. Denote $\mathbf{X}_{M}^{0}\!:=\!\mathbf{X}_{M}\cap \mathbf{Ch}(Y)$ as mutable variables in $Y$'s children, and $\mathbf{W}:=\mathbf{De}(\mathbf{X}_{M}^{0}) \backslash \mathbf{X}_M^0$ as descendants of $\mathbf{X}_{M}^{0}$. Then, we have $S^*=S$ if $Y$ does not point to any vertex in $\mathbf{W}$. 
\end{theorem}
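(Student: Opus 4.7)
The plan is to prove the stronger pointwise inequality $R_e(f_S)\le R_e(f_{S'})$ for every $S'\subseteq S$ and every $e\in\mathcal E$; taking $\max_e$ on both sides then yields $\max_e R_e(f_S)\le \max_e R_e(f_{S'})$, i.e., $S^*=S$.

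My approach starts from the Pythagorean-style identity
\begin{align*}
R_e(f_{S'}) - R_e(f_S) = \mathbb E_{P^e}[(f_S-f_{S'})^2] + 2\,\mathbb E_{P^e}[(Y-f_S)(f_S-f_{S'})],
\end{align*}
which reduces the inequality to showing the cross term is nonnegative, and I in fact expect it to vanish. Using the tower property over $\sigma(\mathbf X_S,\mathbf X_M)$ and denoting $h_S^e:=\mathbb E^e[Y\mid\mathbf X_S,\mathbf X_M]$, the desired orthogonality becomes $\mathbb E_{P^e}[(h_S^e - f_S)(f_S-f_{S'})]=0$.

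To verify this, I would expand $h_S^e - f_S$ using the causal Markov factorization of Eq.~\eqref{eq: causal factorization}: among the $Y$-dependent factors, the only ones that differ between the observational conditional $P^e(Y\mid \mathbf X_S,\mathbf X_M)$ and the interventional conditional $P(Y\mid \mathbf X_S, do(\mathbf X_M))$ are the mutable children factors $\prod_{X_j\in\mathbf X_M^0}P^e(X_j\mid\mathbf{Pa}(X_j))$, so $h_S^e-f_S$ lies in the subspace generated by these mutable mechanisms. Meanwhile, $f_S-f_{S'}$ is determined by the stable interventional projections on the extra features in $S\setminus S'$. The graphical condition $Y\not\to\mathbf W$ is what certifies that these two subspaces are $L^2(P^e)$-orthogonal: it rules out any direct $Y\to w$ edge for $w\in\mathbf W=\mathbf{De}(\mathbf X_M^0)\setminus\mathbf X_M^0$, i.e., exactly the configuration that would allow the mutable contribution of $\mathbf X_M^0$ to become entangled with a stable contribution from $\mathbf W\cap(S\setminus S')$ via collider activation at $\mathbf X_M^0$.

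The main obstacle will be making this orthogonality rigorous. The condition does not yield a clean $d$-separation of the form $Y\ind \mathbf X_M^0\mid \mathbf X_S,\mathbf X_M\setminus\mathbf X_M^0$ in any modified subgraph — the direct edges $Y\to\mathbf X_M^0$ preclude such a separation — so the argument must proceed by a path-by-path analysis of the cross moment $\mathbb E_{P^e}[(h_S^e-f_S)(f_S-f_{S'})]$ using the SCM, decomposing each factor into exogenous-noise contributions and invariant versus mutable pieces. I expect the condition $Y\not\to\mathbf W$ to enter as the precise structural constraint needed to kill all "bridge" terms in this expansion. Once the cross term is shown to vanish, $R_e(f_{S'})\ge R_e(f_S)$ follows by nonnegativity of the remaining quadratic, and the theorem is established.
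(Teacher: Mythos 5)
Your plan hinges on the pointwise inequality $R_e(f_S)\le R_e(f_{S'})$ for \emph{every} $e\in\mathcal{E}$, and that inequality is false under the theorem's hypotheses; the optimality of $S$ here is genuinely a worst-case (max over $e$) statement. Concretely, take the DAG of Fig.~\ref{fig: example graphical condition}~(a), say $X_2\to Y\to X_M\to X_1\leftarrow X_2$ with $X_M$ mutable, which satisfies $Y\not\to \mathbf{W}$, and compare $f_S=\mathbb{E}[Y|x_2]$ with $f_{S'}$ for $S'=\{X_1\}$. The fixed function $f_{S'}(x_1,x_M)=\mathbb{E}[Y|x_1,do(x_M)]$ genuinely depends on $x_M$ (conditioning on $X_1$ links $Y$ to the intervened value through $X_2$), so it is not $\sigma(\mathbf{W}_2)$-measurable, where $\mathbf{W}_2:=\mathbf{X}\backslash(\mathbf{X}_M^0\cup\mathbf{De}(\mathbf{X}_M^0))$. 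Now pick an environment whose mutable mechanism $P^e(X_M|Y)$ encodes $Y$: with $Y=X_2+\epsilon_Y$, $X_1=X_2X_M+\epsilon_1$ (Gaussian noises), let $X_M\approx 0$ when $|Y|\le a$ and $|X_M|$ very large otherwise. Writing $\phi:=\tau^2/(X_M^2+\tau^2)$ one gets
\begin{align*}
\mathbb{E}_{P^e}\!\left[(Y-f_{S'})^2\right]\approx \mathbb{E}[\epsilon_Y^2]+\mathbb{E}\!\left[\mathbbm{1}(|Y|\le a)\left(\mathbb{E}[X_2^2\,|\,Y]+2\,\mathbb{E}[X_2\epsilon_Y\,|\,Y]\right)\right],
\end{align*}
and the bracket equals $-\mathrm{Var}(\epsilon_Y)/\mathrm{Var}(Y)<0$ at $Y=0$, so for small $a$ this risk drops strictly below $\mathbb{E}_{P^e}[(Y-f_S)^2]=\mathbb{E}[\epsilon_Y^2]$ in that environment. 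The same computation shows why your orthogonality step cannot be repaired: under a generic $P^e$ the residual $Y-f_S$ is correlated with functions of the mutable children and their descendants (those children listen to $Y$), and the cross term $\mathbb{E}_{P^e}[(Y-f_S)(f_S-f_{S'})]$ is exactly the quantity made negative above; $Y\not\to\mathbf{W}$ does not deliver $L^2(P^e)$-orthogonality, which is why no clean $d$-separation exists — the pointwise claim itself is wrong, not merely hard to certify.

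The missing idea is the worst-case construction. The paper first proves that $Y\not\to\mathbf{W}$ is \emph{equivalent} to the degeneration $P(Y|\mathbf{X}_S,do(\boldsymbol{x}_M))=P(Y|\mathbf{W}_2)$ (the forward direction by unfolding the truncated factorization, the converse via completeness of the do-calculus rules), and then invokes the minimax result of Thm.~4 in \cite{rojas2018invariant}: for each competing predictor one \emph{constructs a single adversarial environment} in which it incurs at least the loss of $f_S$, whose conditional $P(Y|\mathbf{W}_2)$ is invariant. That argument only needs one bad environment per competitor, which is compatible with $f_{S'}$ beating $f_S$ in other, benign environments — precisely the situation your pointwise-dominance route cannot accommodate.
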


{To understand the graphical condition, note that $Y \not \to \mathbf{W}$ enables applying the inference rules \cite{pearl2009causality} to remove the ``do'' in $P(Y|\mathbf{X}_S,do(\boldsymbol{x}_M))$ and degenerate it to a conditional distribution $P(Y|\mathbf{X}^\prime)$, for some $\mathbf{X}^\prime \subseteq \mathbf{X}$. This degeneration allows us to construct a $P^e$ where any other predictor has a larger quadratic loss than $f_S$ \cite{rojas2018invariant}, thus proving the optimality of $S$. Formally, we have the following equivalence result:}
\begin{proposition}
    Under Asm.~\ref{eq:assum-dag}, the graphical condition holds if and only if $P(Y|\mathbf{X}_S,do(\boldsymbol{x}_M))$ can degenerate to a conditional distribution without the ``do''. 
\end{proposition}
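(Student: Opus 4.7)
The plan is to prove both implications via Pearl's do-calculus \cite{pearl2009causality}. I partition $\mathbf{X}_M = \mathbf{X}_M^0 \cup \mathbf{X}_M^1$ with $\mathbf{X}_M^1 := \mathbf{X}_M \setminus \mathbf{X}_M^0$, and note that the graphical condition $Y \not\to \mathbf{W}$ controls where $Y$ can causally reach in the mutilated graph $G_{\overline{\mathbf{X}_M}}$. Specifically, in $G_{\overline{\mathbf{X}_M}}$ (all arrows into $\mathbf{X}_M$ cut), every directed path from $Y$ must have intermediate vertices outside $\mathbf{X}_M$, so such a path can reach $W \in \mathbf{W}$ only directly (ruled out by assumption) or via a chain of non-mutable vertices routed through $\mathbf{X}_S$.

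For the ``if'' direction, I would simplify $P(Y|\mathbf{X}_S, do(\mathbf{x}_M))$ by repeated applications of Rules 1--3 of do-calculus. First, use Rule 1 to shed any $X_i \in \mathbf{X}_S$ that becomes $d$-separated from $Y$ in $G_{\overline{\mathbf{X}_M}}$ given the remaining conditioning set; under the graphical condition, this leaves a residual subset $\mathbf{Z} \subseteq \mathbf{X}_S$ that blocks every remaining back-door and descendant path between $Y$ and $\mathbf{X}_M$. Second, apply Rule 3 to eliminate $do(\mathbf{x}_M)$ entirely, which is valid precisely because $Y$ is $d$-separated from $\mathbf{X}_M$ given $\mathbf{Z}$ in the appropriate mutilated graph induced by the condition. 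The net result is $P(Y|\mathbf{Z})$ with $\mathbf{Z} \subseteq \mathbf{X}$, an observational conditional with no $do$.

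For the ``only if'' direction, I would argue contrapositively: assume $Y \to W$ for some $W \in \mathbf{W}$, so $W$ is reached from $Y$ both directly and through some intervened ancestor $X_M^0 \in \mathbf{X}_M^0$. Then exhibit a parametric SCM (linear Gaussian suffices) compatible with $G$ for which $P(Y|\mathbf{X}_S, do(\mathbf{x}_M))$ is functionally independent of the overwritten mechanism $P(X_M^0|\mathbf{Pa}(X_M^0))$, whereas every candidate observational conditional $P(Y|\mathbf{Z})$, derived from the joint $P(\mathbf{V})$, inherits parameters of that mechanism through the contribution of $W$ (since $Y \in \mathbf{Pa}(X_M^0)$). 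Varying only these parameters between two SCMs preserves the interventional quantity while changing every $P(Y|\mathbf{Z})$, which forces the required non-equivalence.

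The main obstacle is the ``only if'' direction, where $\mathbf{Z}$ ranges over all subsets of observed variables and one must rule out all of them uniformly; the cleanest finish is to combine the parametric construction above with the completeness of the ID algorithm, which guarantees that any observational identifying formula uses only non-overwritten mechanisms. The ``if'' direction is largely mechanical once the key $d$-separations in $G_{\overline{\mathbf{X}_M}}$ are identified, but it still requires careful bookkeeping to verify that the Rule 1 removals terminate with a well-defined $\mathbf{Z} \subseteq \mathbf{X}$ and that Rule 3's $d$-separation premise is indeed satisfied after the adjustment set has been fixed.
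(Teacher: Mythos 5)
Your ``only if'' direction has a genuine gap. The plan is to perturb only the overwritten mechanism $P(X_{M}^{0}|\mathbf{Pa}(X_{M}^{0}))$ of an intervened child of $Y$, and to argue that this leaves $P(Y|\mathbf{X}_S,do(\boldsymbol{x}_M))$ unchanged while changing \emph{every} candidate conditional $P(Y|\mathbf{Z})$. The second half is false for exactly the candidates that matter most: if $\mathbf{Z}$ contains no descendant of $\mathbf{X}_M^0$ --- e.g.\ $\mathbf{Z}=\emptyset$, or $\mathbf{Z}=\mathbf{W}_2:=\mathbf{X}\backslash(\mathbf{X}_M^0\cup\mathbf{De}(\mathbf{X}_M^0))$, which is the very set the degeneration lands on in the positive case --- then $P(Y|\mathbf{Z})$ is a functional of the mechanisms of non-descendants of $\mathbf{X}_M^0$ only, so it is just as invariant under your perturbation as the interventional quantity, and the two-SCM comparison rules out nothing for such $\mathbf{Z}$. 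These are precisely the candidates that require work: the paper excludes them via faithfulness, showing that $Y\not\ind_{G_{\overline{\mathbf{X}_M^0}}}\mathbf{W}\,|\,\mathbf{W}_2$ forces $P(Y|\mathbf{X}_S,do(\boldsymbol{x}_M))=P(Y|\mathbf{W},\mathbf{W}_2,do(\boldsymbol{x}_M^0))$ to depend nontrivially on $\mathbf{W}$ and hence to differ from $P(Y|\mathbf{W}_2,do(\boldsymbol{x}_M^0))=P(Y|\mathbf{W}_2)$. Your fallback, ``completeness of the ID algorithm guarantees that any observational identifying formula uses only non-overwritten mechanisms,'' is not what ID completeness says and is beside the point here: under Asm.~\ref{eq:assum-dag} the model is Markovian, so the effect is always identifiable via truncated factorization; the issue is not identifiability but whether the identified functional can take the special form of a single conditional. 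The paper closes that uniformity problem by invoking completeness of the do-calculus (so any do-free reduction must be realizable by Rules 1--3) and then showing that Rules 2 and 3 can never eliminate $do(x_{M,i}^0)$ while a nonempty $\mathbf{W}^\prime\subseteq\mathbf{W}$ remains in the conditioning set, because the edge $Y\to X_{M,i}^0$ survives in the relevant mutilated graphs; your sketch has no substitute for this step.

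There is also a smaller misstep in the ``if'' direction: Rule 3 cannot eliminate $do(\boldsymbol{x}_M)$ ``entirely.'' For mutable variables on which the prediction genuinely depends (e.g.\ mutable parents of $Y$), $Y$ is adjacent to the intervened node and the dependence on its value cannot be deleted; those do's must instead be converted into conditioning (Rule 2, or the paper's direct rewriting of $do(\boldsymbol{x}_M^1)$ as conditioning on $\boldsymbol{x}_M^1$ using $Y\notin\mathbf{Pa}(\mathbf{X}_M^1)$), and only $do(\boldsymbol{x}_M^0)$ is removed by Rule 3 after $\mathbf{W}$ has been dropped by Rule 1. Consequently the degeneration target is $P(Y|\mathbf{W}_2)$, which still conditions on mutable variables, not a residual subset of $\mathbf{X}_S$; as written your reduction would not go through, though this part is fixable.
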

\begin{example}
    {To understand this equivalence, consider the DAG shown in Fig.~\ref{fig: example graphical condition} (a), where $Y \not \to \mathbf{W}$. We then have {\small $Y\ind_{G_{\overline{X_M}}}\! X_1,X_M|X_2$} and hence {\small $P(Y|X_1,X_2,do(x_M))=P(Y|X_2)$}. As a contrast, for the DAG shown in Fig.~\ref{fig: example graphical condition} (b), the collider $X_1$ causes {\small $Y \not \ind_{G_{\overline{X_M}}} X_M | X_1$} and prevents the removing of the ``do'' in {\small $P(Y|X_1,do(x_M))$}.}
\end{example}

{The graphical condition can be effectively tested via causal discovery, as guaranteed by the following proposition:}
\begin{proposition}[Testability of Thm.~\ref{thm:graph degenerate}]
\label{prop.thm3.2 justifiability}
{Under Asm.~\ref{eq:assum-dag}-\ref{assum:e-train}, we have that \textbf{i)} the $\mathbf{W}$ is identifiable; and \textbf{ii)} the condition $Y \not \to \mathbf{W}$ is testable from $\{\mathcal{D}_e\}_{e \in \mathcal{E}_{\text{tr}}}$.}
\end{proposition}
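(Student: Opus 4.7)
The plan is to reduce both claims to running a heterogeneous causal discovery procedure on the augmented system $(\mathbf{V}, E)$, then read off the specific graphical features we need — the set $\mathbf{W}$ and the presence or absence of $Y \to V$ for each $V \in \mathbf{W}$.

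First, I augment the vertex set with the environment indicator $E$. The factorization in Eq.~\eqref{eq: causal factorization}, together with Asm.~\ref{assum:e-train} and Asm.~\ref{asm:faithfulness}, implies that the augmented graph $G^+$ satisfies: (a) $E$ is a root vertex; (b) the children of $E$ in $G^+$ are exactly the vertices in $\mathbf{X}_M$ (since a mutable conditional factor forces a dependence between $E$ and $X_i$ that persists after conditioning on $\mathbf{Pa}(X_i)$, and faithfulness guarantees no such dependence for stable $X_i$); and (c) the induced subgraph on $\mathbf{V}$ equals $G$. Running a PC-type algorithm on the pooled sample (with $E$ treated as an observed discrete covariate) recovers the skeleton and all $v$-structures of $G^+$ under Asm.~\ref{eq:assum-dag}-\ref{asm:faithfulness}. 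The neighbors of $E$ in the recovered skeleton are exactly $\mathbf{X}_M$, which identifies $\mathbf{X}_M$.

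Second, I orient every edge of $G$ that has at least one endpoint in $\mathbf{X}_M$. Fix $X_i \in \mathbf{X}_M$ and a neighbor $V$ of $X_i$ with $V \notin \mathbf{X}_M$. Since $E$ is not adjacent to $V$, the triple $(E, X_i, V)$ is unshielded in $G^+$, so the standard $v$-structure test determines whether the edge is $X_i \to V$ (when $X_i$ lies in a separating set of $E$ and $V$) or $V \to X_i$ (otherwise). For two mutable neighbors $X_i, X_j \in \mathbf{X}_M$, the $(E, X_i, X_j)$ triple is shielded by the edge $E$-$X_j$, but the edge $X_i$-$X_j$ is then oriented by Meek-rule propagation once all border edges are fixed. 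Consequently, every edge between $Y$ and a vertex in $\mathbf{X}_M \cap \mathbf{Neig}(Y)$ is oriented, which identifies $\mathbf{X}_M^0 = \mathbf{X}_M \cap \mathbf{Ch}(Y)$; iterating along directed paths emanating from $\mathbf{X}_M^0$ identifies $\mathbf{De}(\mathbf{X}_M^0)$ and hence $\mathbf{W}$, proving (i).

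For (ii), testing $Y \not\to V$ for each $V \in \mathbf{W}$ reduces to checking whether the oriented edge $Y \to V$ appears in the recovered pattern. When $V \in \mathbf{X}_M$, the $v$-structure test on the unshielded triple $(E, V, Y)$ (valid because $Y \notin \mathbf{X}_M$) directly fixes the direction of any $Y$-$V$ edge. When $V \notin \mathbf{X}_M$, $V$ lies at the end of a directed chain starting in $\mathbf{X}_M^0$, whose first edge is oriented by the previous step, and Meek's rules combined with acyclicity propagate along the chain to orient any incident $Y$-$V$ edge: an unoriented $Y$-$V$ edge would either create a directed cycle with the chain or complete a previously tested unshielded triple. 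The main obstacle I foresee is this final propagation: rigorously showing that the $E$-induced $v$-structures together with Meek's rules suffice to orient every candidate $Y$-$V$ edge for all $V \in \mathbf{W}$ requires a case analysis on the shape of the chain from $\mathbf{X}_M^0$ to $V$ and on which neighbors of $Y$ lie on it. I expect this to be routine but technically loaded graph-theoretic bookkeeping.
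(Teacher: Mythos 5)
Your overall route---augmenting the system with the environment indicator, reading off $\mathbf{X}_M$ as the variables dependent on $E$, using the unshielded triples $(E,X_i,Y)$ to orient border edges and detect $\mathbf{X}_M^0$, then tracing descendants to get $\mathbf{W}$---is essentially the paper's construction (its Appendix discovery algorithms do exactly this). But there are two genuine gaps. First, for part \textbf{ii)} you reduce the test of $Y\not\to V$, $V\in\mathbf{W}$, to orienting every candidate $Y$--$V$ edge via $E$-induced $v$-structures plus Meek propagation, and you explicitly leave that propagation unproven (``routine but technically loaded bookkeeping''). That is precisely the step that needs an argument, and it is also unnecessary: by construction $\mathbf{W}\subseteq\mathbf{De}(\mathbf{X}_M^0)\subseteq\mathbf{De}(Y)$, so acyclicity already forces any edge between $Y$ and a vertex of $\mathbf{W}$ to be oriented $Y\to V$. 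Hence $Y\not\to\mathbf{W}$ holds if and only if $Y$ is not \emph{adjacent} to any vertex of $\mathbf{W}$ in the identified skeleton---a pure skeleton check, no orientation or Meek rules needed. This is how the paper closes the argument, and without either this observation or a completed propagation proof your part \textbf{ii)} is not established.

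Second, in part \textbf{i)} your identification of $\mathbf{De}(\mathbf{X}_M^0)$ leans on the claim that edges between two mutable variables ``are then oriented by Meek-rule propagation once all border edges are fixed.'' That is not justified: for a shielded triple $(E,X_i,X_j)$ with both $X_i,X_j\in\mathbf{X}_M$ the $E$-based collider test gives nothing, and Meek rules need not determine the orientation in general. The paper does not rely on Meek propagation here; its descendant-tracing algorithm uses additional conditional-independence tests exploiting directed paths from $E_{\mathrm{tr}}$ or from $Y$ (and, for mutable--mutable edges, an HSIC-based orientation criterion following CD-NOD). To make your version of \textbf{i)} rigorous you would either need to prove your Meek-propagation claim for the specific graphs at hand or adopt a mechanism-based orientation step of this kind.
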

\begin{remark}
    {To test $Y \not \to \mathbf{W}$, we first learn the skeleton of $G$, followed by detecting $\mathbf{X}_M^0$ and $\mathbf{W}$ with the heterogeneous causal discovery algorithm CD-NOD \cite{huang2020causal}. Then, we have $Y \not \to \mathbf{W}$ if and only if $Y$ is not adjacent to $\mathbf{W}$ because $\mathbf{W} \subseteq \mathbf{De}(Y)$ by definition. More details are left to Appx.~\ref{sec: appendix discovery}.}
\end{remark}

Thm.~\ref{thm:graph degenerate} only provides a partial characterization for $S$ to be optimal; it is still unclear whether the whole stable set is optimal in all cases. In the following, we give a negative answer with a counter-example, whose DAG of Fig.~\ref{fig: example graphical condition} (b) does not satisfy the graphical condition and $Y, X_M, X_{1}$ are binary variables. We have the following result:
\begin{claim}
\label{prop:counter}
There exists $P(Y)$ and $P(X_{1}|X_M, Y)$, such that $f_{S}(\boldsymbol{x}):=\mathbb{E}[Y|x_{1},do(x_M)]$ has a larger worst-case risk than $f_{\emptyset}(\boldsymbol{x}) := \mathbb{E}[Y|do(x_M)]$: 
\begin{align*}
    \max_{e\in \mathcal{E}} \mathbb{E}_{P^e}[(Y-f_S(\boldsymbol{x}))^2] > \max_{e \in \mathcal{E}} \mathbb{E}_{P^e}[(Y-f_\emptyset(\boldsymbol{x}))^2].
\end{align*}
\end{claim}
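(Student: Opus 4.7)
I would exhibit an explicit counter-example using the binary DAG of Fig.~\ref{fig: example graphical condition}(b). Parametrise by the invariants $\pi:=P(Y=1)$ and $\beta_{y,x_M}:=P(X_1=1\mid Y=y,X_M=x_M)$, and by the environment-dependent $\alpha_y^e:=P^e(X_M=1\mid Y=y)$, which is free to vary across $\mathcal{E}$ because $X_M\in\mathbf{X}_M$. Since $Y$ is a root in this DAG, $do(x_M)$ does not alter the marginal of $Y$, so $f_\emptyset\equiv\pi$ and $\mathbb{E}^e[(Y-f_\emptyset)^2]=\pi(1-\pi)$ for every $e\in\mathcal{E}$. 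Applying Bayes' rule to the truncated factorisation yields
\begin{align*}
f_S(x_1,x_M)=\frac{\pi\,\beta_{1,x_M}^{x_1}(1-\beta_{1,x_M})^{1-x_1}}{\pi\,\beta_{1,x_M}^{x_1}(1-\beta_{1,x_M})^{1-x_1}+(1-\pi)\,\beta_{0,x_M}^{x_1}(1-\beta_{0,x_M})^{1-x_1}},
\end{align*}
that is, the posterior of $Y$ computed with the \emph{unconditional} prior $\pi$; the adversary will exploit precisely this prior mismatch.

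\textbf{Choice of parameters and adversary.} I would take $\pi=\tfrac{9}{10}$, which shrinks the benchmark $\pi(1-\pi)=\tfrac{9}{100}$. To set a ``trap'' at $X_M=0$ I put $\beta_{0,0}=\beta_{1,0}=\tfrac12$, making $X_1$ uninformative there and pinning $f_S(\cdot,0)\equiv\tfrac{9}{10}$. To invert the likelihood at $X_M=1$ I put $\beta_{0,1}=\tfrac{9}{10}$, $\beta_{1,1}=\tfrac{1}{10}$, giving $f_S(1,1)=\tfrac12$ and $f_S(0,1)=\tfrac{81}{82}$. For the adversary I would take $e^{*}\in\mathcal{E}$ with $\alpha_0^{e^{*}}=0$, $\alpha_1^{e^{*}}=1$ (or arbitrarily close stochastic approximations, which suffice by continuity of the MSE in $\alpha^e$), forcing $X_M=Y$ almost surely under $e^{*}$.

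\textbf{Verification and main obstacle.} Under $e^{*}$ the joint is supported on $\{(Y,X_M)=(0,0),(1,1)\}$ and $X_1$ is drawn from the invariant $\beta$; summing probability $\times$ squared loss over the four atoms gives
\begin{align*}
\mathbb{E}^{e^{*}}[(Y-f_S)^2]=\tfrac{1}{10}\cdot\bigl(\tfrac{9}{10}\bigr)^{2}+\tfrac{9}{100}\cdot\bigl(\tfrac12\bigr)^{2}+\tfrac{81}{100}\cdot\bigl(\tfrac{1}{82}\bigr)^{2}\approx 0.104,
\end{align*}
which strictly exceeds $\tfrac{9}{100}=\max_{e\in\mathcal{E}}\mathbb{E}^e[(Y-f_\emptyset)^2]$, establishing the claim. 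The main obstacle is not the arithmetic but discovering the construction: one must simultaneously (i) push $\pi$ far from $\tfrac12$ to shrink the constant-predictor risk, (ii) carve out an uninformative arm (here $X_M=0$) on which $f_S$ is forced to blurt out its miscalibrated prior $\pi$, and (iii) let the adversary pile the rare class $Y=0$ onto that arm via $X_M=Y$, producing a deterministic squared loss of $\pi^{2}$ on every $Y=0$ sample. Once these three ingredients are lined up, the rest is a three-line computation.
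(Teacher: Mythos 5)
Your construction is correct and follows essentially the same route as the paper's own counter-example: same binary DAG, $f_S$ computed from the truncated factorization with the invariant prior, an adversarial environment that makes $X_M$ a (nearly) deterministic function of $Y$, and comparison against the constant predictor's invariant risk $\pi(1-\pi)$. Your explicit parameter choice and three-atom computation check out (risk $\approx 0.104 > 0.09$), and the remark that near-deterministic $P^e(X_M\mid Y)$ suffices by continuity covers the only delicate point.
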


\begin{remark}
This result seems surprising as intuitively the whole stable set should be optimal since it fully exploits the stable information, according to existing minimax results in \citep{peters2016causal, rojas2018invariant}. To explain, one should note that these results are built on conditional distributions, {where one can construct a $P^e$ to make any other subset have a larger quadratic loss than $S$.} However, when the interventional distribution can not degenerate, such construction is generally not feasible. Please refer to Appx.~\ref{sec: counter example appendix new} for details. 
\end{remark}

Under general cases where the whole stable set may not be optimal, it remains unknown that \emph{which subset of $S$ is the optimal one to transfer}. To answer this question, we propose to estimate the worst-case risk $\mathcal{R}_{S^\prime}\!:=\!\max_{e \in \mathcal{E}} \mathbb{E}_{P^e}[\left(Y \!-\! f_{S^\prime}(\boldsymbol{x})\right)^2]$ for each subset $S^\prime \subseteq S$; then the $S^*$ corresponds to the subset with minimal $\mathcal{R}$.

For this purpose, we consider a distribution family $\{P_h\}_{h}$, where $h$ maps from $\mathcal{P}a(\mathcal{X}_M)$ to $\mathcal{X}_M$ and $P_h\! :=\! P(Y,\mathbf{X}_S| do(\mathbf{X}_M\!=\!h(\boldsymbol{pa}(\boldsymbol{x}_M)))$. This distribution set keeps the invariant mechanisms of $Y$ and $\mathbf{X}_S$ unchanged while allowing the $\mathbf{X}_M$ given their parents to vary arbitrarily, which can well mimic the distributional shifts among deployed environments in $\mathcal{E}$. Particularly, we show that the worst-case risk $\mathcal{R}_{S^\prime}$ can be attained at some $P_h$, where $h$ is a Borel measurable function. Formally, denote the Borel function set as $\mathcal{B}$, we have: 

\begin{theorem}[Worst-case risk identification]
\label{thm:min-max}
Let $\mathcal{L}_{S^\prime}:=\max_{h \in  \mathcal{B}}\mathbb{E}_{P_h}[(Y \!-\! f_{S^\prime}(\boldsymbol{x}))^2]$ be the maximal population loss over $\{P_h\}_{h \in \mathcal{B}}$ for subset $S^\prime$. Then, we have $\mathcal{L}_{S^\prime}=\mathcal{R}_{S^\prime}$ for each $S^\prime \subseteq S$. Therefore, we have $S^*= \mathop{\mathrm{argmin}}_{S^\prime \subseteq S} \mathcal{L}_{S^\prime}$.
\end{theorem}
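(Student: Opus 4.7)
The plan is to establish the two inequalities $\mathcal{L}_{S'}\le \mathcal{R}_{S'}$ and $\mathcal{R}_{S'}\le \mathcal{L}_{S'}$ separately; once equality is in hand, the characterization $S^{*}=\arg\min_{S'\subseteq S}\mathcal{L}_{S'}$ is immediate, because the map $S'\mapsto f_{S'}$ identifies $\mathcal{F}_{S}$ with $2^{S}$, so minimizing $\mathcal{R}_{S'}$ over subsets is exactly the minimax program of Eq.~(\ref{eq.robust}).

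For the forward inequality I would argue that $\{P_{h}\}_{h\in\mathcal{B}}$ is contained in (or dense in) the deployment family $\{P^{e}\}_{e\in\mathcal{E}}$. Under the sparse mechanism shift factorization of Eq.~(\ref{eq: causal factorization}), every deployment environment is obtained by replacing the mutable factors $P^{e}(X_{i}\mid\mathbf{Pa}(X_{i}))$, $i\in M$, with arbitrary Markov kernels while keeping the invariant factors fixed; choosing the Dirac kernel $\delta_{h_{i}(\mathbf{Pa}(X_{i}))}$ for a Borel $h_{i}$ recovers exactly $P_{h}=P(Y,\mathbf{X}_{S}\mid do(\mathbf{X}_{M}=h(\mathbf{Pa}(\mathbf{X}_{M}))))$. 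Taking suprema then yields $\mathcal{L}_{S'}\le \mathcal{R}_{S'}$.

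The substantive part is the reverse inequality, for which I would use an extremal-point argument: for any fixed $e$ and any fixed choice of the remaining mutable kernels, the loss $\mathbb{E}_{P^{e}}[(Y-f_{S'}(\boldsymbol{x}))^{2}]$ is affine in a single mutable kernel $K^{e}_{i}(\cdot\mid\mathbf{Pa}(X_{i}))$, and hence is maximized at an extreme point of the space of Markov kernels, namely a Dirac mass at some value $h_{i}(\mathbf{Pa}(X_{i}))\in\mathcal{X}_{i}$. Concretely, fix a reverse topological ordering $X_{i_{1}},\ldots,X_{i_{k}}$ of $\mathbf{X}_{M}$ in $G$ and proceed inductively: at round $j$, after first marginalizing out the invariant descendants of $X_{i_{j}}$ against their invariant kernels, the remaining integral against $K^{e}_{i_{j}}(dx_{i_{j}}\mid\mathbf{pa}(x_{i_{j}}))$ is bounded by the pointwise supremum
\[
\int (y-f_{S'}(\boldsymbol{x}))^{2}\,K^{e}_{i_{j}}(dx_{i_{j}}\mid \mathbf{pa}(x_{i_{j}}))\le \sup_{x_{i_{j}}\in\mathcal{X}_{i_{j}}}(y-f_{S'}(\boldsymbol{x}))^{2},
\]
of a bounded Borel integrand. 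By the continuity and boundedness of the structural functions in Asm.~\ref{eq:assum-dag}, a measurable maximum theorem (Berge, or Brown--Purves/Kuratowski--Ryll-Nardzewski applied to an $\varepsilon$-argmax) supplies a Borel selector $h^{\varepsilon}_{i_{j}}$ achieving this supremum within $\varepsilon/k$. After $k$ rounds one assembles $h^{\varepsilon}=(h^{\varepsilon}_{i_{j}})_{j=1}^{k}\in\mathcal{B}$ satisfying $\mathbb{E}_{P^{e}}[(Y-f_{S'}(\boldsymbol{x}))^{2}]\le \mathbb{E}_{P_{h^{\varepsilon}}}[(Y-f_{S'}(\boldsymbol{x}))^{2}]+\varepsilon\le \mathcal{L}_{S'}+\varepsilon$; letting $\varepsilon\downarrow 0$ and then taking $\sup_{e\in\mathcal{E}}$ gives $\mathcal{R}_{S'}\le \mathcal{L}_{S'}$.

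The main obstacle will be this nested measurable-selection step, since each intermediate selector $h^{\varepsilon}_{i_{j}}$ is a function of parents that may themselves be mutable and not yet supped out, so one must verify that the envelope $\sup_{x_{i_{j}}}(\cdot)$ and its $\varepsilon$-argmax remain jointly Borel in the remaining arguments after every round. The reverse topological order guarantees that $X_{i_{j}}$ does not appear as a parent of any $X_{i_{j+1}},\ldots,X_{i_{k}}$, so each substitution produces a well-defined bounded Borel objective in the remaining variables; together with the uniform boundedness of the loss inherited from Asm.~\ref{eq:assum-dag}, this closes the induction and controls the cumulative $\varepsilon$-error so that the passage to the limit is legitimate.
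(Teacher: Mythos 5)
Your proposal is correct and follows essentially the same route as the paper: the paper likewise shows $\mathcal{R}_{S^\prime}$ is attained when each mutable factor $P^e(X_i\mid\mathbf{Pa}(X_i))$ degenerates to a Dirac mass at a pointwise maximizer of the conditional loss, moving the max inside the integral and treating several mutable variables sequentially along the same (reverse) topological order, which yields both inequalities exactly as you outline. The only difference is bookkeeping: the paper invokes Berge's Maximum Theorem (using continuity and boundedness from Asm.~\ref{eq:assum-dag}) to obtain an upper semi-continuous, hence Borel, exact argmax selector $h^*$, whereas you use an $\varepsilon$-argmax measurable-selection argument, a harmless variant.
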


This result inspires the following optimization scheme over functions $h \in  \mathcal{B}$ to estimate $\mathcal{R}_{S^\prime}$: 
\begin{align*}
    \max_{h \in  \mathcal{B}} \mathcal{L}_{S^\prime}(h) := \mathbb{E}_{P_h}[(Y \!-\! f_{S^\prime}(\boldsymbol{x}))^2],
\end{align*}
as the optimality of which is assured to attain $\mathcal{R}_{S^\prime}$. To implement, we parameterize $h$ with a multilayer perceptron (MLP) $h_\theta$ and optimize over $\theta$, due to the ability of MLP to approximate any Borel function \cite{hornik1989multilayer}. To show the tractability of this optimization, we have the following identifiability result for $\mathcal{L}_{S^\prime}(h)$:

\begin{proposition}
	\label{prop.thm3.3 justifiability}
	Under Asm.~\ref{eq:assum-dag}-\ref{assum:e-train}, the  $P_h$, $f_{S^\prime}$, and hence $\mathcal{L}_{S^\prime}(h)$ are identifiable. 
\end{proposition}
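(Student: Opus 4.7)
The plan is to decompose the identifiability claim into three layers: (i) identifiability of the relevant graphical structure and the $S, M$ partition from $\{\mathcal{D}_e\}_{e \in \mathcal{E}_{\text{tr}}}$; (ii) identifiability of the invariant causal factors appearing in the truncated factorization; and (iii) identifiability of $P_h$, $f_{S^\prime}$, and $\mathcal{L}_{S^\prime}(h)$ as explicit functionals of those factors.

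First, under Asm.~\ref{eq:assum-dag}--\ref{asm:faithfulness}, a standard constraint-based procedure recovers the skeleton and the $v$-structures of $G$. Augmenting this with the heterogeneity signal provided by Asm.~\ref{assum:e-train}, the CD-NOD algorithm \cite{huang2020causal} further identifies which variables have mechanisms that vary across $\mathcal{E}_{\text{tr}}$, yielding the partition $\mathbf{X} = \mathbf{X}_S \cup \mathbf{X}_M$ and, crucially, the parent set $\mathbf{Pa}(X_i)$ for each $X_i \in \mathbf{X}_M$. Consistent heterogeneity combined with faithfulness guarantees that a variable lies in $S$ if and only if its conditional given its parents is constant across training environments, so the partition is unambiguous. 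Second, by the causal factorization in Eq.~\eqref{eq: causal factorization}, each factor $P(V_i|\mathbf{Pa}(V_i))$ for $i \in S \cup \{Y\}$ is shared across all $e \in \mathcal{E}_{\text{tr}}$, and is therefore identifiable from the pooled training data once the corresponding parent sets are known from step one.

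Third, by the definition of the $do$-operator and truncated factorization,
\begin{equation*}
P_h(Y,\mathbf{X}_S) = P(Y|\mathbf{Pa}(Y))\prod_{i\in S}P(X_i|\mathbf{Pa}(X_i))\Big|_{\mathbf{X}_M = h(\mathbf{Pa}(\mathbf{X}_M))},
\end{equation*}
which is a closed-form functional of quantities identified in step two. The same holds for $f_{S^\prime}(\boldsymbol{x}) = \mathbb{E}[Y|\boldsymbol{x}_{S^\prime},do(\boldsymbol{x}_M)]$, obtained by marginalizing $\mathbf{X}_{S\setminus S^\prime}$ out of $P(Y,\mathbf{X}_S|do(\boldsymbol{x}_M))$ and dividing by $P(\boldsymbol{x}_{S^\prime}|do(\boldsymbol{x}_M))$. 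Combining the two, $\mathcal{L}_{S^\prime}(h) = \mathbb{E}_{P_h}[(Y-f_{S^\prime}(\boldsymbol{x}))^2]$ is a deterministic functional of identifiable objects, hence identifiable.

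The main obstacle is making step one fully rigorous: in general, constraint-based discovery returns only the Markov equivalence class, so one must argue that the quantities needed downstream (the stable causal factors and the mutable-variable parent sets) are pinned down regardless of the residual orientation ambiguity. I would handle this by invoking the mechanism-change orientation rule of \cite{huang2020causal}: every edge incident to a mutable variable is oriented by the heterogeneity test, which fixes $\mathbf{Pa}(\mathbf{X}_M)$ exactly; any remaining undirected edges lie among stable variables and, by Markov equivalence and faithfulness, all their admissible orientations induce the same invariant joint $P(Y,\mathbf{X}_S)$ and therefore the same $P_h$ and $f_{S^\prime}$. This closes the argument without requiring recovery of the full DAG.
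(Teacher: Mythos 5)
Your overall strategy---reduce identifiability of $P_h$, $f_{S^\prime}$ and $\mathcal{L}_{S^\prime}(h)$ to (i) identifiability of the relevant causal structure from the heterogeneous training data and (ii) an explicit interventional formula in terms of the invariant causal factors---is the same as the paper's, and your route can be made to work. The difference lies in which structures each argument needs. The paper never invokes the full truncated factorization: it identifies only $\mathbf{X}_M$, the descendants $\mathbf{De}_{G_{\overline{\mathbf{X}_M}}}(\mathbf{X}_M)$, and $\mathbf{Pa}(X_i)$ for $X_i \in \mathbf{X}_M \cup \mathbf{De}(\mathbf{X}_M)$ (Appx.~\ref{sec: appendix discovery basic structures}), and then obtains $P_h$ and $P(Y,\mathbf{X}_{S^\prime}|do(\boldsymbol{x}_M))$ by keeping the non-descendants of $\mathbf{X}_M$ (whose joint law is invariant) at their observed values, setting $\mathbf{X}_M = h(\boldsymbol{pa}(\boldsymbol{x}_M))$ (resp.\ $\boldsymbol{x}_M$), and regenerating the descendants from their parents via the invariant mechanisms. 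This is calibrated precisely so that every object used is one the discovery algorithms provably return.

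Your version has a gap at step two: the claim that each factor $P(X_i|\mathbf{Pa}(X_i))$, $i \in S$, is identifiable ``once the parent sets are known from step one'' presupposes that step one delivers the parent sets of \emph{all} stable variables, which it does not---the heterogeneity-based orientation pins down edges incident to $\mathbf{X}_M$ (and, per the paper, to $\mathbf{De}(\mathbf{X}_M)$), but edges among the remaining stable variables may stay unoriented, so the individual stable factors are not identified. You flag this, but your patch is not yet a proof: the observational joint $P(Y,\mathbf{X}_S)$ is \emph{not} invariant across environments (mutable variables may be parents of $Y$ and of stable variables), and the assertion that every admissible orientation yields the same $P_h$ is left unargued. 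The assertion is true, and the missing argument is short: for every $e \in \mathcal{E}_{\mathrm{tr}}$, on the support of the denominator,
\begin{align*}
P(y,\boldsymbol{x}_S|do(\boldsymbol{x}_M)) \;=\; \frac{P^e(y,\boldsymbol{x}_S,\boldsymbol{x}_M)}{\prod_{i \in M} P^e(x_i|\boldsymbol{pa}(x_i))},
\end{align*}
and the right-hand side depends on the graph only through $\mathbf{Pa}(X_i)$ for $X_i \in \mathbf{X}_M$, which are identified; hence the residual orientation ambiguity among stable variables is irrelevant, and no individual stable factor need be identified. With that lemma added (or by switching to the paper's regeneration argument, which avoids needing it), your proof closes; as in the paper, evaluating the result at $\mathbf{X}_M = h(\boldsymbol{pa}(\boldsymbol{x}_M))$ beyond the training support additionally leans on the continuity and boundedness of the structural equations in Asm.~\ref{eq:assum-dag}.
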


\section{Searching $S^*$ among equivalence classes}
\label{sec.learn}

In this section, we provide Alg.~\ref{alg:identify-f-star} to identify $S^*$, which combines Thm.~\ref{thm:graph degenerate} and Thm.~\ref{thm:min-max}. Specifically, Alg.~\ref{alg:identify-f-star} returns $S$ as $S^*$ (line 3), if the graphical condition $Y \not\to \mathbf{W}$ is tested true. Otherwise, it searches over subsets to identify $S^*$ in terms of the estimated worst-case risk $\mathcal{L}$. For this purpose, a simple search method that is commonly adopted in the literature \cite{peters2016causal,rojas2018invariant,magliacane2018domain,subbaswamy2019preventing} is to \emph{exhaustively} search over all subsets of $S$. 

\begin{algorithm}[b!]
   \caption{Optimal subset $S^*$ selection.}
\label{alg:identify-f-star}

\begin{flushleft}
    \textbf{Input:} The training data $\{\mathcal{D}_{e}\}_{e\in\mathcal{E}_{\text{tr}}}$.
\end{flushleft}

\begin{algorithmic}[1]
   \STATE Learn the skeleton of $G$; detect $\mathbf{X}_M^0$, $\mathbf{W}$.
   \STATE \textbf{if} {$Y \not \to \mathbf{W}$} \textbf{then}
        \STATE \quad $S^* \leftarrow S$. \hfill{\# Thm.~\ref{thm:graph degenerate}} 
   \STATE \textbf{else}
    \STATE \quad  Recover $\mathrm{Pow}(S)/\!\sim_G$ with Alg.~\ref{alg: recover g-equivalence}.
    \STATE \quad  $\mathcal{L}_{\min}\leftarrow \infty$.
        \STATE \quad \textbf{for} {$[S^\prime]$ \textbf{in} $\mathrm{Pow}(S)/ \!\sim_G$} \textbf{do}
        \STATE \quad \quad \textbf{if} {$\mathcal{L}_{S^\prime} <\mathcal{L}_{\min}$} \textbf{then}
            \STATE \quad \quad \quad $\mathcal{L}_{\min} \leftarrow \mathcal{L}_{S^\prime}$, $S^* \leftarrow S^\prime$. \hfill{\# Thm.~\ref{thm:min-max}} 
            \STATE \quad \quad \textbf{end if}
        \STATE \quad \textbf{end for}
    \STATE \textbf{end if} 
    \STATE \textbf{return} $S^*$.
\end{algorithmic}
\end{algorithm}

In the following, we provide a new search strategy with better efficiency, by noticing that the exhaustive search can be redundant for subsets that have the same worst-case risk. Formally, we introduce the equivalence relation as follows:

\begin{definition}[Equivalence relation]
\label{def:equi}
     Consider a general graph $G$ over the target $Y$ and covariates $\mathbf{X}$. Let $\sim_G$ be an equivalence relation on all subsets of $\{1,...,\mathrm{dim}(\mathbf{X})\}$. We say $S^{\prime} \sim_G S^{\prime \prime}$ if $\exists \, S_{\cap} \subseteq S^\prime \cap S^{\prime \prime}$ such that:
    \begin{align}
    \label{eq:sim-G-def2}
    Y \ind_{G}  \mathbf{X}_{S_\cap^c} | \mathbf{X}_{S_\cap}, \text{ where } S_\cap^c:=(S^\prime\cup S^{\prime \prime}) \backslash S_\cap.
    \end{align}
    We call elements of the quotient space $\mathrm{Pow}(S)/\!\sim_G$ as equivalence classes. We use $[S^\prime] :=\{S^{\prime \prime}|S^{\prime \prime} \sim_G S^\prime\}$ to denote the equivalence class of $S^\prime$ and $N_G:=|\mathrm{Pow}(S)/\!\sim_G\!|$ to denote the number of equivalence classes. 
\end{definition}

\begin{remark}
The causal graph $G$ in Def.~\ref{def:equi} can be a Maximal Ancestral Graph (MAG) \cite{spirtes2000causation}, where bi-directed edges ($\leftrightarrow$) and undirected edges ($-$) exist due to unobserved confounders and selection variables, respectively. Correspondingly, ``$\ind_{G}$" in Eq.~\eqref{eq:sim-G-def2} refers to $m$-separation. 
\end{remark}

In our scenario, we are interested in the $\sim_G$ relation between stable subsets in the subgraph $G_S$ over $\mathbf{X}_S \cup Y$, which corresponds to conditioning on  ``$do(\boldsymbol{x}_M)$" in $G$. According to Def.~\ref{def:equi}, two stable subsets $S^\prime$ and $S^{\prime \prime}$ are equivalent if they share an intersection set $S_\cap$ that can \emph{d}-separate $S' \backslash S_\cap$ and $S^{\prime \prime} \backslash S_\cap$ from $Y$. As a result, we have {$P(Y|\mathbf{X}_{S^\prime},do(\boldsymbol{x}_M)) \!=\! P(Y|\mathbf{X}_{S_\cap},do(\boldsymbol{x}_M)) = P(Y|\mathbf{X}_{S^{\prime \prime}},do(\boldsymbol{x}_M))$} and hence $\mathcal{R}_{S^\prime}\!=\!\mathcal{R}_{S^{\prime \prime}}$. For example, in Fig.~\ref{fig: example graphical condition} (a), we have {\small {$\{X_2\} \sim_G \{X_1,X_2\}$}} as {\small{$\mathbf{X}_{S_\cap}=\{X_2\}$}} \emph{d}-separates {\small $\mathbf{X}_{S_\cap^c}=\{X_1,X_2\}\backslash \{X_2\}=\{X_1\}$} from $Y$ in $G_S$.

\begin{algorithm}[t!]
   \caption{Equivalence classes recovery.}
   \label{alg: recover g-equivalence}
\begin{algorithmic}[1]
       \FUNCTION{$\text{recover}(G)$}
        \IF{$\mathbf{Neig}(Y)=\emptyset$}
        \STATE \textbf{return} $\{\mathrm{Pow}(S)\}$.
        \ELSE
        \STATE $\mathrm{Pow}(S)/\!\sim_G \leftarrow \emptyset$.
        \FOR{$S^\prime \subseteq \mathbf{Neig}(Y)$}
        \STATE {\small Construct a $\mathrm{MAG}$ $M_G$ over $S \backslash \mathbf{Neig}(Y)$, with $S^\prime$ as the selection set, $\mathbf{Neig}(Y)\backslash S^\prime$ as the latent set.}
        \vspace{+0.05cm}
        \STATE {\small $\mathrm{Pow}(S\backslash \mathbf{Neig}(Y))/\!\sim_{M_G} \leftarrow \textbf{\textbf{recover}}(M_G)$.}
        \vspace{+0.05cm}
        \STATE {\small Add $S^\prime$ to each subset in $\mathrm{Pow}(S\backslash \mathbf{Neig}(Y))/\!\sim_{M_G}$.}
        \STATE {\small $\mathrm{Pow}(S)/\!\sim_G\text{.append}(\mathrm{Pow}(S\backslash \mathbf{Neig}(Y))/\!\sim_{M_G})$.}
        \ENDFOR
        \STATE\textbf{return} $\mathrm{Pow}(S)/\!\sim_G$.
        \ENDIF
       \ENDFUNCTION
\end{algorithmic}
\begin{flushleft}
\textbf{Input:} The causal graph $G$.
\end{flushleft}
\begin{algorithmic}[1]
    \STATE Let $G_S$ the subgraph of $G$ over $\mathbf{X}_S\cup Y$.
    \STATE \textbf{return} \textbf{recover}$(G_S)$.
\end{algorithmic}
\end{algorithm}

With this $\sim_G$ equivalence, we only need to search equivalence classes, rather than all subsets. To enable this search, we provide Alg.~\ref{alg: recover g-equivalence} to recover the $\mathrm{Pow}(S)/\!\sim_G$ in a recursive manner. Specifically, given the input graph $G$, we first obtain the subgraph $G_S$ by removing $\mathbf{X}_M$ in $G$. Then we find $Y$'s neighbors. Since any two vertices in $\mathbf{Neig}(Y)$ cannot $d$-separate each other from $Y$, we go over each subset $S^\prime \subseteq \mathbf{Neig}(Y)$ to construct a MAG over vertices other than $\mathbf{Neig}(Y)$, with $S^\prime$ as the selection set and $\mathbf{Neig}(Y) \backslash S^\prime$ as the latent set. Then it is left to recover equivalence classes in each MAG, {and include them to $\mathrm{Pow}(S)/\!\sim_G$} after appending the selection set $S^\prime$ (line 9,10). We recursively repeat the above procedure until $\mathbf{Neig}(Y)$ is empty, which indicates all subsets are equivalent since all of them are $d$-separated from $Y$. To illustrate, consider the following Exam.~\ref{exam:g-recover}.

\begin{figure}[t!]
    \centering
    \includegraphics[width=\columnwidth]{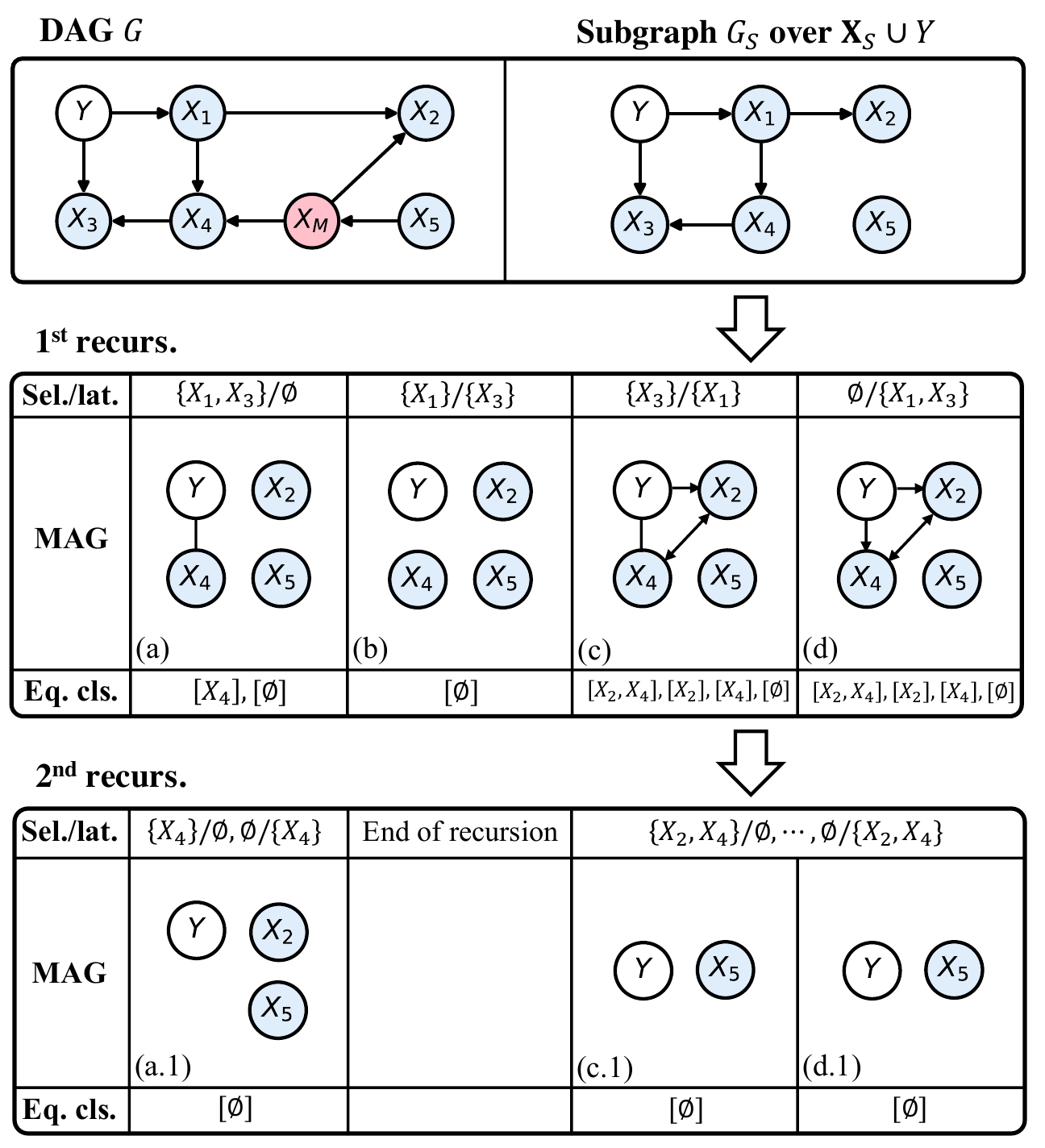}
    \caption{An example to illustrate Alg.\ref{alg: recover g-equivalence}. Stable and mutable variables are respectively marked blue and red.}
    \label{fig:alg_recover_example}
\end{figure}

\vspace{+0.1cm}
\begin{example}
\label{exam:g-recover}
    Consider the causal graph $G$ shown in Fig.~\ref{fig:alg_recover_example}. We first obtain the $G_S$ over $\mathbf{X}_S \cup Y$, where {\small $\mathbf{Neig}(Y)=\{X_1,X_3\}$}. We then take each subset {\small $S^\prime \subseteq \{X_1,X_3\}$} as the selection set and {\small $\{X_1,X_3\} \backslash S^\prime$} as the latent set to respectively construct MAGs (a-d) in the first recursion. For (a) with {\small $\mathbf{Neig}(Y)=\{X_4\}$}, we both obtain the MAG in (a.1) when taking {\small $\{X_4\}$} (resp. $\emptyset$) and $\emptyset$ (resp. {\small $\{X_4\}$}) as the selection set (resp. latent set). Since {\small $\mathbf{Neig}(Y)=\emptyset$} in (a.1), there is only one equivalence class {\small $[\emptyset]:=\mathrm{Pow}(\{X_2,X_5\})$}. Following line 9 in Alg.~\ref{alg: recover g-equivalence}, we append {\small $X_4$} and $\emptyset$ to each subset in equivalence classes of (a.1) to obtain the equivalence classes of (a): {\small $[X_4]$} and $[\emptyset]$. Similarly, after appending the selection set {\small $S'=\{X_1,X_3\}$}, we include {\small $[X_1,X_3,X_4]$} and {\small $[X_1,X_3]$} to {\small $\mathrm{Pow}(S)/\!\sim_G$}. We similarly apply this procedure to (b),(c),(d), which respectively contribute equivalence classes {\small $\{[X_1]\}$, $\{[X_3],[X_2,X_3],[X_3,X_4],[X_2,X_3,X_4]\}$}, and {\small $\{[\emptyset],[X_2],[X_4],[X_2,X_4]\}$} to {\small $\mathrm{Pow}(S)/\!\sim_G$}. 
\end{example}

In practice, we cannot access the true causal graph $G$ but can only recover the graph that is Markovian equivalent to $G$. The following proposition shows that Alg.~\ref{alg: recover g-equivalence} can still recover $\mathrm{Pow}(S)/\! \sim_G$ in this case. 

\begin{proposition}
\label{prop:recover}
    Under Asm.~\ref{eq:assum-dag}, \ref{asm:faithfulness}, for each input graph that is Markov equivalent to the ground-truth $G$, Alg.~\ref{alg: recover g-equivalence} can correctly recover the $\mathrm{Pow}(S)/\! \sim_G$.
\end{proposition}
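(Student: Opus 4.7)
The plan is to prove Alg.~\ref{alg: recover g-equivalence} correct in two stages: first reduce to the case where the input is the true DAG $G$ itself by appealing to Markov-equivalence invariance, and then induct on $|\mathbf{X}_S|$ using a structural description of equivalence classes in terms of $\mathbf{Neig}(Y)$.

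For the reduction, I would first observe that Def.~\ref{def:equi} is phrased purely via $d$-separation statements $Y \ind_G \cdot \mid \cdot$, which by definition of Markov equivalence coincide on every element of the equivalence class of $G$. Moreover, the skeleton of $G$, and in particular $\mathbf{Neig}(Y)$, is a Markov-equivalence invariant. Thus every quantity consulted by the algorithm---the neighborhood $\mathbf{Neig}(Y)$, the subgraph $G_S$, and the induced MAGs built along the recursion---is the same across the class, so it suffices to prove correctness when the input is $G$. Next I would establish the key structural lemma: whenever $S', S'' \subseteq S$ satisfy $S' \sim_G S''$ with witness $S_\cap$, one has
\[
\mathbf{Neig}(Y) \cap S' \;=\; \mathbf{Neig}(Y) \cap S'' \;=\; \mathbf{Neig}(Y) \cap S_\cap.
\]
This is because any $V \in \mathbf{Neig}(Y)$ is adjacent to $Y$ in $G_S$ and therefore cannot be $d$-separated from $Y$ by any conditioning set, so no neighbor of $Y$ can lie in $(S' \cup S'') \setminus S_\cap$. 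Consequently, each equivalence class is uniquely tagged by its intersection with $\mathbf{Neig}(Y)$, which is exactly the selection set $S'$ enumerated in the outer loop of Alg.~\ref{alg: recover g-equivalence}.

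For the inductive step, I would invoke the standard MAG construction to argue that conditioning on the fixed selection set $S'$ and marginalizing the latent set $L := \mathbf{Neig}(Y) \setminus S'$ produces a MAG $M_G$ on $S \setminus \mathbf{Neig}(Y)$ whose $m$-separations involving $Y$ correspond exactly to the $d$-separations in $G_S$ with $S'$ in the conditioning set and $L$ marginalized. From this, for any $T, T' \subseteq S \setminus \mathbf{Neig}(Y)$, one gets $T \sim_{M_G} T'$ iff $S' \cup T \sim_G S' \cup T'$, so appending $S'$ to each class of $\mathrm{Pow}(S \setminus \mathbf{Neig}(Y)) /\!\sim_{M_G}$ yields precisely the classes of $\mathrm{Pow}(S)/\!\sim_G$ whose selection portion equals $S'$. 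The base case $\mathbf{Neig}(Y)=\emptyset$ is immediate: then $Y$ is isolated in the current graph, $S_\cap=\emptyset$ witnesses $S' \sim_G S''$ for all $S',S''$, and Alg.~\ref{alg: recover g-equivalence} returns the single class $\mathrm{Pow}(S)$ as required. Because each recursive call strictly reduces the vertex count, induction closes the proof.

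The hard part I expect is verifying the MAG reduction in the inductive step: one must check that treating $S'$ as selection variables and $\mathbf{Neig}(Y)\setminus S'$ as latents faithfully preserves exactly the $Y$-involving conditional independences used in Def.~\ref{def:equi}, and that the inductive hypothesis is phrased over general MAGs rather than DAGs, so that the bi-directed and undirected edges introduced at deeper recursion levels are handled uniformly via $m$-separation (as licensed by the remark following Def.~\ref{def:equi}). Under Asm.~\ref{eq:assum-dag} and Asm.~\ref{asm:faithfulness}, the independences in the data and the $m$-separations in the constructed MAG agree, so the algebraic correspondence above transfers cleanly to the empirical side.
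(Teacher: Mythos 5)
Your proposal is correct and follows essentially the same route as the paper's proof: invariance of the $\sim_G$ classes under Markov equivalence, the observation that neighbors of $Y$ cannot be $d$-separated from $Y$ (so each class is tagged by its intersection with $\mathbf{Neig}(Y)$, giving the partition over selection sets $S'$), the MAG correspondence $\mathbf{A} \ind_{M_G} \mathbf{B} \mid \mathbf{Z} \Leftrightarrow \mathbf{A} \ind_{G} \mathbf{B} \mid \mathbf{Z} \cup \mathbf{C}$ to transfer $\sim_{M_G}$ to $\sim_G$ after appending $S'$, and an induction closing at the base case $\mathbf{Neig}(Y)=\emptyset$. The only (immaterial) difference is the induction variable: you induct on the vertex count, while the paper inducts on the neighbor-depth $l_G$ of $Y$.
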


{Besides, we in Appx.~\ref{appx:complexity alg recover} show that the complexity of Alg.~\ref{alg: recover g-equivalence} is $O(N_G)$, \emph{i.e.}, same as the complexity of searching $N_G$ equivalence classes, which is discussed as follows.}

\noindent\textbf{Searching complexity.} We show that compared to the exponential cost $O(2^{d_S})$ of exhaustive search, our search strategy enjoys a polynomial cost $\mathrm{P}(d_S)$ when $G_S$ is mainly composited of chain vertices. Here, a chain vertex is a vertex of degree $\leq 2$, and a chain is a sequence of connected chain vertices. Specifically, we have the following result:
\begin{proposition}[Complexity (informal)]
    \label{prop:complexity(informal)}
    Let $d_{\leq 2}$ and $d_{>2}\!:=\!d_S\!-\!d_{\leq 2}$ respectively denote the number of chain vertices and non-chain vertices. When the chain vertices are ``distributed intensively", $N_G=\mathrm{P}(d_S)$ if and only if $d_{>2}=O(\log(d_S))$.
\end{proposition}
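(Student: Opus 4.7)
The plan is to analyze the recursion implicit in Alg.~\ref{alg: recover g-equivalence} and bound $N_G$ by the structural complexity of $G_S$. First I would make ``distributed intensively'' precise as the requirement that chain vertices appear in a bounded number of maximal chains, each attached to the non-chain skeleton at its endpoints, so that chains look like paths hanging off a ``core'' formed by the non-chain vertices. Under this assumption, I would show that a maximal chain of length $k$ attached to $Y$ through its core endpoint contributes only $O(k)$ equivalence classes: because every chain vertex has degree at most $2$, once a subset $S^\prime$ contains some chain vertex $X_i$, the vertices farther along the chain are $d$-separated from $Y$ by $X_i$, so the equivalence class of $S^\prime$ along the chain is determined by a single ``cut point'' together with the choices made on the non-chain core.

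Next I would bound the contribution of non-chain vertices by unrolling the recursion of Alg.~\ref{alg: recover g-equivalence}. At each call, the algorithm enumerates subsets $S^\prime\subseteq \mathbf{Neig}(Y)$, and under the intensive-distribution assumption each recursive step peels off at most a constant number of non-chain vertices into $\mathbf{Neig}(Y)$ (chain vertices supply at most two neighbors apiece and can be processed in blocks). The total branching over the recursion is therefore at most $c^{d_{>2}}$ for some absolute constant $c\ge 2$. Multiplying by the per-chain factor, summed over the bounded number of chains, gives the upper bound $N_G \le c^{d_{>2}}\cdot \mathrm{P}(d_S)$. A matching lower bound would come from exhibiting, along the non-chain backbone, $d_{>2}$ independent inclusion/exclusion choices whose resulting subsets are pairwise $d$-inequivalent, yielding $N_G \ge 2^{\Omega(d_{>2})}$. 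Together these bounds give $N_G = \mathrm{P}(d_S)$ iff $c^{d_{>2}} = \mathrm{P}(d_S)$, i.e.\ iff $d_{>2}=O(\log d_S)$.

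The step I expect to be the main obstacle is controlling how the chain/non-chain decomposition is preserved by the MAG-construction step inside Alg.~\ref{alg: recover g-equivalence}. Each recursive call replaces the working graph by a MAG over $S\setminus \mathbf{Neig}(Y)$ with a chosen selection/latent set, and marginalization can in principle introduce bidirected edges that turn a previously low-degree vertex into a high-degree one, inflating $d_{>2}$ inside the recursion and destroying the linear per-chain count. I would need a structural lemma showing that, under the intensive-distribution hypothesis, marginalization only adds bidirected edges at the endpoints of chains, so the chain interiors stay chain-like and the inductive hypothesis propagates. Once this stability lemma is in place, the upper- and lower-bound counts above combine to give the claimed if-and-only-if.
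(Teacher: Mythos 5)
Your high-level plan — sandwich $N_G$ between a polynomial factor coming from the chains and an exponential factor $2^{\Theta(d_{>2})}$ coming from the non-chain vertices — is the same shape as the paper's argument (which proves $F_G \le N_G \le 2^{d_{>2}}F_G$ for a chain-defined quantity $F_G$, plus a separate lemma that $d_{>2}=\omega(\log d_S)$ forces $N_G=\mathrm{NP}(d_S)$). But both load-bearing steps of your sketch have genuine gaps. The more serious one is the lower bound. You assert that one can exhibit $d_{>2}$ ``independent inclusion/exclusion choices'' along the non-chain backbone whose subsets are pairwise $\sim_G$-inequivalent. Implemented literally, this fails: take a caterpillar $Y\to X_1\to X_2\to\cdots\to X_n$ with a leaf $L_i$ attached to each spine vertex $X_i$. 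Every $X_i$ is a non-chain vertex, yet $\{X_1\}\sim_G\{X_1,X_2\}$ (with $S_\cap=\{X_1\}$, since $Y\ind_G X_2\mid X_1$), and more generally toggling spine vertices collapses into few classes; the exponentially many inequivalent subsets are indexed by the \emph{leaves}, not by the non-chain vertices. Relating the two is exactly where the paper does most of its work: it melts degree-$2$ vertices via a MAG-monotonicity (``melting'') lemma, invokes a maximum-leaf-spanning-tree lemma to get $\Theta(d_{>2})$ leaves, and then runs a delicate inducing-path construction on trees (with a case analysis over colliders/non-colliders and selection/latent assignments) to certify that $\Theta(d_{>2})$ vertices stay adjacent to $Y$ in a suitable MAG, giving $N_G=\omega(c^{d_{>2}})$ and hence $\mathrm{NP}(d_S)$ when $d_{>2}=\omega(\log d_S)$. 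None of these ingredients appear in your proposal, and without them the ``only if'' direction is unsupported.

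On the upper bound, you correctly identify the obstacle — marginalization in Alg.~\ref{alg: recover g-equivalence} can create bidirected edges and raise degrees, so the chain/non-chain decomposition need not survive the recursion — but you leave it as a hoped-for ``stability lemma.'' The paper does not prove such a lemma; it avoids the issue by never marginalizing non-chain vertices at all: non-chain neighbors of $Y$ are \emph{merged into} $Y$, and a merging lemma ($N_{\tilde G}+1\le N_G\le 2N_{\tilde G}$, applied at most $d_{>2}$ times) converts this into the factor $2^{d_{>2}}$, after which only disjoint chains remain around $Y$ and their contribution is computed exactly (a chain of length $l$ gives $l+1$ classes, a two-headed chain $\Theta(l^2)$), yielding the recursive quantity $F_G$ with $F_G\le N_G\le 2^{d_{>2}}F_G$; monotonicity of $N_G$ under edge deletion is also needed along the way. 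Two further small inaccuracies in your sketch: the claim that conditioning on a chain vertex $X_i$ $d$-separates the farther chain vertices from $Y$ is false when $X_i$ is a collider (the per-chain count still works out, but only through the selection/latent case split), and the assertion that each recursive call touches only a constant number of non-chain vertices is unjustified since $\mathbf{Neig}(Y)$ may contain many of them — the $2^{d_{>2}}$ budget has to be accounted globally, as the merging argument does.
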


Here, ``distributed intensively" means that chain vertices compose only a few chains. Roughly speaking, this is because when the graph is composed of multiple chains that do not intersect each other, $N_G$ is determined by the product of multiple chains' lengths. As a result, the $N_G$ tends to be smaller when the number of chains is small. Formal and more general results are left to Appx.~\ref{sec: appendix complexity}.

\section{Experiment}

\begin{figure*}[t!]
    \centering
    \includegraphics[width=.9\textwidth]{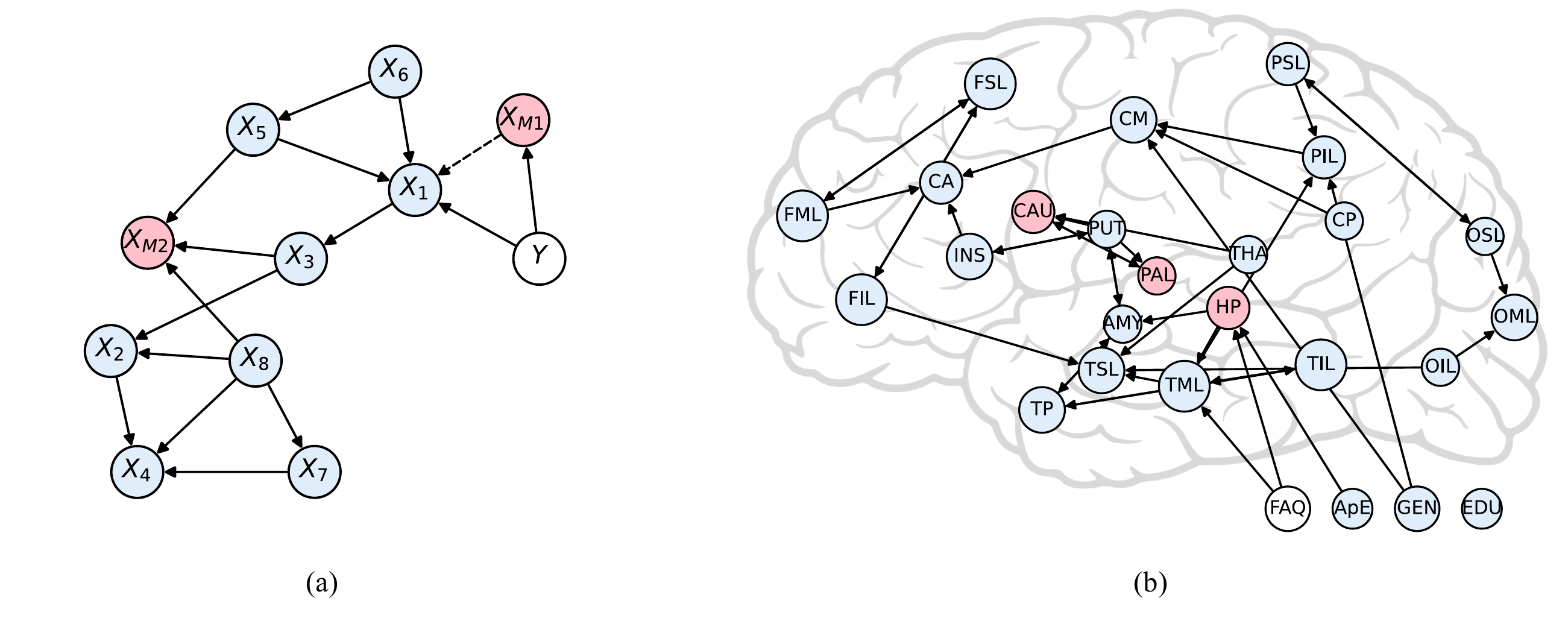}
    \caption{(a) The causal graph for synthetic data generation. Stable and mutable variables are respectively marked blue and red. The dashed edge $X_{M_1} \dashrightarrow X_1$ does not exist (resp. exist) in setting-1 (resp. setting-2). (b) The learned causal graph on ADNI. The target (FAQ) and biomarkers (ApE, GEN, EDU) are placed in the bottom right. Brain regions are placed at their positions in the brain.}
    \label{fig:simg3_ad_causal_graphs}
\end{figure*}

\begin{table*}[t!]
\caption{Evaluation on synthetic and ADNI datasets. The first column notes the methods we compare. The second and third columns respectively represent the maximal MSE and standard deviation of MSE over deployment environments. The best results are \textbf{boldfaced}.}
    \label{tab: numerical value, sythetic and ad}
    \centering
    \scalebox{0.9}{
    \begin{tabular}{c|c|c|c|c|c|c}
    \hline
    \multirow{2}{*}{Method} &  \multicolumn{3}{c|}{max. MSE ($\downarrow$)} & \multicolumn{3}{c}{std. MSE ($\downarrow$)}\\
    \cline{2-7} 
     & Syn1 & Syn2 & ADNI & Syn1 & Syn2 & ADNI \\
    \hline
    Vanilla & $1.336_{\pm 0.4}$ & $1.861_{\pm 0.4}$ & $1.399_{\pm 0.1}$ & $0.240_{\pm 0.2}$ & $0.481_{\pm 0.1}$ & $0.299_{\pm 0.0}$ \\ 
    ICP \cite{peters2016causal} & $1.855_{\pm 0.7}$ & $2.331_{\pm 0.2}$ & $1.176_{\pm 0.0}$ & $0.130_{\pm 0.1}$ & $0.230_{\pm 0.0}$ & $0.155_{\pm 0.0}$  \\ 
    IC \cite{rojas2018invariant}  & $1.211_{\pm 0.4}$ & $1.254_{\pm 0.1}$ & $1.165_{\pm 0.2}$ &  $0.176_{\pm 0.2}$ & $0.194_{\pm 0.1}$ & $0.198_{\pm 0.1}$\\ 
    DRO \cite{sinha2018certifiable}  & $1.364_{\pm 0.5}$ & $1.495_{\pm 0.1}$ & $1.181_{\pm 0.0}$ & $0.250_{\pm 0.2}$ & $0.326_{\pm 0.0}$ & $0.145_{\pm 0.0}$  \\ 
    Surgery \cite{subbaswamy2019preventing}  & ${0.926_{\pm 0.0}}$ & $1.101_{\pm 0.1}$ & $1.069_{\pm 0.1}$ & ${0.028_{\pm 0.0}}$ & $0.057_{\pm 0.0}$ & $0.129_{\pm 0.0}$  \\ 
    \hline
    IRM \cite{arjovsky2019invariant}  & $1.106_{\pm 0.2}$ & $1.246_{\pm 0.1}$ & $1.223_{\pm 0.0}$ & $0.127_{\pm 0.1}$ & $0.164_{\pm 0.1}$  & $0.177_{\pm 0.0}$ \\ 
    HRM \cite{liu2021heterogeneous}  & $0.975_{\pm 0.0}$ & $1.494_{\pm 0.1}$ & $1.272_{\pm 0.1}$ & $0.046_{\pm 0.0}$ & $0.312_{\pm 0.1}$ & $0.194_{\pm 0.1}$  \\ 
    IB-IRM \cite{ahuja2021invariance}  & $1.076_{\pm 0.0}$ & ${1.079_{\pm 0.0}}$ & $1.222_{\pm 0.2}$ & $0.056_{\pm 0.0}$ & $0.040_{\pm 0.0}$ & $0.113_{\pm 0.1}$  \\ 
    AncReg \cite{rothenhausler2021anchor}  & $0.938_{\pm 0.0}$ & $1.377_{\pm 0.2}$ & $1.138_{\pm 0.1}$ & $0.033_{\pm 0.0}$ & $0.257_{\pm 0.1}$ & $0.159_{\pm 0.0}$ \\ 
    \hline
    Ours (Alg.~\ref{alg:identify-f-star})  & $\mathbf{0.926_{\pm 0.0}}$ & $\mathbf{1.079_{\pm 0.0}}$ & $\mathbf{0.890_{\pm 0.1}}$ & $\mathbf{0.028_{\pm 0.0}}$ & $\mathbf{0.034_{\pm 0.0}}$ & $\mathbf{0.038_{\pm 0.0}}$ \\ 
    \hline
    \end{tabular}}
\end{table*}

We evaluate our method on synthetic data and a real-world application, \emph{i.e.}, diagnosis of Alzheimer's disease\footnote{Code is available at \url{https://github.com/lmz123321/which_invariance}.}.

\noindent \textbf{Compared baselines.} \textbf{i)} \textbf{Vanilla} that uses $\mathbb{E}[Y|x]$ to predict $Y$; \textbf{ii)} \textbf{ICP} \citep{peters2016causal} that assumed and used the invariance of parental features $P(Y|\mathbf{Pa}(Y))$; \textbf{iii)} \textbf{IC} \citep{rojas2018invariant} that extended ICP to features beyond $\mathbf{Pa}(Y)$; \textbf{iv)} \textbf{DRO} \cite{sinha2018certifiable} that constrained the distance between training and deployed distributions and conducted optimization for robustness; \textbf{v)} \textbf{Surgery estimator} \cite{subbaswamy2019preventing} that used validation's loss to identify the optimal subset; \textbf{vi)} \textbf{IRM} \citep{arjovsky2019invariant} that learned an invariant representation to transfer; \textbf{vii)} \textbf{HRM} \citep{liu2021heterogeneous} that extended IRM to cases with unknown environmental indices, by exploring the heterogeneity in data via clustering; \textbf{viii)} \textbf{IB-IRM} \citep{ahuja2021invariance} that leveraged the information bottleneck to supplement the invariance principle in IRM; and \textbf{ix)} \textbf{Anchor regression} \citep{rothenhausler2021anchor} that interpolated between ordinary least square (LS) and causal minimax LS. 

\noindent\textbf{Evaluation metrics.} We use the maximal mean square error (max. MSE) and the standard deviation of MSE (std. MSE) over deployed environments to evaluate the robustness and stability of predictors, respectively.

\noindent \textbf{Implementation details.} We use two-layer nonlinear MLPs to implement the $f_{S^\prime}$ and $h_\theta$. Hyperparameter settings of our method and baselines are left in Appx.~\ref{sec. appendix implementation}.

\begin{table}[t!]
\caption{Comparison of computational cost on ADNI.}
    \label{tab:ad search cost}
    \centering
    \scalebox{0.9}{
    \begin{tabular}{c c c}
    \hline
    Method & Searching cost & Time \\
    \hline
    Exhaustive ($\mathrm{Pow}(S)$) & $2^{25}$ & about 6.4y\\
    Ours ($\mathrm{Pow}(S)/\!\sim_G$) & $25307$ & 42h\\
    \hline
    \end{tabular}}
\end{table}

\subsection{Synthetic data}

\noindent\textbf{Data generation.} We use the DAG in Fig.~\ref{fig:simg3_ad_causal_graphs} (a) and the structural equation {\small $V_i = \alpha_i^e g_i\left(\sum_{V_j \in \mathbf{Pa}(V_i)} \beta_{i,j} V_j\right) + \varepsilon_i$} to generate data, where $\alpha_i^e$ keeps constant, \emph{i.e.}, $\alpha_i^e \equiv \alpha_i$ for all $e$ if $V_i$ is a stable variable; or varies with $e$ if $V_i$ is a mutable variable. For each $i$, the function $g_i$ is randomly chosen from \{\emph{identity}, \emph{tanh}, \emph{sinc}, \emph{sigmoid}\}. Each linear parameter $\beta_{i,j}$ is randomly drawn from a uniformed distribution $\mathcal{U}([-2,-0.5]\cup [0.5,2])$ and the noise item $\varepsilon_i \sim \mathcal{N}(0,0.1)$. We generate 20 environments and $n_e=100$ samples in each environment. To remove the effect of randomness, we repeat 5 times: each time we randomly pick 10 environments for training and the others for deployment.

We consider two different settings, with the graphical condition holds (resp. not hold) in setting-1 (resp. setting-2). Specifically, according to the definition, $\mathbf{X}_M^0=\{X_{M_1}\}$. In setting-1, the dashed edge $X_{M_1} \dashrightarrow X_1$ does not exist, hence $\mathbf{W}$ is empty and the graphical condition $Y \not\to \mathbf{W}$ in Thm.~\ref{thm:graph degenerate} holds. In this regard, the whole stable set is expected to be optimal. In setting-2, the edge $X_{M_1} \to X_1$ exists, hence $\mathbf{W}:=\{X_1,X_2,X_3,X_4,X_{M_2}\}$ and $Y \to \mathbf{W}$ that violates the graphical condition. In this regard, the $S^\prime$ with minimal $\mathcal{L}_{S^\prime}$ is expected to be optimal.

\begin{table}[t!]
\caption{Std. over equivalent subsets on synthetic data.}
    \label{tab:verify equivalence}
    \centering
    \scalebox{0.9}{
    \begin{tabular}{c c}
    \hline
    Metric \quad \quad \quad \quad  & Value \\
    \hline
     Inter-class std. \quad \quad \quad \quad  & $1.000$ \\
     Intra-class std. \quad \quad \quad \quad  & $0.008$ \\
    \hline
    \end{tabular}}
\end{table}

\begin{figure*}[t!]
    \centering
    \includegraphics[width=\textwidth]{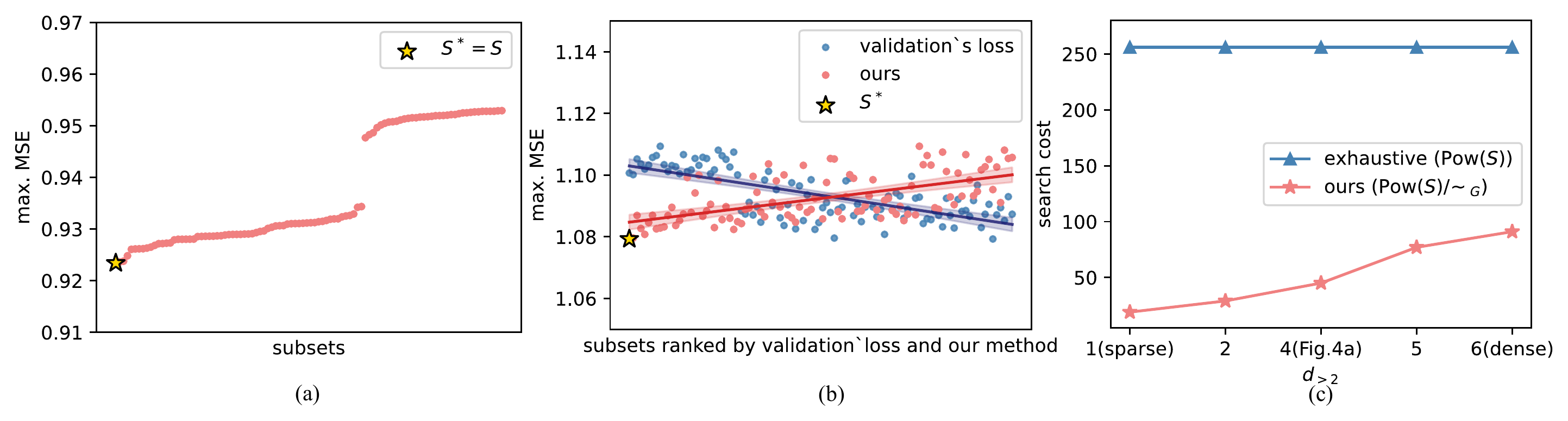}
    \caption{Results on synthetic data. (a) Setting-1: max. MSE of different subsets, where the whole stable set $S$ is optimal. (b) Setting-2: max. MSE of subsets ranked in the ascending order from left to right, respectively according to the estimated $\mathcal{L}$ of our method and the validation's loss adopted by \citep{subbaswamy2019preventing}. (c) Comparison of searching cost when $d_{> 2}$ increases.}
    \label{fig:simg3_scatters_search_cost}
\end{figure*}

\noindent\textbf{Results.} We report the max. MSE and std. MSE over deployment environments in Tab.~\ref{tab: numerical value, sythetic and ad}. As shown, our method outperforms the others in all settings, indicating better robustness (max. MSE) and stability (std. MSE). Besides, we report the max. MSE of different subsets in Fig.~\ref{fig:simg3_scatters_search_cost}. As shown, in setting-1, the whole stable set $S$ has the minimal max. MSE as expected; in setting-2, the subset with minimal $\mathcal{L}$ also has the minimal max. MSE over deployed environments. Besides, we can observe that the max. MSE shows an approximate increasing trend in subsets ranked by our method; as a contrast, the trend is decreasing in those ranked by the validation's loss adopted by \citep{subbaswamy2019preventing}. This result suggests that our method can consistently reflect the worst-case risk.

\textbf{Analysis of $\sim_G$ equivalence.} To show the effectiveness of Alg.~\ref{alg: recover g-equivalence} in recovering equivalence classes, we compute the \emph{intra-class standard deviation}, and compare it with \emph{inter-class std.}, in terms of max. MSE. For intra-class std., we first compute the standard deviation of max. MSE over all subsets in each equivalence class, then take the average over all equivalence classes. For inter-class std, we first compute the average max. MSE over all subsets in each equivalence class; then we compute the std. of the average max. MSE over equivalence classes. In Tab.~\ref{tab:verify equivalence}, we observe that the intra-class std. is much smaller than the inter-class std. This result suggests that our Alg.~\ref{alg: recover g-equivalence} to identify equivalent subsets is effective enough to guarantee the validity of searching over only equivalence classes rather than all subsets. 

\textbf{Searching complexity.} We first generate a sequence of causal graphs (Fig.~\ref{fig:simu1245_causal_graph}) with $d_{> 2}$ growing, by deleting/adding edges in the graph shown in Fig.~\ref{fig:simg3_ad_causal_graphs} (a) and then compute the searching cost for these graphs. We can see in Fig.~\ref{fig:simg3_scatters_search_cost} (c) that \textbf{i}) compared with the exhaustive search, our method can significantly save the searching cost in both sparse and dense graphs; \textbf{ii}) the searching cost over equivalence classes decreases when $d_{>2}$ decreases. 

\subsection{Alzheimer's disease diagnosis}

\noindent\textbf{Dataset \& preprocessing.} We consider the Alzheimer's Disease Neuroimaging Initiative \cite{petersen2010alzheimer} (ADNI) dataset, in which the imaging data is acquired from structural Magnetic Resonance Imaging (sMRI) scans. We apply the Dartel VBM \cite{ashburner2007fast} for preprocessing and the Statistical Parametric Mapping (SPM) for segmenting brain regions. Then, we implement the Automatic Anatomical Labeling (AAL) atlas \cite{tzourio2002automated} and region indices provided by \citep{young2018uncovering} to partition the whole brain into 22 regions (Tab.~\ref{tab: aal indices}). In addition to brain region volumes, we also include demographics (age, gender (GEN)) and genetic information (the number of ApoE-4 alleles (ApE)). With these covariates, we predict the Functional Activities Questionnaire (FAQ) score \cite{mayo2016use} for each patient. We split the dataset into seven environments according to age ($<$60, 60-65, 65-70, 70-75, 75-80, 80-85, $>$85), which respectively contain $n_e=$27,59,90,240,182,117,42 samples. We repeat 3 times, with each time randomly taking four environments for training and the rest for deployment. 

\noindent\textbf{Causal discovery.} The learned causal graph is shown in Fig.~\ref{fig:simg3_ad_causal_graphs} (b). As we can see, the affection of AD (measured by FAQ score) first shows in the hippocampus (HP) and medial temporal lobe (TML), then propagates to other brain regions, which echos existing studies that the HP and TML are early degenerated regions \cite{barnes2009meta,duara2008medial}, Besides, we observe that the caudate (CAU), pallidum (PAL), and hippocampus (HP) are mutable regions, which agrees with the heterogeneity found in different age groups \cite{cavedo2014medial,fiford2018patterns}.

\noindent\textbf{Equivalence and searching complexity.} As shown in Fig.~\ref{fig:simg3_ad_causal_graphs} (b), we have $\text{FAQ} \to \text{TML}$, which violates the graphical condition ($\text{TML} \in \mathbf{W}$) in Thm.~\ref{thm:graph degenerate}. We thus search over equivalence classes to find $S^*$. As shown in Tab.~\ref{tab:ad search cost}, there are only 25307 equivalence classes out of the $2^{25}$ subsets. Correspondingly, the training time can be saved from about 55,687 hours $\approx$ 6.4 years to only 42 hours.

\noindent\textbf{Results.} Fig.~\ref{fig:adni-intro} (a) shows the max. MSE of our method and baselines. As we can see, our method significantly outperforms the others, which demonstrates the effectiveness of Thm.~\ref{thm:min-max} in robust subset selection. Further, Fig.~\ref{fig:adni-intro} (b) shows that the max. MSE of subsets ranked by our method appears a positive correlation with the true worst-case risk; as a contrast, the correlation is negative for the max. MSE of subsets ranked by the validation`s loss. Particularly, the top subset selected by our method {\small \{FSL,TSL,TIL,PSL,OML,CM\}} reaches a max. MSE of 0.890; while the one selected by the validation`s loss {\small \{FSL,FML,TSL,TIL,PSL,CA,THA,GEN\}} only has a max. MSE of 1.069. These results demonstrate the effectiveness of our method in estimating the worst-case risk. The improvements over ICP, IRM, and their extensions can be attributed to the property use of invariance beyond stable causal features/representations. The advantage over DRO may lie in the robustness of our method beyond bounded distributional shifts; while the advantage over Anchor regression can be contributed to the relaxation of the linearity assumption.

\section{Conclusion}
\label{sect:conclusion}

In this paper, we propose a causal minimax learning approach to identify the optimal subset of invariance to transfer, in order to achieve robustness against dataset shifts. We first provide a graphical condition that is sufficient for the whole stable set to be optimal. When this condition fails, we propose an optimization-based approach that is provable to attain the worst-case risk for each subset. Further, we propose a new search strategy via $d$-separation, which enjoys better efficiency. The subset selected by our method outperforms the others in terms of robustness on Alzheimer's disease diagnosis. In the future, we are interested to extend our results to cases where the DAG is allowed to vary, which may happen when there are many deployed environments.

\section*{Acknowledgements}
This work was supported by MOST-2018AAA0102004. 


\bibliography{reference}
\bibliographystyle{icml2023}

\newpage
\appendix
\onecolumn
\renewcommand{\contentsname}{Appendix}
\tableofcontents

\newpage

\addtocontents{toc}{\protect\setcounter{tocdepth}{2}}

\section{Causal minimax theories}
\label{sec: appendix causal minimax theories}

\subsection{Proof of Thm.~\ref{thm:graph degenerate}: Graphical condition for $S^*=S$}
\label{sec: proof of graphical thm}

\noindent\textbf{Thereom \ref{thm:graph degenerate}.} \emph{Suppose Asm.~\ref{eq:assum-dag} holds. Denote $\mathbf{X}_{M}^{0}\!:=\!\mathbf{X}_{M}\cap \mathbf{Ch}(Y)$ as mutable variables in $Y$'s children, and $\mathbf{W}:=\mathbf{De}(\mathbf{X}_{M}^{0}) \backslash \mathbf{X}_M^0$ as descendants of $\mathbf{X}_{M}^{0}$. Then, we have $S^*=S$ if $Y$ does not point to any vertex in $\mathbf{W}$.}
\begin{proof}
    Define $\mathbf{W}_2:=\mathbf{X} \backslash (\mathbf{X}_M^0  \cup \mathbf{De}(\mathbf{X}_M^0))$ as variables beyond $\mathbf{X}_M^0$ and their descendants, $\mathbf{X}_M^1:=\mathbf{X}_M \backslash \mathbf{X}_M^0$ as mutable variables beyond $Y$'s children. 
    
    We first show the equivalence of the following conditions; then show under either of them, we have $S^*=S$.
    \begin{enumerate}[label=$\textbf{\text{(\arabic*)}}$]
    \item $Y \ind_{G_{\overline{\mathbf{X}_M^0}}} \mathbf{W}|\mathbf{W}_2$;
    \item $Y$ does not point to any vertex in $\mathbf{W}$;
    \item $P(Y|\mathbf{X}_S, do(\boldsymbol{x}_M))$ can degenerate to the conditional distribution $P(Y|\mathbf{W}_2)$. 
     \end{enumerate}
     
    We introduce some notations that will be used in the proof. For a vertex $V_i$, denote $\mathbf{An}(V_i)$ as the set of its ancestors,  $G_{\overline{V_i}}$as the graph obtained by deleting all arrows pointing into $V_i$, $G_{\underline{V_i}}$ as the graph obtained by deleting all arrows emerging from $V_i$. To represent the deletion of both pointing (to $V_i$) and emerging (from $V_j$) arrows, we use the notation $G_{\overline{V_i}\underline{V_j}}$.

    In the following, we will show the equivalence of conditions \textbf{(1)}, \textbf{(2)}, and \textbf{(3)}. Firstly note that \textbf{(2)} is equivalent to ``$Y$ is not adjacent to $\mathbf{W}$'' due to the assumed acyclic of $G$. Also note that $\mathbf{X}_S \cup \mathbf{X}_M^1 = \mathbf{W} \cup \mathbf{W}_2$.
     
     \paragraph{(1)$\Rightarrow $(2)} 
     Prove by contradiction. Suppose $Y$ and $\mathbf{W}$ are adjacent, then they are also adjacent in ${G_{{\overline{\mathbf{X}_M^0}}}}$ because $\mathbf{W}\!\cap\!\mathbf{X}_M^0=\emptyset$. As a result, $Y$ and $\mathbf{W}$ can not be \emph{d}-separated by any vertex in ${G_{{\overline{\mathbf{X}_M^0}}}}$, which contradicts with \textbf{(1)}.

     \paragraph{(2)$\Rightarrow$(3)}
     Since $Y \not \in \mathbf{Pa}(\mathbf{X}_M^1)$, we have:
     \begin{align}
         p(y|\boldsymbol{x}_S,do(\boldsymbol{x}_M)) &= \nonumber \frac{p(y|\boldsymbol{pa}(y)) \prod_{i\in S} p(x_i|\boldsymbol{pa}(x_i))}{\int p(y|\boldsymbol{pa}(y)) \prod_{i\in S} p(x_i|\boldsymbol{pa}(x_i)) dy}\\
         &= \frac{p(y|\boldsymbol{pa}(y)) \prod_{i\in S} p(x_i|\boldsymbol{pa}(x_i)) \prod_{X_i\in \mathbf{X}_M^1} p^e(x_i|\boldsymbol{pa}(x_i))}{\int p(y|\boldsymbol{pa}(y)) \prod_{i\in S} p(x_i|\boldsymbol{pa}(x_i)) \prod_{X_i\in \mathbf{X}_M^1} p^e(x_i|\boldsymbol{pa}(x_i)) dy} \nonumber \\
         &= \frac{p(y,\boldsymbol{x}_S,\boldsymbol{x}_M^1|do(\boldsymbol{x}_M^0))}{\int p(y,\boldsymbol{x}_S,\boldsymbol{x}_M^1|do(\boldsymbol{x}_M^0)) dy } = p(y|\boldsymbol{x}_S,\boldsymbol{x}_M^1, do(\boldsymbol{x}_M^0)),
     \end{align}
    which indicates $P(Y|\mathbf{X}_S,do(\boldsymbol{x}_M))=P(Y|\mathbf{X}_S,\mathbf{X}^1_M,do(\boldsymbol{x}_M^0))=P(Y|\mathbf{W},\mathbf{W}_2,do(\boldsymbol{x}_M^0))$. 
    
    Unfold $P(Y|\mathbf{W},\mathbf{W}_2,do(\boldsymbol{x}_M^0))$ with the definition of interventional distribution, we have: 
    \begin{equation}
p(y|\boldsymbol{w},\boldsymbol{w}_2,do(\boldsymbol{x}_M^0))=\frac{p(y|\boldsymbol{pa}(y))\prod_{X_j\in \mathbf{W}}p^e(x_j|\boldsymbol{pa}(x_j))\prod_{X_i\in \mathbf{W}_2}p^e(x_i|\boldsymbol{pa}(x_i))}{\int p(y|\boldsymbol{pa}(y))\prod_{X_j\in \mathbf{W}}p^e(x_j|\boldsymbol{pa}(x_j))\prod_{X_i\in \mathbf{W}_2}p^e(x_i|\boldsymbol{pa}(x_i))dy}.
    \end{equation}
    
    Since $\mathbf{Pa}(Y) \cap \{\mathbf{X}_M^0, \mathbf{W}\} = \emptyset$ and $\forall X_i\in \mathbf{W}_2, \mathbf{Pa}(X_i) \cap \{\mathbf{X}_M^0, \mathbf{W}\} = \emptyset$, we further have:
    \begin{equation}
p(y|\boldsymbol{w},\boldsymbol{w}_2,do(\boldsymbol{x}_M^0)) =\frac{p^e(y,\boldsymbol{w}_2)\prod_{X_j\in \mathbf{W}}p^e(x_j|\boldsymbol{pa}(x_j))}{\int p^e(y,\boldsymbol{w}_2)\prod_{X_j\in \mathbf{W}}p^e(x_j|\boldsymbol{pa}(x_j)dy}.
    \end{equation}

    If $Y$ and $\mathbf{W}$ are not adjacent, then $\forall X_j \in \mathbf{W}, Y \notin \mathbf{Pa}(X_j)$. As a result, $p(y|\boldsymbol{w},\boldsymbol{w}_2,do(\boldsymbol{x}_M^0))=\frac{p(y,\boldsymbol{w}_2)}{\int p(y,\boldsymbol{w}_2)dy}=p(y|\boldsymbol{w}_2)$, which means $P(Y|\mathbf{X}_S,do(\boldsymbol{x}_M))=P(Y|\mathbf{W},\mathbf{W}_2,do(\boldsymbol{x}_M^0))=P(Y|\mathbf{W}_2)$ can degenerate to a conditional distribution.

    \paragraph{(3)$\Rightarrow$(1)} Prove by contradiction. We show that if $Y \not \ind_{G_{\overline{\mathbf{X}_M^{0}}}} \mathbf{W} | \mathbf{W}_{2}$, \emph{i.e.}, \textbf{(1)} does not hold, then $P(Y|\mathbf{X}_S, do(\boldsymbol{x}_M))$ can not degenerate to any conditional distribution, \emph{i.e.}, \textbf{(3)} does not hold. 
    
    Specifically, we first show $Y \!\not\ind_{G_{\overline{\mathbf{X}_M^{0}}}}\! \mathbf{W} | \mathbf{W}_{2}  \Rightarrow P(Y|\mathbf{X}_S, do(\boldsymbol{x}_M)) \neq P(Y|\mathbf{W}_2, do(\boldsymbol{x}_M^0))$. Then, we show $P(Y|\mathbf{X}_S, do(\boldsymbol{x}_M)) \neq P(Y|\mathbf{W}_2, do(\boldsymbol{x}_M^0))$ means $P(Y| \mathbf{X}_S, do(\boldsymbol{x}_M))$ can not degenerate to any conditional distribution.
    
    The first derivation is straight-forward. If $Y \!\not\ind_{G_{\overline{\mathbf{X}_M^{0}}}}\! \mathbf{W} | \mathbf{W}_{2}$, then under Asm.~\ref{asm:faithfulness}, we have $P(Y|\mathbf{W},\mathbf{W}_2,do(\boldsymbol{x}_M^0))\neq P(Y|\mathbf{W}_2,do(\boldsymbol{x}_M^0))$, which means $P(Y|\mathbf{X}_S, do(\boldsymbol{x}_M))=P(Y|\mathbf{W},\mathbf{W}_2,do(\boldsymbol{x}_M^0))\neq P(Y|\mathbf{W}_2,do(\boldsymbol{x}_M^0))$.

    Next, we will prove the second derivation. Suppose $P(Y|\mathbf{X}_S, do(\boldsymbol{x}_M))=P(Y|\mathbf{W}^{\prime}, \mathbf{W}_{2}, do(\boldsymbol{x}_M^{0}))$. We will show if $\mathbf{W}^{\prime} \neq \emptyset$, then the ``do'' can not be removed with either rule-2 (action to observation) or rule-3 (deletion of action) in the inference rules \cite{pearl2009causality}. According to Corollary $3.4.2$ in \cite{pearl2009causality}, the inference rules are complete in the sense that if the intervention probability (with ``do'') can be reduced to a probability expression (without ``do''), the "reduction" can be realized by a sequence of transformations, each conforming to one of the inference Rules 1-3. Since only rule-2 and rule-3 are related to the disappearance of ``do'', it is sufficient to show that rule-2 and rule-3 can not remove the ``do'', in order to prove \textbf{(1)}.
    
    Denote {$\mathbf{X}_M^{0}$ as $\{X_{M,i}^{0}\}_{i=1}^{r}$} and {$P(Y|\mathbf{W}^{\prime}, \mathbf{W}_{2}, do(\boldsymbol{x}_M^{0}))$} as { $P(Y|\mathbf{W}^{\prime}, \mathbf{W}_{2}, do(x_{M,1}^{0}), \ldots, do(x_{M,r}^{0}))$}. We first show rule-2 can not remove the ``do'' on any $X_{M,i}^{0} \in \mathbf{X}_M^{0}$.
    
    Recall rule-2 states that ``$P(Y | \mathbf{B}, do(\boldsymbol{x}), do(\boldsymbol{z}))=P(Y | \mathbf{B},\mathbf{Z}, do(\boldsymbol{x}))$ if $Y \ind_{{G}_{\overline{\mathbf{X}} \underline{\mathbf{Z}}}} \mathbf{Z} | \mathbf{B}, \mathbf{X}$ for any disjoint vertex sets $\mathbf{B}, \mathbf{X}$, and $\mathbf{Z}$ ". Prove by contradiction. Suppose rule-2 can remove the ``do'' on $X_{M,i}^{0} \in \mathbf{X}_M^{0}$, then:
    \begin{equation}
        Y \ind_{{G}_{\overline{\mathbf{X}_M^{0} \backslash\left\{X_{M,i}^{0}\right\}} \underline{\{X_{M,i}^{0}\}}}} X_{M,i}^{0} | \mathbf{W}^{\prime}, \mathbf{W}_{2}, \mathbf{X}_M^{0} \backslash\left \{X_{M,i}^{0}\right\},
    \label{rul2-not-possible}
    \end{equation}
    where we have $\mathbf{Z}=\left\{X_{M,i}^{0}\right\}, \mathbf{X}=\mathbf{X}_M^{0} \backslash \{X_{M,i}^{0}\}, \mathbf{B}=\mathbf{W}^{\prime} \cup \mathbf{W}_{2}$ in the notation of rule-2. 
	
    We explain why Eq. \ref{rul2-not-possible} can not be true. Note that $X_{M,i}^{0} \in \mathbf{Ch}(Y)$ and the direct edge $Y \rightarrow X_{M,i}^{0}$ is reserved in the graph ${G}_{\overline{\mathbf{X}_M^{0} \backslash\left\{X_{M,i}^{0}\right\}} \underline{\{X_{M,i}^{0}\}}}$, which means that $Y$ and $X_{M,i}^{0}$ can not be \emph{d}-separated by any vertex set. Hence, Eq. \ref{rul2-not-possible} can not be true.

    Then, we show rule-3 can not remove the ``do'' on all $X_{M,i}^{0} \in \mathbf{X}_M^{0}$. Recall rule-3 states that ``$P(Y | \mathbf{B}, do(\boldsymbol{x}), do(\boldsymbol{z}) )=P(Y | \mathbf{B}, do(\boldsymbol{x}) )$ if $Y \ind_{G_{\overline{\mathbf{X}},\overline{\mathbf{Z(B)}}}} \mathbf{Z} | \mathbf{B}, \mathbf{X}$ for any disjoint vertices sets $\mathbf{B}, \mathbf{X}$, and $\mathbf{Z}$ ". Here, $\mathbf{Z(B)}$ is the set of $Z$-nodes that are not ancestors of any $B$-node in $G_{\overline{\mathbf{X}}}$. Prove by contradiction. Suppose rule-3 can remove the ``do'' on $\mathbf{X}_M^{0}$, then:
    \begin{equation}
        Y \ind_{{G}_{ \overline{\mathbf{X}_M^{0}\left(\mathbf{W}^{\prime} \cup \mathbf{W}_{2}\right)}}}  \mathbf{X}_M^{0} | \mathbf{W}^{\prime} \cup \mathbf{W}_{2},
    \label{rule3-not-hold}
    \end{equation}
    where we have $\mathbf{X}=\emptyset, \mathbf{Z}=\mathbf{X}_M^{0}, \mathbf{B}=\mathbf{W}^{\prime} \cup \mathbf{W}_{2}, \mathbf{Z}(\mathbf{B})=\mathbf{X}_M^0(\mathbf{W}^\prime \cup \mathbf{W}_2)$ in the notation of rule-3.

    We explain why Eq. \ref{rule3-not-hold} can not be true when $\mathbf{W}^{\prime} \neq \emptyset$. By definition we have $\mathbf{W}^{\prime} \subseteq \mathbf{De}\left(\mathbf{X}_M^{0}\right)$, which means when $\mathbf{W}^{\prime} \neq \emptyset$,  $\mathbf{An}\left(\mathbf{W}^{\prime}\right) \cap \mathbf{X}_M^{0} \neq \emptyset$. Therefore, we have $\mathbf{X}_M^{0}\left(\mathbf{W}^{\prime} \cup \mathbf{W}_{2}\right):=\mathbf{X}_M^{0} \backslash\left(\mathbf{An}(\mathbf{W}^{\prime}) \cup \mathbf{W}_{2}\right) \neq \mathbf{X}_M^{0}$, which is equivalent to $\mathbf{X}_M^{0} \backslash (\mathbf{X}_M^{0}\left(\mathbf{W}^{\prime} \cup \mathbf{W}_{2}\right)) \neq \emptyset$. Suppose $X_{M,i}^{0} \in \mathbf{X}_M^{0} \backslash (\mathbf{X}_M^{0}\left(\mathbf{W}^{\prime} \cup \mathbf{W}_{2}\right))$, then the edge $Y \rightarrow X_{M,i}^{0}$ is reserved in the graph ${G}_{ \overline{\mathbf{X}_M^{0}\left(\mathbf{W}^{\prime} \cup \mathbf{W}_{2}\right)}}$, which means $Y$ and $X_{M,i}^{0}$ can not be \emph{d}-separated by any vertex set and Eq. \ref{rule3-not-hold} can not be true. 
    
    To conclude, we have proved that when $\mathbf{W}^{\prime} \neq \emptyset$, the ``do'' on $\mathbf{X}_M^{0}$ can not be removed entirely by rule-2 or rule-3.
	
   In the following, we prove under either of conditions \textbf{(1)}, \textbf{(2)}, \textbf{(3)}, we have $S^*=S$. When the interventional distribution can degenerate to a conditional distribution, \cite{rojas2018invariant} showed that $f_S(\boldsymbol{x}):=\mathbb{E}[Y|\boldsymbol{x}_S,do(\boldsymbol{x}_M)]$ satisfies the following minimax property:

    \begin{equation}
        f_S(\boldsymbol{x}) = \arg\min_{f\in \mathcal{F}^s} \max_{e \in \mathcal{E}} \mathbb{E}_{P^e}[(Y-f(\boldsymbol{x}))^2],
    \label{eq: appendix minmax S}
    \end{equation}
    which means $S^*=S$. Specifically, under the degeneration condition, they proved the optimality of $f_S$ by constructing a probability distribution $P^e$ for any predictor $f\in \mathcal{F}^s$, where $f$ has a larger or equal quadratic loss than $f_S$. For the details of the proof, please refer to Thm.~$4$ in \cite{rojas2018invariant}.
   
\end{proof}

\subsection{Details of Claim~\ref{prop:counter}: Counter-example of $S^*\neq S$}
\label{sec: counter example appendix new}

\begin{counterexample}
Consider the DAG in Fig.~\ref{fig:counter example}, which is the same as Fig.~\ref{fig: example graphical condition} (b). We set $Y,X_s,X_m$ to binary variables. We will show that there exists $P(Y), P(X_s|X_m,Y)$ such that $f_{S}:=\mathbb{E}[Y|x_s, do(x_m)]$ is not minimax optimal.

\begin{figure}[h!]
    \centering
    \includegraphics[width=0.25\linewidth]{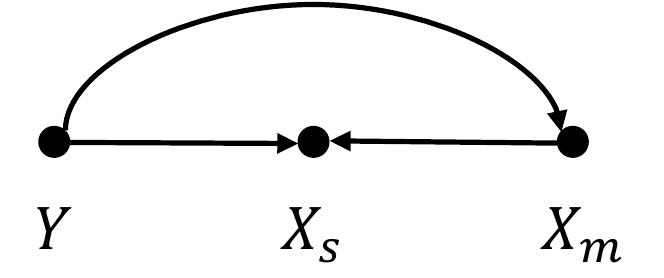}
    \caption{DAG of the counter example.}
    \label{fig:counter example}
\end{figure}

 We show this by proving the predictor $f_S$ has a larger quadratic loss than $f_{\emptyset}$: 
\begin{align}
\label{eq:goal}
\mathbb{E}[\left(Y - \mathbb{E}[Y|x_s,do(x_m)]\right)^2] > \mathbb{E}[\left( Y - \mathbb{E}[Y|do(x_m)]\right)^2]. 
\end{align}

Since we have: 
\begin{align*}
\mathbb{E}[\left( Y - \mathbb{E}[Y|x_s,do(x_m)]\right)^2] = \mathbb{E}[Y^2] + \mathbb{E}\left[ \mathbb{E}^2[Y|x_s, do(x_m)] \right] - 2\mathbb{E}[Y \cdot \mathbb{E}[Y|x_s, do(x_m)]],
\end{align*}
and $\mathbb{E}[\left( Y - \mathbb{E}(Y|do(x_m))\right)^2] = E[Y^2] - E[Y]^2$ due to that $P(Y|do(x_m)) = P(Y)$, Eq.~\eqref{eq:goal} is equivalent to:
\begin{align}
\label{eq:goal-simple}
\mathbb{E}\left[ \mathbb{E}^2[Y|x_s, do(x_m)] \right] > 2\mathbb{E}[Y \cdot \mathbb{E}[Y|x_s, do(x_m)]] - E^2[Y]. 
\end{align}

Besides, we have: 
\begin{align}
\mathbb{E}\left[ \mathbb{E}^2[Y|x_s, do(x_m)] \right] & =\sum_{x_s,x_m} \left[ \left[\sum_y p(x_s|x_m,y)p(x_m|y)p(y)\right] \cdot \mathbb{E}^2[Y|x_s, do(x_m)] \right], \label{eq:square} \\
\mathbb{E}\left[ Y\cdot\mathbb{E}[Y|x_s, do(x_m)] \right] & = \sum_{x_s,x_m} \left[ \left[\sum_y p(x_s|x_m,y)p(x_m|y)p(y)\cdot y\right] \cdot \mathbb{E}[Y|x_s, do(x_m)] \right]. \label{eq:cross}
\end{align}

Since we have $p(y|x_s, do(x_m)) = \frac{p(y)p(x_s|x_m,y)}{\sum_y p(y)p(x_s|x_m,y)}$, we have:
\begin{align}
\label{eq:y-s-m}
\mathbb{E}[Y|x_s, do(x_m)] = \frac{p(y=1)p(x_s|x_m,y=1)}{\sum_y p(y)p(x_s|x_m,y)}.
\end{align}

Substituting Eq.~\eqref{eq:y-s-m} into Eq.~\eqref{eq:square},~\eqref{eq:cross}, we have: 

{ \begin{align}
\mathbb{E}\left[ \mathbb{E}^2[Y|X_s, do(X_m)] \right] & \!=\!\sum_{x_s,x_m} \left[ \left[\sum_y p(x_s|x_m,y)p(x_m|y)p(y)\right] \cdot \left[ \frac{p(y=1)p(x_s|x_m,y=1)}{\sum_y p(y)p(x_s|x_m,y)} \right]^2 \right],  \\
\mathbb{E}\left[ Y\cdot\mathbb{E}[Y|X_s, do(X_m)] \right] & \!=\! \sum_{x_s,x_m} \left[ \left[\sum_y p(x_s|x_m,y)p(x_m|y)p(y)\cdot y \right] \cdot \left[ \frac{p(y=1)p(x_s|x_m,y=1)}{\sum_y p(y)p(x_s|x_m,y)} \right] \right] \nonumber \\
& \!=\! \sum_{x_s,x_m} \left[ \left[\sum_y p(x_s|x_m,y\!=\!1)p(x_m|y\!=\!1)p(y\!=\!1) \right] \cdot \left[ \frac{p(y\!=\!1)p(x_s|x_m,y\!=\!1)}{\sum_y p(y)p(x_s|x_m,y)} \right] \right]. 
\end{align}}

Denote $a_y \!:=\! p(y\!=\!1)$, $p(x_m\!=\!1|y)\!:=\!a_{my}$, $p(x_s\!=\!1|x_m,y) \!=\! a_{smy}$. Because $X_s,X_m$ are both binary variables, the summation over them traverses over four indicator functions $\mathbbm{1}(x_s=0,x_m=0)$, $\mathbbm{1}(x_s=0,x_m=1)$, $\mathbbm{1}(x_s=1,x_m=0)$, and $\mathbbm{1}(x_s=1,x_m=1)$, which means the left side of Eq.~\eqref{eq:goal-simple} is:

{ \begin{align}
& \mathbb{E}\left[ \mathbb{E}^2[Y|x_s, do(x_m)] \right] \!=\! \mathbbm{1}(x_s\!=\!1,x_m\!=\!1)\left(a_{s11}a_{m1}a_y+a_{s10}a_{m0}(1-a_y)\right)\left[ \frac{a_ya_{s11}}{a_ya_{s11} + (1-a_y)a_{s10}} \right]^2 \nonumber + \\
& \mathbbm{1}(x_s\!=\!1,x_m\!=\!0)\left[a_{s11}(1-a_{m1})a_y+a_{s10}(1-a_{m0})(1-a_y)\right]\left[ \frac{a_ya_{s01}}{a_ya_{s01} + (1-a_y)a_{s00}} \right]^2 \nonumber + \\
& \mathbbm{1}(x_s\!=\!0,x_m\!=\!1)\left[(1-a_{s11})a_{m1}a_y+(1-a_{s10})a_{m0}(1-a_y)\right]\left[ \frac{a_y(1-a_{s11})}{a_y(1-a_{s11}) + (1-a_y)(1-a_{s10})} \right]^2 + \nonumber \\
& \mathbbm{1}(x_s\!=\!0,x_m\!=\!0)\left[(1\!-\!a_{s01})(1\!-\!a_{m1})a_y+(1\!-\!a_{s00})(1\!-\!a_{m0})(1\!-\!a_y)\right]\left[ \frac{a_y(1\!-\!a_{s01})}{a_y(1\!-\!a_{s01}) +  (1-a_y)(1-a_{s00})} \right]^2 .
\end{align}}%

Similarly, the right side of Eq.~\eqref{eq:goal-simple} is:
{
\begin{align}
2\mathbb{E}\left[ Y\mathbb{E}[Y|x_s, do(x_m)] \right] - \mathbb{E}[Y^2] = &2 \big[ \mathbbm{1}(x_s=1,x_m=1)\frac{a_y^2a^2_{s11}a_{m1}}{a_ya_{s11}+(1-a_y)a_{s10}} +\nonumber\\
&\mathbbm{1}(x_s=1,x_m=0)\frac{a_y^2a^2_{s01}(1-a_{m1})}{a_ya_{s01}+(1-a_y)a_{s00}} + \nonumber\\
&\mathbbm{1}(x_s=0,x_m=1)\frac{a_y^2(1-a_{s11})^2a_{m1}}{a_y(1-a_{s11})+(1-a_y)a_{s10}} +\nonumber\\  &\mathbbm{1}(x_s=0,x_m=0)\frac{a_y^2(1-a_{s01})(1-a_{m1})}{a_y(1-a_{s01})+(1-a_y)(1-a_{s00})} \big] - a_y^2.
\end{align}}%

Let $a_{s10}=0.001, a_{s11}=0.999,  a_{s00}=a_{s01}=a_{s10}=0.5, a_{m0}-2a_{m1}=1, a_y=0.001$, Eq.~\ref{eq:goal-simple} becomes ``$994>-1$'', which means Eq.~\ref{eq:goal} holds and $S^*\neq S$.
\end{counterexample}

\subsection{Proof of Thm.~\ref{thm:min-max}: Worst-case risk identification}

\noindent\textbf{Theorem \ref{thm:min-max}.} \emph{Let $\mathcal{L}_{S^\prime}:=\max_{h \in  \mathcal{B}}\mathbb{E}_{P_h}[(Y \!-\! f_{S^\prime}(\boldsymbol{x}))^2]$ be the maximal population loss over $\{P_h\}_{h \in \mathcal{B}}$ for subset $S^\prime$. Then, we have $\mathcal{L}_{S^\prime}=\mathcal{R}_{S^\prime}$. Therefore, we have $S^*:= \mathop{\mathrm{argmin}}_{S^\prime \subseteq S} \mathcal{L}_{S^\prime}$.}

\begin{proof}
    Recall that $P_h\! :=\! P(Y,\mathbf{X}_S| do(\mathbf{X}_M\!=\!h(\boldsymbol{\boldsymbol{pa}}(\boldsymbol{x}_M)))$, where $h$ is a Borel measurable function from $\mathcal{P}a(\mathcal{X}_M)$ to $\mathcal{X}_M$. 
    
    To prove the theorem, we show that the worst-case risk $\mathcal{R}_{S^\prime}$ is attained when the causal factor $P^e(\mathbf{X}_M|\mathbf{Pa}(\mathbf{X}_M))$ degenerates to a delta function $\mathbbm{1}(\mathbf{X}_M=h^*(\boldsymbol{pa}(\boldsymbol{x}_M)))$, for some Borel function $h^*:\mathcal{P}a(\mathcal{X}_M) \to \mathcal{X}_M$.

    First, consider the case where $\mathbf{X}_M=\{X_m\}$. The $\mathcal{R}_{S^\prime}$ expands into:

    \begin{equation}
    \mathcal{R}_{S^\prime} = \max_{e \in \mathcal{E}} \int_{y} \int_{\boldsymbol{x}} [y-f_{S^\prime}(\boldsymbol{x})]^2 p(y|\boldsymbol{\boldsymbol{pa}}(y)) p^e(x_m|\boldsymbol{\boldsymbol{pa}}(x_m)) \prod_{i \in S} p(x_i|\boldsymbol{pa}(x_i)) dy d\boldsymbol{x}.
    \label{eq: risk}
    \end{equation}

    Let $\tilde{\mathbf{X}}:=\mathbf{X}\backslash(X_m \cup \mathbf{Pa}(X_m))$ be variables beyond $X_m$ and its parents. Split the integral in Eq.~\ref{eq: risk} into three parts: the integral over $x_m$, the integral over $\boldsymbol{pa}(x_m)$, and the integral over $y,\tilde{\boldsymbol{x}}$. Denote the last part as:
    
    \begin{equation}
        l(x_m,\boldsymbol{pa}(x_m)):=\int_y \int_{\tilde{\boldsymbol{x}}} [y-f_{S^\prime}(\boldsymbol{x})]^2 p(y|\boldsymbol{pa}(y)) \prod_{X_i \in \tilde{\mathbf{X}}} p(x_i|\boldsymbol{pa}(x_i)) dy d\tilde{\boldsymbol{x}}.
    \end{equation}

    Then, Eq.~\ref{eq: risk} becomes:
    \begin{equation}
        \mathcal{R}_{S^\prime} = \max_{e \in \mathcal{E}} \int_{\boldsymbol{pa}(x_m)} \int_{x_m} l(x_m,\boldsymbol{pa}(x_m)) p^e(x_m|\boldsymbol{pa}(x_m)) dx_m \prod_{X_i \in \mathbf{Pa}(\mathbf{X}_m)} p(x_i|\boldsymbol{pa}(x_i)) dpa(x_m)
    \label{eq: only vary with e}
    \end{equation}

    Since in Eq.~\ref{eq: only vary with e}, the only item that varies with $e$ is $p^e(x_m|\boldsymbol{pa}(x_m))$, we can move the $\max_{e \in \mathcal{E}}$ into the inner integral and have:
    \begin{equation}
        \mathcal{R}_{S^\prime}= \int_{\boldsymbol{pa}(x_m)} \max_{e \in \mathcal{E}} \int_{x_m} l(x_m,\boldsymbol{pa}(x_m)) p^e(x_m|\boldsymbol{pa}(x_m)) dx_m \prod_{X_i \in \mathbf{Pa}(\mathbf{X}_m)} p(x_i|\boldsymbol{pa}(x_i)) dpa(x_m).
    \end{equation}

    Let $h^*(\boldsymbol{pa}(x_m)):=\arg\max_{x_m} l(x_m,\boldsymbol{pa}(x_m))$ be a function from $\mathcal{P}a(\mathcal{X}_M)$ to $\mathcal{X}_M$, we have:
    \begin{equation}
        \mathcal{R}_{S^\prime} = \int_{\boldsymbol{pa}(x_m)} l(h^*(\boldsymbol{pa}(x_m)),\boldsymbol{pa}(x_m)) \prod_{X_i \in \mathbf{Pa}(\mathbf{X}_m)} p(x_i|\boldsymbol{pa}(x_i)) dpa(x_m),
    \end{equation}
    which means the worst-case risk is attained when the causal factor $P(X_m|\mathbf{Pa}(X_m))$ degenerates to a delta function $\mathbbm{1}(X_m=h^*(\boldsymbol{pa}(x_m)))$. In addition, under Asm.~\ref{eq:assum-dag}, $l(x_m,\boldsymbol{pa}(x_m))$ is a continues function. By the Maximum Theorem \cite{berge1963topologies}, $h^*:=\arg\max_{x_m} l(x_m,\boldsymbol{pa}(x_m))$ is upper semi-continuous and thus a Borel function.

    When $\mathbf{X}_M$ contains multiple mutable variables, we can consider the maximization according to the topology order $\{X_{M,1}, X_{M,2}, ..., X_{M,d_M}\}$, where $X_{M,i}$ is a mutable variable that is not the ancestor of any other variable in $\{X_{M,j}|j < i\}$. That is, we consider the $\max_{e \in \mathcal{E}} \int_{x_{M,i}} l(x_{M,i},\boldsymbol{pa}(x_{M,i})) p^e(x_{M,i}|\boldsymbol{pa}(x_{M,i})) dx_{M,i}$ sequentially for $i=1,2,...,d_M$.

    Such a sequential maximization is plausible because the topology order of mutable variables is identifiable. Please refer to the discovery of $\mathbf{De}(X_i)$ for $X_i \in \mathbf{X}_M$ in Appx.~\ref{sec: appendix discovery basic structures} for details.
\end{proof}
\newpage
\section{Causal discovery and structural identifiability}
\label{sec: appendix discovery}

Minimax theories in Sec.~\ref{sec.identify} rely on the identifiability of specific causal structures, such as $\mathbf{X}_M, \mathbf{W}$. In this section, we will prove the structural identifiability by offering causal discovery algorithms to recover them, with data from $\mathcal{E}_{\text{tr}}$. Specifically, we first show the discovery of several \emph{basic} causal structures, then use them to prove Prop.~\ref{prop.thm3.2 justifiability} and Prop.~\ref{prop.thm3.3 justifiability}.

\subsection{Basic causal structures}
\label{sec: appendix discovery basic structures}

In this section, we show the discovery of several basic causal structures: $\mathbf{X}_M$, $\mathbf{X}_M^0$, $\mathbf{X}_M^0 \cup \mathbf{De}(\mathbf{X}_M^0)$, $\mathbf{De}(X_i)$ for $X_i \in \mathbf{X}_M$, and $\mathbf{Pa}(X_i)$ for $X_i \in \mathbf{X}_M \cup \mathbf{De}(\mathbf{X}_M)$. Our algorithms are inspired by \citep{huang2020causal}.

We first introduce some notations. We use the subscript $X_i, X_j \in \mathbf{X}$, $V_i, V_j \in \mathbf{V}$ to denote vertices; the superscript $\mathbf{V}^i, \mathbf{V}^j \subseteq \mathbf{V}$ to denote vertex sets. Denote $E_{\text{tr}}$ as the environmental indicator variable with support $\mathcal{E}_{\text{tr}}$. Let $G_{\text{aug}}$ be the augmented graph \cite{huang2020causal} over $\mathbf{V} \cup E_{\text{tr}}$. We consider the causal DAG $G$ as the induced subgraph of $G_{\text{aug}}$ over $\mathbf{V}$. We have the following notations in $G_{\text{aug}}$. For two vertex sets $\mathbf{V}^i, \mathbf{V}^j \subseteq \mathbf{V}$, let $\mathbf{Z}^{i,j} \subseteq \mathbf{V}\backslash \{\mathbf{V}^i,\mathbf{V}^j\}$ be the separating set such that $\mathbf{V}^i \ind \mathbf{V}^j | \mathbf{Z}^{i,j}$. Denote $\mathbf{D}^{e,i}$ as the set of vertices along the directed path $E_{\text{tr}} \to \cdots \to V_i$. Let $\widehat{\Delta}_{i \to j}$ be the estimated Hilbert Schmidt Independence Criterion (HSIC) \cite{gretton2007kernel} for $V_i \to V_j$.

\noindent\textbf{Discovery of $\mathbf{X}_M$ and the causal skeleton.} These structures can be identified via Alg.~\ref{alg.1}. Specifically, under Asm.~\ref{assum:e-train}, any mutable variable in $\mathcal{E}$ is a mutable variable in $\mathcal{E}_{\text{tr}}$. Following \cite{huang2020causal}, we assume that if $X_i$ is a mutable variable, then $X_i$ and $E_{\text{tr}}$ are not independent given any other subset of $\mathbf{V}\backslash \{X_i\}$. Under the above assumptions and Asm.~\ref{asm:faithfulness}, we have $X_i \in \mathbf{X}_M$ iff $E\to X_i$ in $G_{\text{aug}}$.

\begin{algorithm}[h!]
\caption{Recovery of $\mathbf{X}_M$ and the causal skeleton.} 
\label{alg.1}
\begin{enumerate}
    \item Start with $\mathbf{X}_M\!\leftarrow\!\emptyset$. For each $i$, test if $V_i\ind E_{\text{tr}}$ or if there exist a separating set $\mathbf{Z}_{i, e}$. If $V_i\not\ind E_{\text{tr}}$ and there is no such seperating set, update $\mathbf{X}_M\!\leftarrow\!\mathbf{X}_M\cup V_i$.
    \item  Start with an undirected graph $G_0$ including edges between any two vertices in $\mathbf{V}$ and the arrow $E_{\text{tr}}\rightarrow V_i$ for $V_i\in \mathbf{X}_M$.
    For each pair $i,j$, if {$V_i\ind V_j$} or there exists a separating set {$\mathbf{Z}_{i, j}$}, remove the edge {$V_i-V_j$} from $G_0$.
\end{enumerate}
\end{algorithm}

\noindent\textbf{Discovery of $\mathbf{X}_M^0$.} We can use $E_{\text{tr}}$ and the \emph{v}-structure $E_{\text{tr}} \to X_i \leftarrow Y$ to detect $\mathbf{X}_M^0:=\mathbf{X}_M \cap \mathbf{Ch}(Y)$. Specifically, for $X_i\in \mathbf{X}_M$ that is adjacent to $Y$, test whether $Y\not \ind E_{\text{tr}} | \mathbf{Z}^{y,e}\cup X_i$. If the $\not \ind$ holds, orient $Y \to X_i$ and add $X_i$ to $\mathbf{X}_M^0$.

\noindent\textbf{Discovery of $\mathbf{X}_M^0 \cup \mathbf{De}(\mathbf{X}_M^0)$.} This structure can be identified via Alg.~\ref{alg.2}. Alg.~\ref{alg.2} searches vertices adjacent to $\mathbf{X}_M^0$ in a breadth-first manner. The set $\mathbf{A}$ defined in line-\ref{exp.1} is the final output. The set $\mathbf{B}$ is an instrumental set that starts with $\mathbf{X}_M^0$ and ends with $\emptyset$. During the search, $\mathbf{B}$ stores the vertices in $\mathbf{X}_M^0$ that has not been searched for the children. Once a vertex $X_i \in \mathbf{B}$ has been searched, it is excluded from the set $\mathbf{B}$ (line-\ref{exp.12}) and the children of $X_i$ are added to $\mathbf{B}$ if it has not been visited (line-\ref{exp.include1} and line-\ref{exp.include2}). 

Specifically, in lines \ref{exp.notinM1} to \ref{exp.notinM2}, we consider the vertex $X_j\in\mathbf{Neig}(X_i)$ such that $X_j \not\in \mathbf{X}_M$. Since $X_j \not\in \mathbf{X}_M$, $E_{\text{tr}}$ and $X_j$ are not adjacent. Since $X_i\in \mathbf{X}_M^0 \cup \mathbf{De}(\mathbf{X}_M^0)$, we have $E_{\text{tr}} \rightarrow \cdots \rightarrow X_i - X_j$. Together, these mean we can use the \emph{v}-structure $E_{\text{tr}} \to  \cdots \to X_i \leftarrow X_j$ to decide whether $X_j \in \mathbf{X}_M^0 \cup \mathbf{De}(\mathbf{X}_M^0)$. In lines \ref{exp.inM1} to \ref{exp.inM2}, we consider the vertex $X_j\in\mathbf{Neig}(X_i)$ such that $X_j \in \mathbf{X}_M$. We first explain why it is unnecessary to consider the case of $X_j\in\mathbf{X}_M$ and $X_j\in\mathbf{Neig}(Y)$. If $X_i\in\mathbf{Pa}(Y)$, $X_j$ can not be in $\mathbf{X}_M^0 \cup \mathbf{De}(\mathbf{X}_M^0)$ because otherwise it would induce a directed cycle. If $X_j\in\mathbf{Ch}(Y)$, we have $X_j \in \mathbf{X}_M^0$ and has been included in set $\mathbf{A}$ in the beginning. As a result, the remaining case is $X_j\in\mathbf{X}_M$ and $X_j\not\in\mathbf{Neig}(Y)$. In this case, we have $X_i \in \mathbf{De}(Y)$, which means we can use the \emph{v}-structure $Y \to  \cdots \to X_i \leftarrow X_j$ to decide whether $X_j \in \mathbf{X}_M^0 \cup \mathbf{De}(\mathbf{X}_M^0)$.

\noindent\textbf{Discovery of $\mathbf{De}(X_i)$ for $X_i \in \mathbf{X}_M$.} This structure can be identified via Alg.~\ref{alg.3}. Alg.~\ref{alg.3} first searches vertices adjacent to $\mathbf{X}_M$ and orients $X_i-X_j$ for $X_i \in \mathbf{X}_M$ in order to detect $X_i$'s children. It then searches $X_i$'s children in a similar manner to identify $X_i$'s descendants. 

Specifically, in lines \ref{alg3.notinM1} to \ref{alg3.notinM2}, we consider the vertex $X_j\in\mathbf{Neig}(X_i)$ such that $X_j \not\in \mathbf{X}_M$. The orientation of $X_i-X_j$ can be decided by the \emph{v}-structure $E_{\text{tr}}\to \cdots \to X_i \leftarrow X_j$ since $E_{\text{tr}}$ is not adjacent to $X_j$. In lines \ref{alg3.inM1} to \ref{alg3.inM2}, we consider the vertex $X_j\in\mathbf{Neig}(X_i)$ such that $X_j \in \mathbf{X}_M$. Following \cite{huang2020causal}, we decide the orientation of $X_i-X_j$ by comparing the estimated HSIC values $\widehat{\Delta}_{i\rightarrow j}$ and $ \widehat{\Delta}_{j\rightarrow i}$.

\noindent\textbf{Discovery of $\mathbf{Pa}(X_i)$ for $X_i \in \mathbf{X}_M \cup \mathbf{De}(\mathbf{X}_M)$.} This structure has been identified in lines \ref{alg3.pa1} and \ref{alg3.pa2} of Alg.~\ref{alg.3}.

\begin{algorithm}[h!]
\caption{Recovery of $\mathbf{X}_M^0\cup\mathbf{De}(\mathbf{X}_M^0)$}
\label{alg.2}
    \begin{algorithmic}[1]
        \STATE Start with $\mathbf{A},\mathbf{B}\leftarrow\mathbf{X}_M^0$ and $\mathrm{visited}(X_i)\leftarrow \mathrm{false}$. \alglinelabel{exp.1}
        \WHILE{$\mathbf{B}\neq \emptyset$}
        \FOR{$X_i\in \mathbf{B}$}
        \FOR{$X_j\in \mathbf{Neig}(X_i)$ } 
        \IF{$X_j\not\in\mathbf{X}_M$ and $X_j\ind E_{\text{tr}}|(\mathbf{Z}^{e,j}\cup X_i)\backslash \mathbf{D}^{j, e}$} \alglinelabel{exp.notinM1}
        \STATE $\mathbf{A}\!\leftarrow\!\mathbf{A}\cup X_j$. \alglinelabel{exp.6}
        \IF{$\mathrm{visited}(X_j)=\mathrm{false}$}
        \STATE $\mathbf{B}\leftarrow\mathbf{B}\cup X_j$.\alglinelabel{exp.include1}
        \ENDIF
        \ENDIF \alglinelabel{exp.notinM2}
        \IF{$X_j\in\mathbf{X}_M$ and $X_j\not\in\mathbf{Neig}(Y)$ and $X_j\ind Y|(\mathbf{Z}^{j,y}\cup X_i)\backslash \mathbf{D}^{y,i}$ }\alglinelabel{exp.inM1}
        \STATE   $\mathbf{A}\leftarrow\mathbf{A}\cup X_j$.
        \IF{$\mathrm{visited}(X_j)=\mathrm{false}$}
        \STATE $\mathbf{B}\leftarrow\mathbf{B}\cup X_i$.\alglinelabel{exp.include2}
        \ENDIF
        \ENDIF
        \ENDFOR
        \STATE $\mathbf{B}\leftarrow\mathbf{B}\setminus\{X_i\}$. \alglinelabel{exp.12}
        \ENDFOR\alglinelabel{exp.inM2}
        \ENDWHILE
    \end{algorithmic}
\end{algorithm}

\begin{algorithm}[h!]
\caption{Recovery of $\mathbf{De}(X_i)$ for $X_i \in \mathbf{X}_M$.}
\label{alg.3}
\begin{algorithmic}[1]
    \STATE Start with $\mathbf{B}\leftarrow\mathbf{X}_M$ and $\mathrm{visited}(X_i)\leftarrow \mathrm{false
    }$.
    \WHILE{$\mathbf{B}\neq \emptyset$}
    \FOR{$X_i\in \mathbf{B}$}
    \FOR{$X_j\in \mathbf{Neig}(X_i)$ } 
    \IF{$X_j\not\in\mathbf{X}_M$ and $X_j\ind E_{\text{tr}}|(\mathbf{Z}^{e, j}\cup X_i)\backslash \mathbf{D}^{j, e}$} \alglinelabel{alg3.notinM1}
    \STATE orient $X_i-X_j$ as $X_i \to X_j$.
    \IF{$\mathrm{visited}(X_j)=\mathrm{false}$}
    \STATE $\mathbf{B}\leftarrow\mathbf{B}\cup X_j$.
    \ENDIF
    \ELSE 
    \STATE  orient $X_i-X_j$ as $X_i \leftarrow X_j$. \alglinelabel{alg3.pa1}
    \ENDIF \alglinelabel{alg3.notinM2}
    \IF{$X_j\in\mathbf{X}_M$ and $\widehat{\Delta}_{i\rightarrow j}< \widehat{\Delta}_{j\rightarrow i}$}\alglinelabel{alg3.inM1}
    \STATE   orient $X_i-X_j$ as $X_i \to X_j$.
    \IF{$\mathrm{visited}(X_j)=\mathrm{false}$}
    \STATE $\mathbf{B}\leftarrow\mathbf{B}\cup X_j$.
    \ENDIF
    \ELSE
    \STATE orient $X_i-X_j$ as $X_i \leftarrow X_j$. \alglinelabel{alg3.pa2}
    \ENDIF
    \ENDFOR
    \STATE $\mathbf{B}\leftarrow \mathbf{B}\backslash X_i$. 
    \ENDFOR \alglinelabel{alg3.inM2}
    \ENDWHILE
\end{algorithmic}
\end{algorithm}

\clearpage
\subsection{Proof of Prop.~\ref{prop.thm3.2 justifiability}: Testability of Thm.~\ref{thm:graph degenerate}}

\noindent\textbf{Proposition~\ref{prop.thm3.2 justifiability}.} \emph{Under Asm.~\ref{eq:assum-dag}-\ref{assum:e-train}, we have that \textbf{i)} the $\mathbf{W}$ is identifiable; and \textbf{ii)} the condition $Y \not\to \mathbf{W}$ is testable from $\{\mathcal{D}_e\}_{e \in \mathcal{E}_{\mathrm{tr}}}$.}

\begin{proof}
    $\mathbf{W}\!=\!(\mathbf{X} \backslash \mathbf{X}_M^0) \cap \mathbf{De}(\mathbf{X}_M^0) \!=\!(\mathbf{X} \backslash \mathbf{X}_M^0) \cap \big\{\mathbf{X}_M^0 \cup \mathbf{De}(\mathbf{X}_M^0)\big\}$ is identifiable because $\mathbf{X}_M^0$ and $\mathbf{X}_M^0 \cup \mathbf{De}(\mathbf{X}_M^0)$ are identifiable, as shown in Appx.~\ref{sec: appendix discovery basic structures}. Since all vertices in $\mathbf{W}$ are descendants of $Y$, we have $Y \not\to X_i, X_i \in \mathbf{W}$ iff $X_i$ is not adjacent to $Y$ in the causal skeleton of $G_{\text{aug}}$.
\end{proof}

\subsection{Proof of Prop.~\ref{prop.thm3.3 justifiability}: Identifiability of Thm.~\ref{thm:min-max}}

\noindent\textbf{Proposition~\ref{prop.thm3.3 justifiability}.} \emph{Under Asm.~\ref{eq:assum-dag}-\ref{assum:e-train}, the  $P_h$, $f_{S^\prime}$, and hence $\mathcal{L}_{S^\prime}(h)$ are identifiable.} 

\begin{proof}
    To identify $P_h$, we need to use $h(\mathbf{Pa}(\mathbf{X}_M))$ to replace $\mathbf{X}_M$, followed by regenerating $X_i$ from $\mathbf{Pa}_{{G_{\overline{\mathbf{X}_M}}}}(X_i)$ for $X_i \in \mathbf{De}_{G_{\overline{\mathbf{X}_M}}}(\mathbf{X}_M)$. 
    Here, $\mathbf{Pa}_{G_{\overline{\mathbf{X}_M}}}(X_i)$ denotes the parents of $X_i$ in the graph $G_{\overline{\mathbf{X}_M}}$. 

    To identify $f_{S^\prime}$, we need to sample from $P(Y,\mathbf{X}_{S^\prime}|do(\boldsymbol{x}_M))$, which involves intervening $\mathbf{X}_M$ and regenerating $X_i$ from $\mathbf{Pa}_{G_{\overline{\mathbf{X}_M}}}(X_i)$ for $X_i \in \mathbf{De}_{G_{\overline{\mathbf{X}_M}}}(\mathbf{X}_M)$.

    These structures, \emph{i.e.}, $\mathbf{X}_M$, $\mathbf{De}(X_i)$ for $X_i \in \mathbf{X}_M$, and $\mathbf{Pa}(X_i)$ for $X_i \in \mathbf{X}_M \cup \mathbf{De}(\mathbf{X}_M)$ are readily identified in Appx.~\ref{sec: appendix discovery basic structures}.
\end{proof}

\newpage
\section{Empirical estimation methods}

\subsection{Estimation of $f_{S^\prime}$}

We adopt soft-intervention to replace $P^e(\mathbf{X}_M|\mathbf{Pa}(\mathbf{X}_M))$ with {$P(\mathbf{X}_M)$} and hence define:
\begin{equation}
    P^\prime(\mathbf{X},Y)=P(Y|\mathbf{Pa}(Y))P(\mathbf{X}_M)\prod_{i\in S}P(X_i|\mathbf{Pa}(X_i)),
\end{equation}
which converts the estimation of $f_{S^\prime}$ to a regression problem, \emph{i.e.}, $f_{S^\prime}(\boldsymbol{x})= \mathbb{E}_{P^\prime}[Y|\boldsymbol{x}_{S^\prime}, \boldsymbol{x}_M]$. To generate data distributed as $P^{\prime}$, we first randomly permute $\mathbf{X}_M$ in a sample-wise manner to generate data from $P(\mathbf{X}_M)$. We then regenerate data for $X_i \in \mathrm{De}_{G_{\overline{\mathbf{X}_M}}}(\mathbf{X}_M)$ from $\mathrm{Pa}_{G_{\overline{\mathbf{X}_M}}}(X_i)$ via estimating the structural equation. 

Indeed, we only need to regenerate $\mathbf{De}_{G_{\overline{\mathbf{X}_M}}}(\mathbf{X}_M) \cap \mathbf{Blanket}_{G_{\overline{\mathbf{X}_M}}}(Y)$ since $P^\prime(Y|\mathbf{X})=P^\prime(Y|\mathbf{Blanket}_{G_{\overline{\mathbf{X}_M}}}(Y))$. Here, $\mathbf{Blanket}_{G_{\overline{\mathbf{X}_M}}}(Y))$ is the Markovian blanket of $Y$ in the graph $G_{\overline{\mathbf{X}_M}}$. Following this intuition, we consider intervening on another variable set $X_{do}^*:=\mathbf{X}_M^0 \cup \{\mathbf{De}(\mathbf{X}_M^0) \backslash \mathbf{Ch}(Y)\}$ and regenerate $X_i \in \mathbf{De}_{G_{\overline{\mathbf{X}_{do}^*}}}(\mathbf{X}_{do}^*)$. We show $\mathbf{De}_{G_{\overline{\mathbf{X}_{do}^*}}}(\mathbf{X}_{do}^*)$ is the minimum regeneration set in Prop.~\ref{prop.minimal}.

\begin{proposition}
For any admissible set $\mathbf{X}_{do}$, we have $\mathbf{De}_{G_{\overline{\mathbf{X}^*_{do}}}}(\mathbf{X}^*_{do}) \subseteq \big\{\mathbf{De}_{G_{\overline{\mathbf{X}_{do}}}} (\mathbf{X}_{do})\cap\mathbf{Blanket}_{G_{\overline{\mathbf{X}_{do}}}}(Y) \big\}$, which means $\mathbf{De}_{G_{\overline{\mathbf{X}_{do}^*}}}(\mathbf{X}_{do}^*)$ is the minimum regeneration set.
\label{prop.minimal}
\end{proposition}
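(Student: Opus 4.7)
The plan is to split the claimed containment into two separate inclusions: (i) $\mathbf{De}_{G_{\overline{\mathbf{X}^*_{do}}}}(\mathbf{X}^*_{do}) \subseteq \mathbf{De}_{G_{\overline{\mathbf{X}_{do}}}}(\mathbf{X}_{do})$ and (ii) $\mathbf{De}_{G_{\overline{\mathbf{X}^*_{do}}}}(\mathbf{X}^*_{do}) \subseteq \mathbf{Blanket}_{G_{\overline{\mathbf{X}_{do}}}}(Y)$. Both inclusions hinge on a single structural lemma implied by the defining identity $\mathbf{X}^*_{do} = \mathbf{X}_M^0 \cup (\mathbf{De}(\mathbf{X}_M^0) \setminus \mathbf{Ch}(Y))$, namely that \emph{any vertex lying on a directed path which starts in $\mathbf{X}^*_{do}$ and whose interior stays outside $\mathbf{X}^*_{do}$ must be a child of $Y$ in $G$}. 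I would prove this lemma first and then deploy it separately for (i) and (ii).

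To prove the lemma, fix $V \in \mathbf{De}_{G_{\overline{\mathbf{X}^*_{do}}}}(\mathbf{X}^*_{do})$ and a witnessing directed path $V_0 \to V_1 \to \cdots \to V_k = V$ in $G_{\overline{\mathbf{X}^*_{do}}}$. For $j \geq 1$, the edge $V_{j-1} \to V_j$ survives in $G_{\overline{\mathbf{X}^*_{do}}}$, which forces $V_j \notin \mathbf{X}^*_{do}$; otherwise that incoming edge would have been deleted. Because $V_0 \in \mathbf{X}^*_{do} \subseteq \mathbf{X}_M^0 \cup \mathbf{De}(\mathbf{X}_M^0)$ and each $V_j$ is a descendant of $V_0$ in $G$, one has $V_j \in \mathbf{De}(\mathbf{X}_M^0)$. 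Combining $V_j \in \mathbf{De}(\mathbf{X}_M^0)$ with $V_j \notin \mathbf{X}^*_{do} = \mathbf{X}_M^0 \cup (\mathbf{De}(\mathbf{X}_M^0) \setminus \mathbf{Ch}(Y))$ forces $V_j \in \mathbf{Ch}(Y)$, and in particular $V = V_k \in \mathbf{Ch}(Y)$.

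Inclusion (ii) is then immediate when $V \notin \mathbf{X}_{do}$: the edge $Y \to V$ is retained in $G_{\overline{\mathbf{X}_{do}}}$, placing $V$ in $\mathbf{Ch}_{G_{\overline{\mathbf{X}_{do}}}}(Y)$ and thus in the blanket. When $V \in \mathbf{X}_{do}$, I would either observe that $V$ still has an outgoing edge to some $V' \in \mathbf{Ch}(Y) \setminus \mathbf{X}_{do}$ (making $V$ a co-parent of $V' \in \mathbf{Ch}_{G_{\overline{\mathbf{X}_{do}}}}(Y)$ and hence a blanket member), or extend the witnessing path one step and invoke the lemma again. For inclusion (i), I would first establish $\mathbf{X}^*_{do} \subseteq \mathbf{X}_{do}$ as a consequence of admissibility: an admissible $\mathbf{X}_{do}$ must sever every mutable mechanism that can propagate to $Y$, and a case analysis on Rules 2 and 3 of the \emph{do}-calculus---mirroring the one in Appendix~\ref{sec: proof of graphical thm}---shows that the ``do'' on any vertex of $\mathbf{X}^*_{do}$ cannot be removed, so every admissible $\mathbf{X}_{do}$ must contain $\mathbf{X}^*_{do}$. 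Given this containment, any witnessing path in $G_{\overline{\mathbf{X}^*_{do}}}$ from $\mathbf{X}^*_{do}$ remains valid in $G_{\overline{\mathbf{X}_{do}}}$ after re-anchoring its starting vertex to the last $\mathbf{X}_{do}$-vertex the path visits.

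The main obstacle is proving $\mathbf{X}^*_{do} \subseteq \mathbf{X}_{do}$ rigorously from admissibility, since the excerpt does not formally define ``admissible''. I would adopt the working definition implicit in \cite{subbaswamy2019preventing}, namely that $\mathbf{X}_{do}$ is admissible exactly when the soft-intervention distribution $P'(Y, \mathbf{X}_S)$ reproduces $P(Y, \mathbf{X}_S \mid do(\boldsymbol{x}_M))$, and then carry out the \emph{do}-calculus analysis above. A secondary subtlety is the boundary case $V \in \mathbf{X}_{do}$ in (ii), where I must verify that the outgoing structure of $V$ still witnesses a parent-of-a-$G_{\overline{\mathbf{X}_{do}}}$-child-of-$Y$ relation; this is precisely where the lemma and the admissibility-induced containment dovetail, since the next vertex along the path is again a child of $Y$ by the lemma, and must eventually leave $\mathbf{X}_{do}$ at the path's endpoint.
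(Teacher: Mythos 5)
Your structural lemma is correct, and it in effect recovers the useful half of the paper's key identity: it shows every vertex of $\mathbf{De}_{G_{\overline{\mathbf{X}^*_{do}}}}(\mathbf{X}^*_{do})$ lies in $\mathbf{De}(\mathbf{X}_M^0)\cap \mathbf{Ch}(Y)$, and since such a vertex is outside $\mathbf{X}^*_{do}\supseteq \mathbf{X}_M^0=\mathbf{X}_M\cap\mathbf{Ch}(Y)$, it is in fact in $\mathbf{X}_S\cap\mathbf{Ch}(Y)$. The gap is in how you then use admissibility. The pivotal claim behind your inclusion (i) --- that every admissible $\mathbf{X}_{do}$ must contain $\mathbf{X}^*_{do}$ --- is false. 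The paper first proves an if-and-only-if characterization via the truncated factorization: $\mathbf{X}_{do}$ is admissible (i.e.\ $P(Y|\mathbf{X}\backslash\mathbf{X}_{do},do(\boldsymbol{x}_{do}))=P(Y|\mathbf{X}_S,do(\boldsymbol{x}_M))$) iff $\mathbf{X}_M^0\subseteq\mathbf{X}_{do}$ and $(\mathbf{X}_S\cap\mathbf{Ch}(Y))\cap\mathbf{X}_{do}=\emptyset$; nothing forces $\mathbf{X}_{do}$ to contain $\mathbf{De}(\mathbf{X}_M^0)\setminus\mathbf{Ch}(Y)$. Concretely, $\mathbf{X}_{do}=\mathbf{X}_M$ is admissible (it is literally the target expression), yet it omits every stable non-child descendant of $\mathbf{X}_M^0$, all of which belong to $\mathbf{X}^*_{do}$: for $Y\to X_m\to X_1$ with $X_1$ stable and $Y\not\to X_1$, one has $\mathbf{X}^*_{do}=\{X_m,X_1\}\not\subseteq\{X_m\}=\mathbf{X}_M$. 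The do-calculus argument you borrow from the proof of Thm.~\ref{thm:graph degenerate} only shows the ``do'' on $\mathbf{X}_M^0$ (mutable children of $Y$) cannot be removed; it says nothing about the other vertices of $\mathbf{X}^*_{do}$. Inclusion (i) therefore needs a different route: the paper establishes the exact identity $\mathbf{De}_{G_{\overline{\mathbf{X}^*_{do}}}}(\mathbf{X}^*_{do})=\mathbf{De}(\mathbf{X}_M^0)\cap\mathbf{X}_S\cap\mathbf{Ch}(Y)$ and then uses only the two admissibility conditions: $\mathbf{X}_M^0\subseteq\mathbf{X}_{do}$ gives $\mathbf{De}(\mathbf{X}_M^0)\subseteq\mathbf{De}(\mathbf{X}_{do})$, and disjointness of $\mathbf{X}_{do}$ from $\mathbf{X}_S\cap\mathbf{Ch}(Y)$ places these vertices outside $\mathbf{X}_{do}$, hence in $\mathbf{De}(\mathbf{X}_{do})\setminus\mathbf{X}_{do}=\mathbf{De}_{G_{\overline{\mathbf{X}_{do}}}}(\mathbf{X}_{do})$.

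Relatedly, your treatment of inclusion (ii) in the case $V\in\mathbf{X}_{do}$ is unsupported: nothing guarantees $V$ has a child in $\mathbf{Ch}(Y)\setminus\mathbf{X}_{do}$, and the witnessing path cannot be ``extended'' since $V$ is its endpoint. The correct move is to show this case is vacuous: by your own lemma $V\in\mathbf{X}_S\cap\mathbf{Ch}(Y)$, and the second admissibility condition forces $V\notin\mathbf{X}_{do}$, after which your retained-edge argument ($Y\to V$ survives in $G_{\overline{\mathbf{X}_{do}}}$, so $V$ is in the blanket) goes through. In short, the missing ingredient throughout is the characterization of admissibility that the paper proves first; the surrogate you substitute for it is not only unproven but wrong, and it even conflicts with what (ii) requires, namely that $V$ lie \emph{outside} $\mathbf{X}_{do}$ rather than inside an enlarged one.
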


\begin{proof}
    We first prove a set $\mathbf{X}_{do}$ is admissible, \emph{i.e.}, $P(Y|\mathbf{X}\backslash\mathbf{X}_{do}, do(\boldsymbol{x}_{do}))=P(Y|\mathbf{X}_S,do(\boldsymbol{x}_M))$ if and only if $\mathbf{X}_M^0 \subseteq \mathbf{X}_{do}$ and $\{\mathbf{X}_S \cap \mathbf{Ch}(Y)\} \cap \mathbf{X}_{do} = \emptyset$. Note that:
    \begin{align}
        p(y|\boldsymbol{x}\backslash \boldsymbol{x}_{do}, do(\boldsymbol{x}_{do}))&=\frac{p(y|\boldsymbol{pa}(y))\prod_{X_i\in \{\mathbf{X}\setminus\mathbf{X}_{do}\}}p(x_i|\boldsymbol{pa}(x_i))}{\int p(y|\boldsymbol{pa}(y))\prod_{X_i\in \{\mathbf{X}\setminus\mathbf{X}_{do}\}}p(x_i|\boldsymbol{pa}(x_i))dy} \nonumber \\ 
         &=\frac{p(y|\boldsymbol{pa}(y))\prod_{X_i\in \{\mathbf{X}\setminus\mathbf{X}_{do}\}\cap\mathbf{Ch}(Y)}p(x_i|\boldsymbol{pa}(x_i))}{\int p(y|\boldsymbol{pa}(y))\prod_{X_i\in \{\mathbf{X}\setminus\mathbf{X}_{do}\}\cap\mathbf{Ch}(Y)}p(x_i|\boldsymbol{pa}(x_i))dy},
    \label{eq: unford 1}
    \end{align}
    and 
    \begin{align}
        p(y|\boldsymbol{x}_S,do(\boldsymbol{x}_M)) &= \nonumber \frac{p(y|\boldsymbol{pa}(y)) \prod_{i\in S} p(x_i|\boldsymbol{pa}(x_i))}{\int p(y|\boldsymbol{pa}(y)) \prod_{i\in S} p(x_i|\boldsymbol{pa}(x_i)) dy} \nonumber \\ 
        &= \frac{p(y|\boldsymbol{pa}(y)) \prod_{X_i\in \mathbf{X}_S\cap\mathbf{Ch}(Y)} p(x_i|\boldsymbol{pa}(x_i))}{\int p(y|\boldsymbol{pa}(y)) \prod_{X_i\in \mathbf{X}_S\cap\mathbf{Ch}(Y)} p(x_i|\boldsymbol{pa}(x_i)) dy}.
    \label{eq: unford 2}
    \end{align}

Together, Eq.~\ref{eq: unford 1} and Eq.~\ref{eq: unford 2} indicate $P(Y|\mathbf{X}\backslash\mathbf{X}_{do}, do(\boldsymbol{x}_{do}))=P(Y|\mathbf{X}_S,do(\boldsymbol{x}_M))$ if and only if $\{\mathbf{X} \setminus\mathbf{X}_{do} \}\cap\mathbf{Ch}(Y)=\mathbf{X}_{S}\cap \mathbf{Ch}(Y)$, which can be re-written as:
\begin{equation}
    \{\mathbf{X}_M^0 \cap  \mathbf{X}_{do}^{\mathrm{c}}\}
	\cup \{\mathbf{X}_S \cap \mathbf{Ch}(Y) \cap  \mathbf{X}_{do}^{\mathrm{c}}\}
	=\mathbf{X}_{S} \cap \mathbf{Ch}(Y),
 \label{eq: hold iff}
\end{equation}
where $\mathbf{X}_{do}^c$ is the complementary set of $\mathbf{X}_{do}$. Eq.~\ref{eq: hold iff} holds if and only if $\mathbf{X}_M^0 \subseteq \mathbf{X}_{do}$ and $\{\mathbf{X}_S \cap \mathbf{Ch}(Y)\} \cap \mathbf{X}_{do} = \emptyset$.

We then prove $X_{do}^*$ is an admissible set and $\mathbf{De}_{G_{\overline{\mathbf{X}^*_{do}}}}(\mathbf{X}^{*}_{do})=\mathbf{De}(\mathbf{X}_M^0)\cap (\mathbf{X}_S\cap \mathbf{Ch}(Y))$. $X_{do}^*$ is admissible as the conditions $\mathbf{X}_M^0 \subseteq \mathbf{X}_{do}^*$ and $\{\mathbf{X}_S\!\cap\! \mathbf{Ch}(Y)\}\cap \mathbf{X}_{do}^*=\emptyset$ hold by definition. We show 
$\mathbf{De}_{G_{\overline{\mathbf{X}^*_{do}}}}(\mathbf{X}^{*}_{do})=\mathbf{De}(\mathbf{X}_M^0)\cap (\mathbf{X}_S\cap \mathbf{Ch}(Y))$ by showing \textbf{i)} $\mathbf{De}_{G_{\overline{\mathbf{X}^*_{do}}}}(\mathbf{X}^{*}_{do}) \subseteq \mathbf{De}(\mathbf{X}_M^0)\cap (\mathbf{X}_S\cap \mathbf{Ch}(Y))$ and \textbf{ii)} $\mathbf{De}_{G_{\overline{\mathbf{X}^*_{do}}}}(\mathbf{X}^{*}_{do}) \supseteq \mathbf{De}(\mathbf{X}_M^0)\cap (\mathbf{X}_S\cap \mathbf{Ch}(Y))$.

\textbf{i)} $\mathbf{De}_{G_{\overline{\mathbf{X}^*_{do}}}}(\mathbf{X}^{*}_{do}) \subseteq \mathbf{De}(\mathbf{X}_M^0)\cap (\mathbf{X}_S\cap \mathbf{Ch}(Y))$. Note that $\mathbf{X}_{do}^*\subseteq \mathbf{X}_M^0 \cup \mathbf{De}(\mathbf{X}_M)^0$, which means $\mathbf{De}(\mathbf{X}_{do}^*) \subseteq \mathbf{De}(\mathbf{X}_M^0)$. Then, we have:
\begin{align}
\mathbf{De}_{G_{\overline{\mathbf{X}_{do}^*}}}(\mathbf{X}_{do}^*)
&=\mathbf{De}(\mathbf{X}_{do}^*)\cap (\mathbf{X}_{do}^*)^c
=
\mathbf{De}(\mathbf{X}_{do}^*)\cap (\mathbf{X}_M^0)^c \cap \{\mathbf{De}(\mathbf{X}_M^0) \setminus \mathbf{Ch}(Y)\}^c \nonumber \\
&=
\mathbf{De}(\mathbf{X}_{do}^*)\cap \{\mathbf{X}_M^{c} \cup \mathbf{Ch}(Y)\}\} \cap \{\mathbf{De}(\mathbf{X}_M^0)^c \cup \mathbf{Ch}(Y)\} \nonumber \\
&\subseteq 
\mathbf{De}(\mathbf{X}_M^0) \cap \{\mathbf{X}_M^c \cup \mathbf{Ch}(Y)^c\} \cap \{\mathbf{De}(\mathbf{X}_M^0)^c \cup \mathbf{Ch}(Y)\} \nonumber \\
& =
\mathbf{De}(\mathbf{X}_M^0)\cap\mathbf{X}_M^c\cap \mathbf{Ch}(Y)
= \mathbf{De}(\mathbf{X}_M^0) \cap \mathbf{X}_S \cap \mathbf{Ch}(Y) \nonumber \\
&\subseteq  \mathbf{De}(\mathbf{X}_M^0)\cap (\mathbf{X}_S\cap \mathbf{Ch}(Y)).
\end{align}

\textbf{ii)} $\mathbf{De}_{G_{\overline{\mathbf{X}^*_{do}}}}(\mathbf{X}^{*}_{do}) \supseteq \mathbf{De}(\mathbf{X}_M^0)\cap (\mathbf{X}_S\cap \mathbf{Ch}(Y))$. Since $\mathbf{X}_M^0 \subset \mathbf{X}_{do}^*$, $\mathbf{De}(\mathbf{X}_M^0) \subseteq \mathbf{De}(\mathbf{X}_{do}^*)$. As a result, we have $\mathbf{De}(\mathbf{X}_M^0)\cap (\mathbf{X}_S\cap \mathbf{Ch}(Y)) \subseteq  \mathbf{De}(\mathbf{X}_M^0) \subseteq \mathbf{De}(\mathbf{X}_{do}^*)$ and hence $\{\mathbf{De}(\mathbf{X}_M^0)\cap (\mathbf{X}_S\cap \mathbf{Ch}(Y)) \setminus \mathbf{X}_{do}^* \} \subseteq \{\mathbf{De}(\mathbf{X}_{do}^*) \setminus \mathbf{X}_{do}^*\}$. Besides, note that $\mathbf{X}_{do}^* \cap \mathbf{De}(\mathbf{X}_M^0)\cap (\mathbf{X}_S\cap \mathbf{Ch}(Y)) =\emptyset$, which indicates $\mathbf{De}(\mathbf{X}_M^0)\cap (\mathbf{X}_S\cap \mathbf{Ch}(Y)) \setminus \mathbf{X}_{do}^*=\mathbf{X}_{do}^*$ and $\mathbf{De}_{G_{\overline{\mathbf{X}}_{do}^*}}(\mathbf{X}_{do}^*)=\mathbf{De}(\mathbf{X}_{do}^*) \setminus \{\mathbf{De}(\mathbf{X}_M^0)\cap (\mathbf{X}_S\cap \mathbf{Ch}(Y))\}$. As a result, we have $\mathbf{De}(\mathbf{X}_M^0)\cap (\mathbf{X}_S\cap \mathbf{Ch}(Y)) \subseteq \mathbf{De}_{G_{\overline{\mathbf{X}}_{do}^*}}(\mathbf{X}_{do}^*)$.

Given that any $\mathbf{X}_{do}$ needs to satisfy the two conditions, we have:
\begin{align}
\mathbf{X}_M^0 \subseteq \mathbf{X}_{do}
&\Rightarrow
\mathbf{De}(\mathbf{X}_M^0) \subseteq \mathbf{De}(\mathbf{X}_{do}), \nonumber \\
\mathbf{X}_{do}\subseteq \{\mathbf{X}_S \cap \mathbf{Ch}(Y)\}^{c}
&\Rightarrow
\{\mathbf{X}_S \cap \mathbf{Ch}(Y)\}\subseteq \mathbf{X}_{do}^{c}.
\end{align}
Therefore, we have:
\begin{equation}
    \mathbf{De}(\mathbf{X}_M^0)\cap\{\mathbf{X}_S \cap \mathbf{Ch}(Y)\}
\subseteq 
\mathbf{De}(\mathbf{X}_{do})\cap \mathbf{X}_{do}^{c},
\end{equation}
which means $\mathbf{De}_{G_{\overline{\mathbf{X}_{do}^*}}}(\mathbf{X}_{do}^*) = \mathbf{De}(\mathbf{X}_M^0)\cap (\mathbf{X}_S\cap \mathbf{Ch}(Y)) \subseteq \mathbf{De}_{G_{\overline{\mathbf{X}}_{do}}} (\mathbf{X}_{do})$ for any admissible set $\mathbf{X}_{do}$.
\end{proof}

\begin{remark}
    The $\mathbf{X}_{do}^*$, $\mathbf{De}_{G_{\overline{\mathbf{X}_{do}^*}}}(\mathbf{X}_{do}^*)$, and $\mathbf{Pa}(X_i)$ for $X_i$ in $\mathbf{De}_{G_{\overline{\mathbf{X}_{do}^*}}}(\mathbf{X}_{do}^*)$ are identifiable according to Appx.~\ref{sec: appendix discovery basic structures}.
\end{remark}

\subsection{Estimation of $\mathcal{L}_{S^\prime}$}

We first sample from $P_h$. Specifically, we replace $X_i$ with $h(\mathbf{Pa}(X_i))$ for $X_i \in \mathbf{X}_M$ and regenerate data for $X_i \in \mathbf{De}_{G_{\overline{\mathbf{X}_M}}}$ from $\mathbf{Pa}_{G_{\overline{\mathbf{X}_M}}}(X_i)$ via estimating the structural equation. We then maximize $\mathbb{E}_{P_h}[(Y-f_{S^\prime}(\boldsymbol{x}))^2]$ over $h$ to obtain $\mathcal{L}_{S^\prime}$.

\newpage
\section{Equivalence relation and the recovery algorithm}

We first introduce some notations that will be used in this section. We use the subscript $X_i, X_j \in \mathbf{X}$, $V_i, V_j \in \mathbf{V}$ to denote variables and vertices; the superscript $S^\prime, S^i, S^j \subseteq S$, $\mathbf{V}^\prime, \mathbf{V}^i, \mathbf{V}^j \subseteq \mathbf{V}$ to denote variable and vertex subsets. A path $p:=<V_1,V_2,...,V_l>$ is a sequence of distinct vertices with $V_i$ being adjacent to $V_{i+1}$ for $i=1,2,...,l-1$. We use $l$ to denote the length of the path. The path $p$ can be \emph{blocked} by a vertex set $\mathbf{V}^\prime$ means it can be \emph{d}-separated by $\mathbf{V}^\prime$ when $G$ is a DAG, and \emph{m}-separated by $\mathbf{V}^\prime$ when $G$ is a Maximal Ancestral Graph (MAG). For a vertex $V_i$, denote $\mathrm{deg}(V_i):=|\mathbf{Neig}(V_i)|$ as its degree. In a MAG, we use $\mathbf{C}$, $\mathbf{L}$ to denote the selection set and the latent set, respectively.

\subsection{Details of Def.~\ref{def:equi}: Equivalence relation}

We first introduce the following lemma, which studies the property of \emph{d}-separation and \emph{m}-separation in the difference set.

\begin{lemma}
\label{lemma: union d/m-separation}
Consider two vertex sets $\mathbf{V}^1, \mathbf{V}^2$, and a path $p$. If $p$ can be blocked by $\mathbf{V}^1 \cup \mathbf{V}^2$ but can not be blocked by the difference set $(\mathbf{V}^1 \cup \mathbf{V}^2)\backslash \mathbf{V}^2 = \mathbf{V}^1$, then the set $\mathbf{V}^2$ contains a non-collider on $p$.
\end{lemma}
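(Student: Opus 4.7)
The plan is to argue by direct case analysis on the vertex that witnesses the blocking of $p$ by the larger set $\mathbf{V}^1 \cup \mathbf{V}^2$. The key monotonicity to exploit is that enlarging a conditioning set can only newly block a previously open path at a non-collider, because conditioning on additional vertices (or their descendants) can only open colliders, never close them. I expect the argument to be essentially mechanical once the definitions are unpacked.

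First I would recall the characterization of blocking: a path $p$ is blocked by a set $\mathbf{Z}$ iff there exists a vertex $V$ on $p$ such that either (i) $V$ is a non-collider on $p$ and $V \in \mathbf{Z}$, or (ii) $V$ is a collider on $p$ and neither $V$ nor any descendant of $V$ lies in $\mathbf{Z}$. This dichotomy is identical in form for $d$-separation in DAGs and $m$-separation in MAGs, so the proof will be formalism-agnostic. The hypothesis ``$p$ is not blocked by $\mathbf{V}^1$'' then yields two global structural facts: every non-collider on $p$ lies outside $\mathbf{V}^1$, and every collider on $p$ either lies in $\mathbf{V}^1$ or has a descendant in $\mathbf{V}^1$.

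Next I would pick a witness $V$ on $p$ for the blocking by $\mathbf{V}^1 \cup \mathbf{V}^2$ and split into two cases. If $V$ is a collider, then $V \notin \mathbf{V}^1 \cup \mathbf{V}^2$ and no descendant of $V$ is in $\mathbf{V}^1 \cup \mathbf{V}^2$; since $\mathbf{V}^1 \subseteq \mathbf{V}^1 \cup \mathbf{V}^2$, the same vertex $V$ would already witness blocking of $p$ by $\mathbf{V}^1$, contradicting our hypothesis. Hence $V$ must be a non-collider with $V \in \mathbf{V}^1 \cup \mathbf{V}^2$. Because every non-collider on $p$ lies outside $\mathbf{V}^1$, we conclude $V \in \mathbf{V}^2$, delivering the claimed non-collider in $\mathbf{V}^2$.

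The only genuine subtlety, and the step I would treat most carefully, is transferring the argument verbatim to the MAG setting used later in the paper: one must recall that ``collider on $p$'' in a MAG is defined by both edges of $p$ incident to $V$ having arrowheads at $V$ (covering $\to V \leftarrow$, $\leftrightarrow V \leftrightarrow$, $\to V \leftrightarrow$, etc.), and ``descendant'' is defined through directed edges only. With those definitions in hand, the same open/block dichotomy holds and the case analysis goes through unchanged, so no additional obstacle is expected.
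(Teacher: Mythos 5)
Your proof is correct and follows essentially the same route as the paper's: extract from ``$p$ not blocked by $\mathbf{V}^1$'' that every non-collider on $p$ avoids $\mathbf{V}^1$ while every collider lies in $\mathbf{V}^1$ or has a descendant there, then observe that the blocking witness for $\mathbf{V}^1 \cup \mathbf{V}^2$ cannot be a collider and must therefore be a non-collider in $\mathbf{V}^2$. Your case split on the type of the blocking witness is in fact a cleaner organization than the paper's case analysis on whether $p$ consists of only colliders, only non-colliders, or both, but the underlying argument is identical and applies verbatim to both $d$- and $m$-separation as you note.
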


\begin{proof}
    We first show $p$ contains at least one non-collier. Prove by contradiction. Suppose all vertices on $p$ are colliders. Since $p$ can not be blocked by $\mathbf{V}^1$, we have $\forall\, V_i \in p$, $V_i \in \mathbf{V}^1$ or $\exists\, V_j\in \mathbf{De}(V_i)$ such that $V_j \in \mathbf{V}^1$. This means $p$ can not be blocked $\mathbf{V}^1 \cup \mathbf{V}^2$, which is a contradiction.

    We then prove the lemma by considering two cases: \textbf{i)} $p$ contains only non-colliders; \textbf{ii)} $p$ contains both colliders and non-colliders. For \textbf{i)}, since $p$ can not be blocked by $\mathbf{V}^1$, all vertices on $p$ are not in $\mathbf{V}^1$. Since $p$ can be blocked by $\mathbf{V}^1 \cup \mathbf{V}^2$, at least a vertex on $p$ is in $\mathbf{V}^2$, thus proving the lemma. For \textbf{ii)}, since $p$ can not be blocked by $\mathbf{V}^1$, $\forall\, V_i \in p$, we have: if $V_i$ is a non-collider on $p$, $V_i \not \in \mathbf{V}^1$; otherwise $V_i$ is a collider on $p$, $V_i \in \mathbf{V}^1$ or $\exists\, V_j\in \mathbf{De}(V_i)$ such that $V_j \in \mathbf{V}^1$, thus in the set $\mathbf{V}^1 \cup \mathbf{V}^2$. Therefore, the set $\mathbf{V}^2$ must contain a non-collider on $p$, otherwise, $p$ will not be blocked by $\mathbf{V}^1 \cup \mathbf{V}^2$.
\end{proof}

\noindent\textbf{Definition \ref{def:equi}.} Consider a general causal graph $G$ over an output $Y$ and covariates $\mathbf{X}$. Let $\sim_G$ be an equivalence relation on all subsets of $\{1,...,\mathrm{dim}(X)\}$. We say $S^i \sim_G S^j$ if $\exists \, S^{ij} \subseteq S^i\cap S^{j}$ such that:
\begin{align}
    \label{eq:sim-G-def}
    Y \ind_{G}  \mathbf{X}_{(S^{ij})^c} | \mathbf{X}_{S^{ij}}, \text{ where } (S^{ij})^c:=(S^i \cup S^j) \backslash S^{ij}.
\end{align}

\begin{proof}
    It is obvious that the $\sim_G$ is reflective ($S^i \sim_G S^i$) and symmetric ($S^i \sim_G S^j \Rightarrow S^j \sim_G S^i$). In the following, we will show it is also transitive, \emph{i.e.}, $S^i \sim_G S^j, S^j \sim_G S^k \Rightarrow S^i \sim_G S^k$. We show this by constructing an intersection set $S^{ik} \subseteq S^i \cap S^k$ such that $Y \ind_G \mathbf{X}_{(S^{ik})^c} | \mathbf{X}_{S^{ik}}$.

    Since $S^i \sim_G S^j$, we have $\exists\, S^{ij}$ s.t. $Y \ind_G \mathbf{X}_{(S^{ij})^c} | \mathbf{X}_{S^{ij}}$. Similarly for $S^j \sim_G S^k$, we have $\exists\, S^{jk}$ s.t. $Y \ind_G \mathbf{X}_{(S^{jk})^c}| \mathbf{X}_{S^{jk}}$. In the following, we will construct the intersection set $S^{ik}$ from $S^{ij} \cap S^{jk}$.

    \begin{figure}[h]
    \centering
    \includegraphics[width=.4\textwidth]{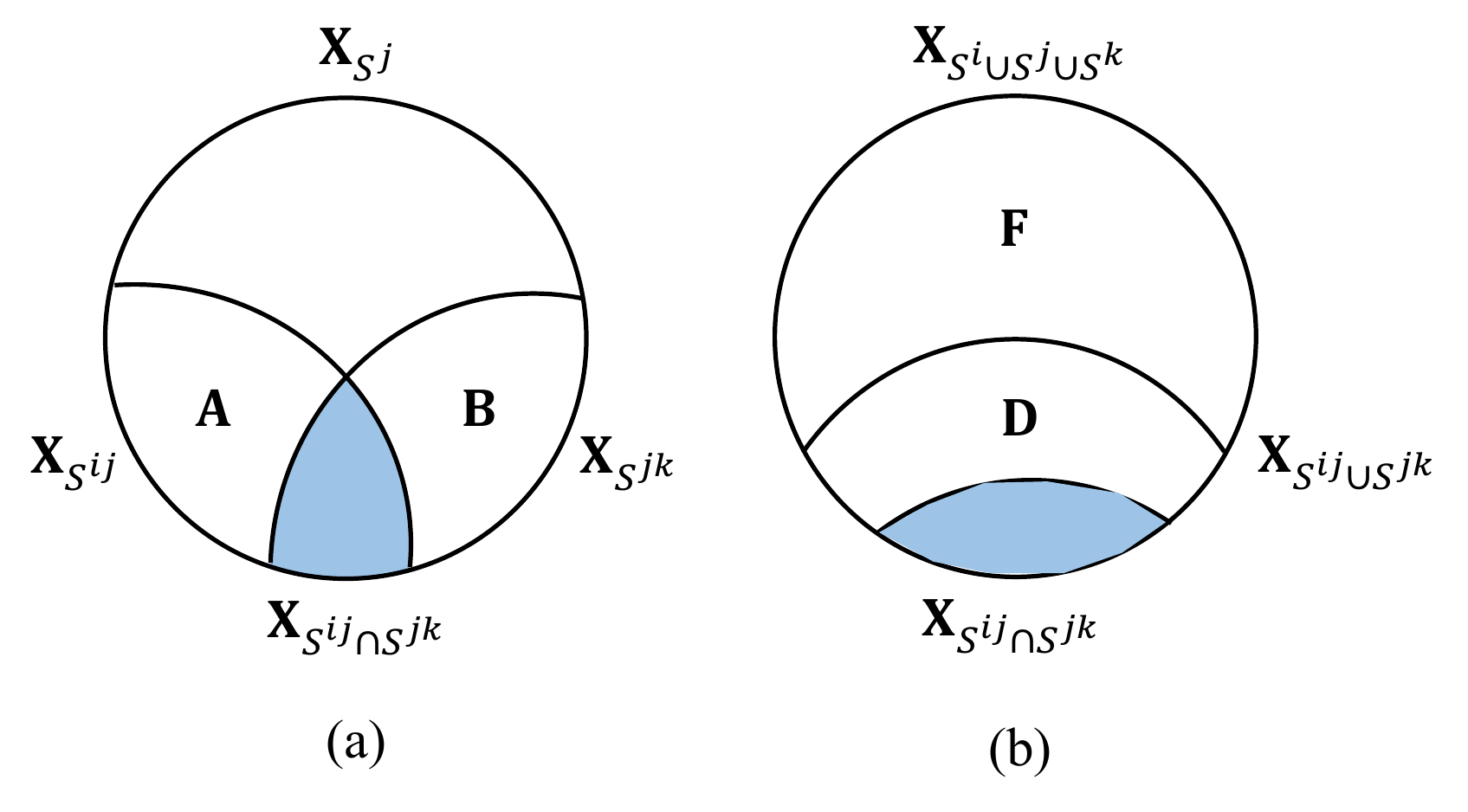}
    \caption{Illustration of union and intersection of $\mathbf{X}_{S_{ij}}$ and $\mathbf{X}_{S_{jk}}$.}
    \label{fig: subsets}
    \end{figure}
    
    Denote $\mathbf{A}:=\mathbf{X}_{S^{ij}\backslash(S^{ij}\cap S^{jk})}$ and $\mathbf{B}:=\mathbf{X}_{S^{jk}\backslash(S^{ij}\cap S^{jk})}$, as shown by Fig.~\ref{fig: subsets} (a). We first show $Y \ind_{G} \mathbf{A} | \mathbf{X}_{S^{ij}\cap S^{jk}}$ and $Y \ind_{G} \mathbf{B} | \mathbf{X}_{S^{ij}\cap S^{jk}}$. We show this by proving that any path between $Y$ and $\mathbf{A}$ (similarly $\mathbf{B}$) can be blocked by $\mathbf{X}_{S^{ij}\cap S^{jk}}$. Prove by contradiction. Suppose there is a path $p_0:=<Y,X_1,...,X_{l_0}>$ between $Y$ and $X_{l_0}\in \mathbf{A}$ such that $p_0$ can not be blocked by $\mathbf{X}_{S^{ij} \cap S^{jk}}$. We have $p_0$ can be blocked by the set $\mathbf{X}_{S^{jk}}$. This is because $X_{l_0} \in \mathbf{A} \subseteq \mathbf{X}_{S^j \backslash S^{jk}} \subseteq \mathbf{X}_{(S^{jk})^c}$ and $Y\ind_G \mathbf{X}_{(S^{jk})^c} | \mathbf{X}_{S^{jk}}$. Therefore, by Lemma \ref{lemma: union d/m-separation}, the set $\mathbf{B}=\mathbf{X}_{S^{jk}} \backslash \mathbf{X}_{S^{ij} \cap S^{jk}}$ contains a non-collider denoted as $X_{l_1}$ on $p_0$. Hence, we have a subpath of $p_0$, \emph{i.e.}, $p_1:=<Y,X_1,...,X_{l_1}>$ between $Y$ and $X_{l_1}\in \mathbf{B}$ such that $p_1$ can not be blocked by $\mathbf{X}_{S^{ij}\cap S^{jk}}$. Here, we have $p_1$ can be blocked by the set $\mathbf{X}_{S^{ij}}$. This is because $X_{l_1} \in \mathbf{B} \subseteq \mathbf{X}_{S^j \backslash S^{ij}} \subseteq \mathbf{X}_{(S^{ij})^c}$ and $Y \ind_G \mathbf{X}_{(S^{ij})^c} | \mathbf{X}_{S^{ij}}$. Therefore, by Lemma \ref{lemma: union d/m-separation}, the set $\mathbf{A}=\mathbf{X}_{S^{ij}} \backslash \mathbf{X}_{S^{ij}\cap S^{jk}}$ contains a non-collider denoted as $X_{l_2}$ on $p_1$. Repeating like this, we have either $X_1 \in \mathbf{A}\subseteq \mathbf{X}_{(S^{jk})^c}$ or $X_1 \in \mathbf{B}\subseteq \mathbf{X}_{(S^{ij})^c}$. Since $X_1$ is adjacent to $Y$, this contradicts with $Y \ind_G \mathbf{X}_{(S^{jk})^c}|\mathbf{X}_{S^{jk}}$ or $Y \ind_G \mathbf{X}_{(S^{ij})^c} | \mathbf{X}_{S^{ij}}$.

    Further, denote $\mathbf{D}:=\mathbf{X}_{(S^{ij}\cup S^{jk})\backslash(S^{ij}\cap S^{jk})}$ and $\mathbf{F}:=\mathbf{X}_{(S^i \cup S^j \cup S^k)\backslash (S^{ij} \cap S^{jk})}$, as shown in Fig.~\ref{fig: subsets} (b). We have shown $Y \ind_G \mathbf{D} | \mathbf{X}_{S^{ij}\cap S^{jk}}$ by combining the statements $Y \ind_{G} \mathbf{A} | \mathbf{X}_{S^{ij}\cap S^{jk}}$ and $Y \ind_{G} \mathbf{B} | \mathbf{X}_{S^{ij}\cap S^{jk}}$. Next, we will show $Y \ind_G \mathbf{F} | \mathbf{X}_{S^{ij}\cap S^{jk}}$. This means we can construct the intersection set $S^{ik}:=(S^{ij}\cap S^{jk})\subseteq (S^i \cap S^k)$ such that $Y\ind_G \mathbf{X}_{(S^{ik})^c} | \mathbf{X}_{S^{ik}}$, and hence proving $S^i \sim_G S^k$ by definition.

    We show this by proving that any path between $Y$ and $\mathbf{F}$ can be blocked by $\mathbf{X}_{S^{ij}\cap S^{jk}}$. Prove by contradiction. Suppose there is a path $p_0:=<Y,X_1,...,X_{l_0}>$ between $Y$ and $X_{l_0} \in \mathbf{F}$ such that $p_0$ can not be blocked by $\mathbf{X}_{S^{ij}\cap S^{jk}}$. We have $p_0$ can be blocked by $\mathbf{X}_{S^{ij}}$ or $\mathbf{X}_{S^{jk}}$. This is because we have either $X_{l_0} \in \mathbf{F} \subseteq \mathbf{X}_{(S^{ij})^c}$ or $X_{l_0} \in \mathbf{F} \subseteq \mathbf{X}_{(S^{jk})^c}$ and $Y \ind_G \mathbf{X}_{(S^{ij})^c} | \mathbf{X}_{S^{ij}}$, $Y \ind_G \mathbf{X}_{(S^{jk})^c} | \mathbf{X}_{S^{jk}}$. Without loss of generality, we consider $X_{l_0} \in \mathbf{X}_{(S^{ij})^c}$ and $p_0$ can be blocked by  $\mathbf{X}_{S^{ij}}$. By Lemma~\ref{lemma: union d/m-separation}, the set $\mathbf{X}_{S^{ij}\backslash(S^{ij}\cap S^{jk})}$ contains a non-collider denoted as $X_{l_1}$ on $p_1$. Hence, we have a subpath of $p_0$, \emph{i.e.}, $p_1:=<Y,X_1,...,X_{l_1}>$ between $Y$ and $X_{l_1}\in \mathbf{X}_{S^{ij}\backslash(S^{ij}\cap S^{jk})}$ such that $p_1$ can not be blocked by $\mathbf{X}_{S^{ij}\cap S^{jk}}$. This contradicts with the statement $Y\ind_G \mathbf{D} | \mathbf{X}_{S^{ij}\cap S^{jk}}$,  because $\mathbf{X}_{S^{ij}\backslash (S^{ij}\cap S^{jk})} \subseteq \mathbf{X}_{(S^{ij}\cup S^{jk})\backslash (S^{ij}\cap S^{jk})}=\mathbf{D}$.
    
    To conclude, we have proved $\sim_G$ is reflective, symmetric, and transitive. Hence, $\sim_G$ is a legitimate equivalence relation.
\end{proof}

\subsection{Proof of Prop.~\ref{prop:recover}: Correctness of Alg.~\ref{alg: recover g-equivalence}}

\noindent\textbf{Proposition~\ref{prop:recover}.} \emph{For each input graph that is Markov equivalent to the ground-truth graph $G$, Alg.~\ref{alg: recover g-equivalence} can correctly recover $\mathbf{Pow}(S) / \!\sim_G$.}

\begin{proof}
    We first show, under Asm~.\ref{eq:assum-dag}, \ref{asm:faithfulness}, all Markovian equivalent graphs have the same equivalence classes. Specifically, Markovian equivalent graphs have the same \emph{d}-separation and \emph{m}-separation \cite{pearl2009causality,zhang2008completeness}. Because the equivalence relation is defined on \emph{d}-separation and \emph{m}-separation, they also have the same equivalence classes.
    
    We then introduce some notions that will be used in the proof. We use the unbolded letter, \emph{e.g.}, $S^i, T^i$, to denote variable sets, and the \textbf{bolded} letter, \emph{e.g.}, $\mathbf{Pow}(S), \mathbf{R}^i$, to denote sets whose elements are variable sets. Recall that the equivalence class of subset $S^i$ is denoted as $\text{\textbf{[}}S^i\text{\textbf{]}}:=\{S^j|S^j \sim_G S^i\}$. We say a vertex $X_i$ is $Y$'s $l$-neighbour if the shortest path between $Y$ and $X_i$ has length $l$. As a special case, say $X_i$ as the $0$-neighbour of $Y$ if there is no path between $Y$ and $X_i$. Define $l_G=0$ if $\mathrm{Neig}(Y)=\emptyset$, and $l_G=1,2,...,l$ if $Y$ has $1,2,...,l$-neighbours, respectively. 
    
    In the following, we will prove the correctness of Alg.~\ref{alg: recover g-equivalence} by induction on $l_G$.    

    \textbf{Base.} {\small $l_G=0 \Rightarrow \mathrm{Neig}(Y)=\emptyset$. Hence, any two subsets $S^i, S^j \subseteq S$ are equivalent and $\mathbf{Pow}(S)/\!\sim_G = \{\text{\textbf{[}}S\text{\textbf{]}}\} = \text{\textbf{recover}}(G)$.}

    \textbf{Induction hypothesis.} Suppose any graph $G$ with $l_G \leq l$ has $\mathbf{Pow}(S)/\!\sim_G = \text{\textbf{recover}}(G)$.

    \textbf{Step.} Consider $G$ with $l_G=l+1$. 
    
    Denote all the $2^{\mathrm{deg}(Y)}$ subsets of $\mathrm{Neig}(Y)$ as $T^1,T^2,...,T^{2^{\mathrm{deg}(Y)}}$. We can partition the $\mathbf{Pow}(S)$ into $2^{\mathrm{deg}(Y)}$ sets $\mathbf{R}^1, \mathbf{R}^2,..., \mathbf{R}^{2^{\mathrm{deg}(Y)}}$, with $\mathbf{Pow}(S)=\cup_{i=1}^{2^{\mathrm{deg}(Y)}} \mathbf{R}^i$, $\mathbf{R}^i\cap \mathbf{R}^j=\emptyset$ for $i\neq j$, and $\mathbf{R}^i:=\{S^i|S^i \subseteq S, S^i\cap \mathrm{Neig}(Y)=T^i\}$. Now, consider a subset $S^i\in \mathbf{R}^i$ and another subset $S^j\in \mathbf{R}^j$, we have $S^i \not \sim_G S^j$, because $S^i \cap \mathrm{Neig}(Y) \neq S^j \cap \mathrm{Neig}(Y)$. Therefore, the equivalence classes in $\mathbf{Pow}(S)$ is the union of the equivalence classes in $\mathbf{R}^1,\mathbf{R}^2,...,\mathbf{R}^{2^{\mathrm{deg}(Y)}}$. Formally:
    \begin{equation}
        \mathbf{Pow}(S)/\!\sim_{G}=\cup_{i=1}^{2^{\mathrm{deg}(Y)}} \mathbf{R}_i/\!\sim_{G}.
    \label{eq: recover 1}
    \end{equation}
    A distinct virtue of the MAG constructed in Alg.~\ref{alg: recover g-equivalence} in line-7 is that it can represent \emph{d}-separation and \emph{m}-separation when selection and latent variables exist. Specifically, given any causal graph $G$ over $\mathbf{V}=\mathbf{O}\cup \mathbf{L} \cup \mathbf{C}$, the MAG $M_G$ over $\mathbf{O}$, with $\mathbf{C}$ as the selection set and $\mathbf{L}$ as the latent set, satisfies that for any disjoint subsets $\mathbf{A},\mathbf{B},\mathbf{Z} \subseteq \mathbf{O}$, $\mathbf{A} \ind_{M_G} \mathbf{B} | \mathbf{Z}$ if and only if $\mathbf{A} \ind_{G} \mathbf{B} | \mathbf{Z} \cup \mathbf{C}$ \cite{zhang2008completeness}. Therefore, for the $M_G$ over $S\backslash \mathrm{Neig}(Y)$, with $S^\prime$ as the selection set, $\mathrm{Neig}(Y) \backslash S^\prime$ as the latent set, constructed in line-7, we have $S^i, S^j \subseteq S\backslash \mathrm{Neig}(Y)$ are equivalent in $M_G$ if and only if $S^i\cup S^\prime$, $S^j \cup S^\prime$ are equivalent in $G$. 
    
    Formally, denote $\mathbf{R}^{i\prime}$ as the set attained via removing $T^i$\footnote{Note that the $T^i$ here equals the $S^\prime$ in the $i$-th loop, in line-6 of Alg.~\ref{alg: recover g-equivalence}.} from each element of $\mathbf{R}^i$, denote $M_G^i$ as the MAG constructed in line-7 with $T^i$ as the selection set, and $\mathbf{P}^i$ as the set attained via adding $T^i$ to each subset in each equivalence class in $\mathbf{R}^{i\prime}/\!\sim_{M_G^i}$, we have:
    \begin{equation}
    \mathbf{R}_i /\! \sim_{G}= \mathbf{P}_i.
    \label{eq: recover 2}
    \end{equation}
    Then, by Eq.~\ref{eq: recover 1} and Eq.~\ref{eq: recover 2}, we have:
    \begin{equation}
    \mathbf{Pow}(S)/\!\sim_{G} = \cup_{i=1}^{2^{\mathrm{deg}(Y)}} \mathbf{P}_i.
    \end{equation}
    Since $l_{M_{G}^i} \leq l$, by the induction hypothesis, we have $\mathbf{P}_i = \text{\textbf{recover}}(M_{G}^i)$. According to lines 9 and 10 of the Alg.~\ref{alg: recover g-equivalence}, we have $\mathbf{Pow}(S)/\!\sim_G = \cup_{i=1}^{2^{\mathrm{deg}(Y)}} \text{\textbf{recover}}(M_G^i) = \text{\textbf{recover}}(G)$.
\end{proof}
\newpage
\section{Complexity analysis}
\label{sec: appendix complexity}

We first introduce some notations and definitions that will be used in this section. We omit the subscript and denote $G_{S}$ as $G$, $d_S$ as $d$ for brevity. We use the subscript $X_i, X_j \in \mathbf{X}$, $V_i, V_j \in \mathbf{V}$ to denote variables and vertices; the superscript $S^\prime, S^{\prime \prime}\subseteq S$, $\mathbf{X}^i, \mathbf{X}^j \subseteq \mathbf{X}$, and $\mathbf{V}^i, \mathbf{V}^j \subseteq \mathbf{V}$ to denote variable and vertex subsets. For a vertex $V_i$, denote $\mathrm{deg}(V_i):=|\mathbf{Neig}(V_i)|$ as its degree. Unless otherwise specified, the causal graph in this section can be either a DAG or a Maximal Ancestral Graph (MAG). In a MAG, we use $\mathbf{C}$, $\mathbf{L}$ to denote the selection set and the latent set, respectively. In a causal graph, we use $*\!-\!*$ to denote an edge with any possible orientation ($\to, \leftarrow$ for a DAG; $\to, \leftarrow, \leftrightarrow, - $ for a MAG). 

A chunk vertex is a vertex of degree $2$. Recall a chain vertex if a vertex of degree $\leq 2$. A path $p:=<V_1,V_2,...,V_l>$ is a sequence of distinct vertices with $V_i$ being adjacent to $V_{i+1}$ for $i=1,2,...,l-1$. The length of the path $p$ is $l$. The path $p$ can be \emph{blocked} by a vertex set $\mathbf{V}^i$ means it can be \emph{d}-separated by $\mathbf{V}^i$ when $G$ is a DAG, and \emph{m}-separated by $\mathbf{V}^\prime$ when $G$ is a MAG. A tree is an undirected graph in which any two vertices are connected by exactly one path. In a rooted tree, the distance of a vertex $V_i$ to the root is the length of the path between them. The parent of a vertex $V_i$ is the vertex connected to $V_i$ on the path to the root. A child of a vertex $V_i$ is a vertex of which $V_i$ is the parent. A leaf is a vertex with no child. An internal vertex is a vertex that is not a leaf.

We represent time complexity with the following notions:
\begin{enumerate}
    \item the Big-O notation $f(d)=O(g(d))$, which means $f$ is bounded above by $g$ asymptotically, \emph{i.e.}, $\forall k>0, \exists\, d_0, \forall d>d_0, |f(d)| \leq kg(d)$.
    \item the Small-$\omega$ notation $f(d)=\omega(g(d))$, which means $f$ dominates $g$ asymptotically, \emph{i.e.}, $\forall k>0, \exists\, d_0, \forall d>d_0, f(d) > kg(d)$.
    \item the Big-$\Theta$ notation $f(d)=\Theta(g(d))$, which means $f$ and $g$ have asymptotically the same rank, \emph{i.e.}, $\exists\, k_1>0, \exists\, k_2>0, \exists\, d_0, \forall d>d_0, k_1 g(d)\leq |f(d)| \leq k_2 g(d)$.
    \item $f=\mathrm{P}(d)$ if $f$ has a polynomial complexity w.r.t. $d$, $f=\mathrm{NP}(d)$ if the complexity is larger than any polynomial function.
\end{enumerate}

\subsection{Complexity of Alg.~\ref{alg: recover g-equivalence}: Equivalence classes recovery}
\label{sec:appendix complexity alg2}

We first introduce the following lemma, which studies the number of leaf vertices in a tree.

\begin{proposition}[Number of leaf vertices in a tree]
\label{lemma: property of tree}
In a tree, denote $d_L$ as the number of leaf vertices, $d_{>2}$ as the number of non-chain vertices. Then, we have $d_L \geq d_{>2}+2$.
\end{proposition}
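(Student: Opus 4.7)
The plan is to prove this via a direct counting (handshake) argument on the tree. Let $n$ denote the number of vertices in the tree. Assuming $n \geq 2$ so that every vertex has degree at least $1$, partition the vertices by degree: let $d_L$ be the number of leaves (degree $1$), $d_2$ the number of degree-$2$ vertices, and $d_{>2}$ the number of non-chain vertices (degree $\geq 3$). Then $n = d_L + d_2 + d_{>2}$.

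Next I would apply two standard facts. First, a tree on $n$ vertices has exactly $n-1$ edges, so by the handshake lemma the sum of degrees equals $2(n-1)$. Second, splitting the degree sum by the partition above and using that each non-chain vertex contributes at least $3$ to the degree sum,
\begin{equation*}
    2(n-1) \;=\; \sum_{v} \deg(v) \;\geq\; d_L + 2 d_2 + 3 d_{>2}.
\end{equation*}
Substituting $n = d_L + d_2 + d_{>2}$ on the left-hand side and simplifying yields
\begin{equation*}
    2 d_L + 2 d_2 + 2 d_{>2} - 2 \;\geq\; d_L + 2 d_2 + 3 d_{>2},
\end{equation*}
which rearranges to $d_L \geq d_{>2} + 2$, as desired.

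There is no real obstacle here; the argument is a one-line consequence of the tree edge count and the handshake lemma. The only minor point to dispatch cleanly is the degenerate case $n = 1$ (a single isolated vertex, which has $d_L = 0$, $d_{>2} = 0$), which I would exclude by the standing convention that a tree considered in Alg.~\ref{alg: recover g-equivalence} has at least two vertices; alternatively, one can note that the inequality $d_L \geq d_{>2} + 2$ is vacuous or trivially handled in that boundary case. The bound is tight, attained for example by a path (where $d_L = 2$ and $d_{>2} = 0$) or a star on $k+1$ vertices (where $d_L = k$ and $d_{>2} = 1$ as soon as $k \geq 3$).
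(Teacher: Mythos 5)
Your proof is correct and follows essentially the same route as the paper's: both apply the handshake lemma with the edge count $n-1$ of a tree and lower-bound the degree sum by $d_L + 2d_2 + 3d_{>2}$ to conclude $d_L \geq d_{>2}+2$. Your explicit treatment of the degenerate single-vertex case and the tightness examples are fine additions but not substantive differences.
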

\begin{proof}
    Denote $d_T$ as the number of all vertices, then, by the handshaking lemma, we have:
    \begin{equation}
        d_L+2(d_T-d_L-d_{>2}) + 3d_{>2} \leq \sum_{i=1}^{d_T} \mathrm{deg}(V_i) = 2(d_T-1),
    \end{equation}
    which indicates $d_L \geq d_{>2}+2$.
\end{proof}

\begin{proposition}
\label{appx:complexity alg recover}
     The time complexity of Alg. \ref{alg: recover g-equivalence} is $\Theta(N_G)$, hence it can be bounded by $O(N_G)$.
\end{proposition}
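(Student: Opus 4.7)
The plan is to view the execution of Algorithm~\ref{alg: recover g-equivalence} as a recursion tree whose leaves are in bijection with the equivalence classes that appear in the final output, and then bound the total number of recursion nodes in terms of the number of leaves. Concretely, let $T(G)$ denote the number of equivalence classes returned by \textbf{recover}($G$) and let $C(G)$ denote its runtime. The base case ($\mathbf{Neig}(Y)=\emptyset$, line~2--3) returns exactly one equivalence class in $O(1)$ time; the recursive case (lines~5--11) makes $2^{\deg(Y)}$ recursive sub-calls corresponding to the $2^{\deg(Y)}$ choices of $S'\subseteq \mathbf{Neig}(Y)$, and then performs an append of $S'$ to each returned class before collecting them. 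By Prop.~\ref{prop:recover}, $T(G)=N_G$, which will anchor both bounds.

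For the upper bound, I would set up the straightforward recurrence
\begin{equation*}
C(G)=O(1)+\sum_{S'\subseteq\mathbf{Neig}(Y)}\bigl(C(M_G(S'))+O(T(M_G(S')))\bigr),
\end{equation*}
where the $O(T(M_G(S')))$ term accounts for appending $S'$ (represented succinctly, e.g.\ as a pointer/tag) to each of the returned equivalence classes in lines~9--10. In the associated recursion tree, the leaves are exactly the base-case invocations, each of which contributes a single equivalence class to the final output, so the number of leaves equals $T(G)=N_G$. The key combinatorial step is: at every internal node, $\mathbf{Neig}(Y)\neq\emptyset$ so $\deg(Y)\geq 1$, hence the node has $2^{\deg(Y)}\geq 2$ children. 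A standard fact about rooted trees in which every internal node has at least two children is that the number of internal nodes is at most (leaves $-\,1$); thus the total number of nodes is at most $2N_G-1$. Since the non-recursive, non-append work at each node is $O(1)$ and the aggregate append work telescopes into a total of $O(N_G)$ across the whole tree, we conclude $C(G)=O(N_G)$.

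The matching lower bound $C(G)=\Omega(N_G)$ is immediate: the algorithm must write down each of the $N_G$ equivalence classes in its output, which requires at least $N_G$ time steps. Combining the two bounds yields $C(G)=\Theta(N_G)$, which implies the stated $O(N_G)$ complexity.

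The main obstacle is purely bookkeeping: making the appending step in lines~9--10 genuinely $O(1)$ per returned class rather than $O(|S'|)$ or $O(|S\setminus\mathbf{Neig}(Y)|)$. This is handled by representing each equivalence class by a representative plus a list of ``selection tags'' collected along the path from the root of the recursion tree, so that ``adding $S'$'' is simply attaching a pointer to the current call's selection set rather than physically enlarging every member subset. With this representation choice, the amortization argument above goes through cleanly and gives the $\Theta(N_G)$ bound.
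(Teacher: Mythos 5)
Your proof is correct and follows essentially the same route as the paper: both arguments identify the runtime with the size of the recursion tree, note that leaves are in bijection with the equivalence classes (so there are $N_G$ leaves), and use the fact that every internal node spawns $2^{\deg(Y)}\geq 2$ children to bound the total number of nodes by roughly $2N_G$, giving $\Theta(N_G)$. Your extra amortization/representation argument for the appending in lines 9--10 is a minor bookkeeping refinement that the paper leaves implicit, not a genuinely different approach.
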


\begin{proof} 
Alg.~\ref{alg: recover g-equivalence} is a recursive algorithm, its complexity is decided by the size of the recursion tree. 

Specifically, in the recursion tree of Alg.~\ref{alg: recover g-equivalence}, the number of all vertices $d_T$ equals to the complexity of Alg. \ref{alg: recover g-equivalence}, while the number of leaf vertices $d_L$ equals to $N_G$. Each internal vertex in the recursion tree has at least two children because the for-loop in line 6 executes at least twice. Since each internal vertex also has a parent, its degree $>2$. Then, by Lemma~\ref{lemma: property of tree}, $d_T$ is at most twice as $d_L$. Hence, the complexity of Alg.~\ref{alg: recover g-equivalence} is $\Theta(N_G)$.
\end{proof}

\clearpage
\subsection{Preliminary results for complexity analysis}

\begin{lemma}
If $f(d)=\omega(\mathrm{log}(d))$, then $2^{f(d)}=\omega(d^m)$ for any constant $m$. In other words, $2^{f(d)}=\mathrm{NP}(d)$.
\label{lemma: property of small omega}
\end{lemma}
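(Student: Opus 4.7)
The plan is to unpack the two asymptotic definitions and reduce the conclusion to a single inequality that the hypothesis directly supplies. Throughout, fix a base for $\log$ (say base $2$); the choice does not affect the argument because different bases differ by a multiplicative constant absorbed into the $\omega$-definition. Given an arbitrary constant $m$ and an arbitrary $k' > 0$, the goal is to exhibit $d_0'$ such that $2^{f(d)} > k' d^m$ for every $d > d_0'$. Taking $\log_2$ of both sides reduces this to
\begin{equation*}
    f(d) \;>\; m \log_2(d) + \log_2(k').
\end{equation*}

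Next I would invoke the hypothesis $f(d) = \omega(\log(d))$ with the specific constant $k := m+1$: there is some $d_0$ such that $f(d) > (m+1)\log_2(d)$ for all $d > d_0$. Rewriting the right-hand side as $m \log_2(d) + \log_2(d)$ shows that whenever additionally $\log_2(d) > \log_2(k')$, i.e.\ $d > k'$, one has
\begin{equation*}
    f(d) \;>\; m \log_2(d) + \log_2(d) \;>\; m \log_2(d) + \log_2(k'),
\end{equation*}
as required. Setting $d_0' := \max(d_0, \lceil k' \rceil)$ therefore yields the desired bound $2^{f(d)} > k' d^m$ for all $d > d_0'$. Since $m$ and $k'$ were arbitrary, this establishes $2^{f(d)} = \omega(d^m)$ for every constant $m$, which is exactly the statement $2^{f(d)} = \mathrm{NP}(d)$.

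There is no serious obstacle here; the argument is a direct application of the $\omega$-definitions. The only mildly delicate point is handling the additive constant $\log_2(k')$ on the right-hand side, which is dispatched by choosing the slack parameter $k = m+1$ rather than $k = m$ so that a spare factor of $\log_2(d)$ remains to dominate it. I would present the proof in this ``choose $k = m+1$ and then take $d$ large enough'' form for clarity.
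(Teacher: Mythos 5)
Your proposal is correct and follows essentially the same route as the paper's proof: invoke the $\omega(\log d)$ hypothesis with slack constant $m+1$, split off the spare $\log d$ term to dominate the additive constant $\log k'$, and exponentiate. If anything, your choice of threshold $d_0'=\max(d_0,\lceil k'\rceil)$ is stated a bit more carefully than the paper's $\max\{d_0,\log(k)\}$.
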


\begin{proof}
By the definition of $f(d)=\omega(\mathrm{log}(d))$, $\forall k+1 >0, \exists\, d_0$ such that $\forall d>d_0 f(d)>(k+1)(\mathrm{log}(d))=k\mathrm{log(d)}+log(d)$. As a result, $\forall k>0, \forall m+1>0, \exists\, d_1:=\max\{d_0,\mathrm{log}(k)\}$ such that $\forall d>d_1, f(d)>m\mathrm{log}(d)+\mathrm{log}(d)>m\mathrm{log}(d)+\mathrm{log}(k)$, which is equivalent to have $2^{f(d)}>kn^m$. Thus, we have $2^{f(d)}=\omega(d^m)$ by definition.
\end{proof}

\begin{claim}[Chain]
For any causal graph $G$ whose skeleton is a chain, \emph{i.e.}, $Y*\!-\!*X_{d}*\!-\!*X_{d-1}*\!-\!*\cdots *\!-\!*X_1$, we have $N_G=d+1$.
\label{example: chain}
\end{claim}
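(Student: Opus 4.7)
The plan is to induct on $d$, leveraging Alg.~\ref{alg: recover g-equivalence}, whose correctness (Prop.~\ref{prop:recover}) reduces the count of equivalence classes to a recursion over MAGs obtained by marginalizing or selecting on $\mathbf{Neig}(Y)$. The base case $d=1$ is immediate: since $Y$ is adjacent to $X_1$, the dependence $Y\not\ind X_1$ forces $\emptyset$ and $\{X_1\}$ into distinct classes, giving $N_G=2$.

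For the inductive step, in a chain of length $d$ we have $\mathbf{Neig}(Y)=\{X_d\}$, so Alg.~\ref{alg: recover g-equivalence} iterates over the two subsets $S'\in\{\emptyset,\{X_d\}\}$. Each iteration produces a MAG on $\{Y\}\cup\{X_1,\ldots,X_{d-1}\}$ whose skeleton is obtained either by marginalizing $X_d$ (when $S'=\emptyset$) or by selecting on $X_d$ (when $S'=\{X_d\}$). Which of these two operations ``severs'' $Y$ from the rest of the chain depends on whether $X_d$ is a collider on the sub-path $Y *\!-\!* X_d *\!-\!* X_{d-1}$: if $X_d$ is a non-collider, selecting on it blocks the unique path and isolates $Y$, whereas marginalizing it merges the two incident edges into a single edge $Y *\!-\!* X_{d-1}$ and preserves a chain skeleton on $d-1$ non-$Y$ vertices; if $X_d$ is a collider, the roles swap (marginalization disconnects, selection opens the collider and extends the chain).

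In either configuration, one choice of $S'$ yields a MAG with $\mathbf{Neig}(Y)=\emptyset$ (contributing a single equivalence class from the base of the recursion), while the other yields a chain skeleton on $d-1$ non-$Y$ vertices (contributing $d$ classes by the inductive hypothesis). Since the classes contributed by the two iterations differ in whether $X_d$ belongs to the subset, they are disjoint after the algorithm appends $S'$, so $N_G=d+1$, closing the induction.

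The principal subtlety I anticipate is that the inductive hypothesis must be stated for \emph{any} causal graph (DAG or MAG) whose skeleton is a chain, since the recursive calls operate on MAGs that may carry $\leftrightarrow$ or $-$ edges introduced by marginalization or selection. Once one observes that the entire case split rests only on the local collider status of $X_d$ on the sub-path $Y *\!-\!* X_d *\!-\!* X_{d-1}$, a property well-defined for any MAG, the induction closes without further difficulty.
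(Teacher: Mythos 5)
Your proposal is correct and follows essentially the same route as the paper's proof: induction on the chain length via Alg.~\ref{alg: recover g-equivalence}, with the case split on whether $Y$'s unique neighbour is a collider, one branch isolating $Y$ (contributing a single class) and the other yielding a chain with one fewer covariate (contributing $d$ classes by the inductive hypothesis), giving $N_G=d+1$. Your remark that the inductive hypothesis must range over arbitrary MAG-type chains is the same generality the paper already builds in by stating the claim for any causal graph with edges $*\!-\!*$.
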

\begin{proof} We prove this claim with Alg.~\ref{alg: recover g-equivalence} and an induction on $d$.

\textbf{Base.} $d=1$, $N_G=2=d+1$.

\textbf{Induction hypotheses.} Suppose $N_G=d+1$ holds for any chain with $d$ vertices.

\textbf{Step.} For a chain with $d+1$ vertices. We consider the case when $X_{d+1}$ is a collider (similarly a non-collider). With $\{X_{d+1}\}$ as the selection set, the induced MAG is $Y*\!-\!*X_{d}*\!-\!*X_{d-1}*\!-\!*\cdots*\!-\!*X_1$, which is a chain with $d$ vertices and has $d+1$ equivalence classes by the induction hypotheses. With $\emptyset$ as the selection set, the induced MAG is $Y \ \ X_d*\!-\!*X_{d-1}*\!-\!*\cdots*\!-\!*X_1$ and has $1$ equivalence class. Therefore, we have $N_G=d+1+1=d+2$.
\end{proof} 

\begin{claim}[Circle]
For any causal graph $G$ whose skeleton is a circle, \emph{i.e.}, $Y*\!-\!*X_d*\!-\!*X_{d-1}*\!-\!*\cdots*\!-\!*X_1$ and $Y*\!-\!*X_1$, we have $N_{G}=(d^2+d+2)/2=\Theta(d^2)$.
\label{example: circle}
\end{claim}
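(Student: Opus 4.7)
The plan is to prove this claim by strong induction on $d$ using Algorithm~\ref{alg: recover g-equivalence}, in the same recursive spirit as the proof of Claim~\ref{example: chain}. The base case is $d=2$, where the circle is a triangle with $S=\mathbf{Neig}(Y)=\{X_1,X_2\}$; since subsets with distinct intersections with $\mathbf{Neig}(Y)$ are automatically inequivalent, all four subsets of $S$ lie in different equivalence classes, giving $N_G=4=(2^2+2+2)/2$.

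For the inductive step with $d \geq 3$, we have $\mathbf{Neig}(Y) = \{X_1, X_d\}$, so Algorithm~\ref{alg: recover g-equivalence} iterates over the four subsets $S' \subseteq \{X_1, X_d\}$, each producing an induced MAG on the interior $\{X_2, \ldots, X_{d-1}\}$ together with $Y$. The geometric heart of the argument is to show that, regardless of how the circle is oriented, these four induced MAGs are, in some order: (i)--(ii) two chains of the form $Y*\!-\!*X_2*\!-\!*\cdots*\!-\!*X_{d-1}$ (one from each endpoint), (iii) a smaller circle $Y*\!-\!*X_2*\!-\!*\cdots*\!-\!*X_{d-1}*\!-\!*Y$ of length $d-2$, and (iv) the disjoint union of an isolated $Y$ with the chain $X_2*\!-\!*\cdots*\!-\!*X_{d-1}$. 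I would establish this by case-analysing on the $2 \times 2$ choices of whether $X_1$ is a collider on $Y*\!-\!*X_1*\!-\!*X_2$ and whether $X_d$ is a collider on $Y*\!-\!*X_d*\!-\!*X_{d-1}$, applying the standard MAG rules: conditioning on a collider opens the corresponding $Y$-path while conditioning on a non-collider blocks it; marginalizing a non-collider yields a direct $Y$-edge into the interior while marginalizing a collider (whose only interior neighbour points into it, so it has no observed descendants) severs the $Y$-path.

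Once the geometric step is in hand, the counting is immediate: Claim~\ref{example: chain} gives $d-1$ classes for each of (i) and (ii); the inductive hypothesis gives $C_{d-2} = ((d-2)^2+(d-2)+2)/2$ classes for (iii); and (iv) contributes a single class because $Y$ is $d$-separated from everything. Summing,
\begin{equation*}
N_G \;=\; 1 + 2(d-1) + C_{d-2} \;=\; \frac{2 + (4d-4) + (d^2-3d+4)}{2} \;=\; \frac{d^2+d+2}{2},
\end{equation*}
which is the claimed formula and is evidently $\Theta(d^2)$; the small edge case $d=3$ works uniformly because (iii) degenerates to the single-edge chain $Y*\!-\!*X_2$, for which Claim~\ref{example: chain} and the circle formula both give $C_1 = 2$. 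The main obstacle is the bookkeeping for the geometric step: matching, for each of the four orientation patterns at $X_1, X_d$, the four values of $S'$ to the four MAGs above. There are only four orientation patterns, and the map is symmetric under reflection of the circle, so the check is finite but requires care in handling how marginalization acts at a collider with no observed descendants. Everything else---the reduction to chains and smaller circles, and the final algebraic identity---is routine.
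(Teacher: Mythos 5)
Your proposal is correct and follows essentially the same route as the paper: a recursive application of Alg.~\ref{alg: recover g-equivalence} combined with Claim~\ref{example: chain} and induction on $d$. The only difference is granularity — the paper peels a single neighbour of $Y$ per step, giving the first-order recursion $N_{G_d}=d+N_{G_{d-1}}$ (a circle of size $d-1$ plus a chain of size $d-1$), whereas you process both neighbours at once as the algorithm does, giving $N_{G_d}=1+2(d-1)+N_{G_{d-2}}$; both recursions, with your (correct) case analysis of the four induced MAGs, yield $(d^2+d+2)/2$.
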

\begin{proof}
We prove the claim with Alg.~\ref{alg: recover g-equivalence} and Claim~\ref{example: chain}. Denote a circle with $d$ vertices as $G_d$.

Consider the case when $X_d$ is a collider (similarly a non-collider). With $\{X_{d+1}\}$ as the selection set, the induced MAG is $Y*\!-\!*X_{d}*\!-\!*\cdots*\!-\!*X_1$ and $Y*\!-\!*X_1$, \emph{i.e.}, a circle with $d-1$ vertices. With $\emptyset$ as the selection set, the induced MAG is $Y*\!-\!*X_1*\!-\!*\cdots *\!-\!* X_{d-1}$, \emph{i.e.}, a chain with $d-1$ vertices. Hence, we have $N_{G_d}=d+N_{G_{d-1}}$, which means $\{N_{G_d}\}_d$ is an arithmetic sequence and $N_{G_d}=\Theta(d^2)$.
\end{proof}

\begin{lemma}[Adding/deleting an edge]
For any causal graph $G$, adding an edge does not decrease $N_G$, deleting an edge does not increase $N_G$.
\label{lemma: add delete edges}
\end{lemma}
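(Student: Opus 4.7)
\textbf{Proof proposal for Lemma~\ref{lemma: add delete edges}.} The two statements (adding does not decrease $N_G$; deleting does not increase $N_G$) are logically equivalent by reversing the roles of the two graphs, so it suffices to prove just one, say the addition statement. The plan is to show that the equivalence relation $\sim_{G'}$ on the augmented graph $G'$ is a refinement of $\sim_G$, i.e., $S' \sim_{G'} S'' \Rightarrow S' \sim_G S''$. Since a refinement can only split equivalence classes (never merge them), this directly yields $N_{G'} \geq N_G$.

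The key step is a monotonicity lemma for separation: for any disjoint vertex subsets $\mathbf{A},\mathbf{B},\mathbf{Z}$, one has $\mathbf{A} \ind_{G'} \mathbf{B} \mid \mathbf{Z} \Rightarrow \mathbf{A} \ind_G \mathbf{B} \mid \mathbf{Z}$ whenever $G'$ is obtained from $G$ by adding one edge. Given this, if $S' \sim_{G'} S''$ there exists $S_\cap \subseteq S' \cap S''$ with $Y \ind_{G'} \mathbf{X}_{S_\cap^c} \mid \mathbf{X}_{S_\cap}$, hence by monotonicity $Y \ind_G \mathbf{X}_{S_\cap^c} \mid \mathbf{X}_{S_\cap}$, so $S' \sim_G S''$ by Def.~\ref{def:equi}.

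To establish the monotonicity lemma, pick any path $p$ in $G$ between $\mathbf{A}$ and $\mathbf{B}$; it is also a path in $G'$ since $G \subseteq G'$ edge-wise. Because the collider/non-collider status of a vertex on $p$ depends only on the two edges of $p$ incident to it (which lie in both $G$ and $G'$), this status is identical in the two graphs. Since $\mathbf{A} \ind_{G'} \mathbf{B} \mid \mathbf{Z}$, the path $p$ is blocked in $G'$, either by (i) a non-collider in $\mathbf{Z}$, or (ii) a collider lying outside $\mathbf{Z} \cup \mathbf{An}_{G'}(\mathbf{Z})$. Case (i) transfers verbatim to $G$. For case (ii), observe that $\mathbf{An}_G(\mathbf{Z}) \subseteq \mathbf{An}_{G'}(\mathbf{Z})$ since every directed path in $G$ is a directed path in $G'$; hence the collider is also outside $\mathbf{Z}\cup \mathbf{An}_G(\mathbf{Z})$ and blocks $p$ in $G$ as well. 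This covers both the DAG case ($d$-separation) and the MAG case ($m$-separation) uniformly.

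The main subtlety, and the only nontrivial point, is case (ii) in the MAG setting: adding an edge can enlarge the ancestor set, potentially turning a previously blocked collider into an open one in $G'$. The inclusion $\mathbf{An}_G(\mathbf{Z}) \subseteq \mathbf{An}_{G'}(\mathbf{Z})$ runs in exactly the right direction to handle this, so that a collider open in $G$ could be either open or closed in $G'$, but a collider that remains closed in $G'$ was necessarily already closed in $G$. No new obstacles arise; the symmetric ``deletion'' half follows by applying the monotonicity lemma with the roles of $G$ and $G'$ exchanged.
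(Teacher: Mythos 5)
Your proposal is correct and takes essentially the same route as the paper: both reduce the lemma to showing that $\sim_{G'}$ (with the added edge) refines $\sim_G$, which rests on the observation that a path of $G$ has the same collider/non-collider pattern in both graphs and that descendant/ancestor sets only grow under edge addition — the paper phrases this contrapositively (an unblocked path in $G$ stays unblocked in $G'$), while you phrase it as blocking in $G'$ implying blocking in $G$ via $\mathbf{An}_G(\mathbf{Z})\subseteq\mathbf{An}_{G'}(\mathbf{Z})$. The deletion half is handled identically in both, by viewing it as the inverse operation.
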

\begin{proof}
For a causal graph $G_0$, add an edge in it and call the resulting graph $G_1$ (which can also be viewed as deleting an edge in $G_1$ and getting a graph $G_0$). We prove $N_{G_0} \leq N_{G_1}$ by showing $\forall \, S^\prime, S^{\prime \prime}$, $S^\prime \not \sim_{G_0} S^{\prime \prime} \Rightarrow S^\prime \not \sim_{G_1} S^{\prime \prime}$.

Prove by contradiction. Suppose there are $S^\prime, S^{\prime \prime}$ such that $S^\prime \not \sim_{G_0} S^{\prime \prime}$ and $S^\prime \sim_{G_1} S^{\prime \prime}$. By $S^\prime \sim_{G_1} S^{\prime \prime}$, we have $\exists\, S_{\cap} \subseteq_{G_1} S^\prime \cap S^{\prime\prime}$ such that $Y \ind_{G_1} \mathbf{X}_{S_\cap^c}| \mathbf{X}_{S_\cap}$. Because adding an edge does not change the vertex sets, we have $S_\cap \subseteq_{G_0} S^\prime \cap S^{\prime\prime}$. Because $S^\prime \not \sim_{G_0} S^{\prime\prime}$, we have $Y \not \ind_{G_0} \mathbf{X}_{S_\cap^c} | \mathbf{X}_{S_\cap}$. In other words, there is a path $p$ in $G_0$ between $Y$ and $\mathbf{X}_{S_\cap^c}$ such that $p$ can not be blocked by $\mathbf{X}_{S_\cap}$. 

In the following, we show that in $G_1$ the path $p$ can not be blocked by $\mathbf{X}_{S_\cap}$, neither; which contradicts with $Y \ind_{G_1} \mathbf{X}_{S_\cap^c} | \mathbf{X}_{S_\cap}$. Specifically, $p$ can not be blocked by $\mathbf{X}_{S_\cap}$ in $G_0$ means $\mathbf{X}_{S_{\cap}}$ does not contain any non-collider on $p$, and $\mathbf{X}_{S_\cap}$ contains every collider (or its descendants) on $p$ in $G_0$. Because in $G_1$, $p$ is still a path between $Y$ and $\mathbf{X}_{S_\cap^c}$, and any collider $X_i$ on $p$ in $G_0$ is still a collider on $p$ in $G_1$. Any vertex in $\mathbf{De}(X_i)$, where $X_i$ is a collider on $p$ in $G_0$, is still a descendant of the collider on $p$ in $G_1$. Any non-collider on $p$ in $G_0$ is still a non-collider on $p$ in $G_1$. We have the path $p$ can not be blocked by $\mathbf{X}_{S_\cap}$ in $G_1$, neither.
\end{proof}

\begin{lemma}[Melting property]
For a causal graph $G$ over $\mathbf{X} \cup Y$. Consider three three disjoint non-empty vertex sets ${\mathbf{C}}$, ${\mathbf{L}}$, and ${\mathbf{O}}:=\mathbf{X} \backslash ({\mathbf{L}} \cup {\mathbf{C}})$. Let ${M_G}$ be the MAG constructed\footnote{We say that the $M_G$ is constructed from $G$ via ``melting'' vertices in $\mathbf{C}$ and $\mathbf{L}$.} over $\mathbf{O}$, with $\mathbf{C}$ as the selection set, $\mathbf{L}$ as the latent set. Then, we have $N_{G}>N_{M_G}$.
\label{lemma: melting property}
\end{lemma}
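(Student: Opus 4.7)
The plan is to exhibit an injection $\psi\colon \mathrm{Pow}(\mathbf{O})/\!\sim_{M_G} \to \mathrm{Pow}(\mathbf{X})/\!\sim_G$ and then identify at least one equivalence class of $G$ outside its image, yielding the strict inequality. Define $\psi([S']_{M_G}) := [S' \cup \mathbf{C}]_G$ for $S'\subseteq\mathbf{O}$. The central tool throughout is the MAG correspondence already invoked in the proof of Prop.~\ref{prop:recover}: for disjoint $\mathbf{A},\mathbf{B},\mathbf{Z}\subseteq\mathbf{O}\cup\{Y\}$, $\mathbf{A}\ind_{M_G}\mathbf{B}\mid\mathbf{Z}$ iff $\mathbf{A}\ind_G\mathbf{B}\mid\mathbf{Z}\cup\mathbf{C}$.

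For well-definedness: if $S'\sim_{M_G}S''$ with witness $S_\cap\subseteq S'\cap S''$, then $Y\ind_{M_G}\mathbf{X}_{(S'\cup S'')\backslash S_\cap}\mid\mathbf{X}_{S_\cap}$ transfers to $Y\ind_G\mathbf{X}_{(S'\cup S'')\backslash S_\cap}\mid\mathbf{X}_{S_\cap\cup\mathbf{C}}$, so $T_\cap:=S_\cap\cup\mathbf{C}$ witnesses $(S'\cup\mathbf{C})\sim_G(S''\cup\mathbf{C})$. For injectivity, suppose $[S'\cup\mathbf{C}]_G=[S''\cup\mathbf{C}]_G$ with witness $T_\cap\subseteq(S'\cap S'')\cup\mathbf{C}$, and decompose $T_\cap=S_\cap\cup\mathbf{C}_0$ with $S_\cap\subseteq S'\cap S''$, $\mathbf{C}_0\subseteq\mathbf{C}$. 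From $Y\ind_G\mathbf{X}_{(S'\cup S'')\backslash S_\cap}\cup\mathbf{X}_{\mathbf{C}\backslash\mathbf{C}_0}\mid\mathbf{X}_{T_\cap}$, the weak-union property of $d$-separation absorbs $\mathbf{X}_{\mathbf{C}\backslash\mathbf{C}_0}$ into the conditioning set, yielding $Y\ind_G\mathbf{X}_{(S'\cup S'')\backslash S_\cap}\mid\mathbf{X}_{S_\cap\cup\mathbf{C}}$; translating back via the MAG correspondence returns $Y\ind_{M_G}\mathbf{X}_{(S'\cup S'')\backslash S_\cap}\mid\mathbf{X}_{S_\cap}$, so $S'\sim_{M_G}S''$. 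This already delivers $N_G\ge N_{M_G}$.

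For the strict inequality, observe that the image of $\psi$ consists only of classes admitting a representative containing all of $\mathbf{C}$, while in $\mathrm{Pow}(\mathbf{X})/\!\sim_G$ one retains the freedom to place proper subsets of $\mathbf{C}$ on either side of the conditioning bar. A natural candidate for a missed class is $[\emptyset]_G$: it would lie in the image only if $\emptyset\sim_G(S'\cup\mathbf{C})$ for some $S'\subseteq\mathbf{O}$, which forces $Y\ind_G\mathbf{X}_{S'\cup\mathbf{C}}$. The plan is to exploit the non-emptiness of $\mathbf{C}\cup\mathbf{L}$ together with the acyclicity of $G$ to locate a vertex $V\in\mathbf{C}\cup\mathbf{L}$ that is $d$-connected to $Y$ in $G$, and then construct an explicit class (e.g.\ one involving $\{V\}$ together with part of $\mathbf{C}$) whose preimage under $\psi$ would force an independence contradicted by faithfulness.

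The main obstacle is precisely this last step: a fully distribution-free argument must rule out the degenerate situation where $Y$ is already $d$-separated from every subset of $\mathbf{O}\cup\mathbf{C}\cup\mathbf{L}$, since then melting has no separating effect and the two partitions could in principle coincide. I expect to dispose of it in the spirit of Lem.~\ref{lemma: add delete edges}, peeling vertices off $\mathbf{C}\cup\mathbf{L}$ one at a time and showing that each such step refines the partition, with at least one step being strict because faithfulness forces at least one of the melted vertices to introduce a separation (or a collider on a $Y$-path) not detectable after melting.
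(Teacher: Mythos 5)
Your first half is correct and takes a genuinely different, cleaner route to the \emph{weak} inequality than the paper does: well-definedness and injectivity of $\psi([S']_{M_G}):=[S'\cup\mathbf{C}]_G$ indeed follow from the correspondence $\mathbf{A}\ind_{M_G}\mathbf{B}\mid\mathbf{Z}\Leftrightarrow\mathbf{A}\ind_G\mathbf{B}\mid\mathbf{Z}\cup\mathbf{C}$ together with weak union for $d$-/$m$-separation, and this gives $N_{M_G}\le N_G$ directly. The paper instead proves the whole lemma by counting through Alg.~\ref{alg: recover g-equivalence}: $N_G$ is the sum of the class-counts of the $2^{\mathrm{deg}(Y)}$ branch MAGs indexed by $S'\subseteq\mathbf{Neig}(Y)$, the classes of $M_G$ are obtained by keeping only the branches compatible with $\langle\mathbf{C},\mathbf{L}\rangle$ (those with $\mathbf{Neig}(Y)\cap\mathbf{C}\subseteq S'$ and $S'\cap\mathbf{L}=\emptyset$), and every discarded branch contributes at least one class.

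The genuine gap is the strict inequality, which is the actual content of the lemma, and your plan for it cannot be completed as described. Invoking faithfulness is a category mistake: $N_G$ and $N_{M_G}$ are purely graphical quantities defined via $d$-/$m$-separation, no distribution appears in the statement, so nothing can be ``contradicted by faithfulness''; the argument must be graphical. Moreover, the degenerate situation you worry about is not removable by peeling vertices in the spirit of Lem.~\ref{lemma: add delete edges}---it is a genuine equality case. For instance, take the graph with the single edge $Y\to X_1$ and isolated vertices $X_2,X_3$, with $\mathbf{O}=\{X_1\}$, $\mathbf{C}=\{X_2\}$, $\mathbf{L}=\{X_3\}$ (all nonempty as the lemma requires); then one checks $N_G=N_{M_G}=2$, so melting does not strictly reduce the count. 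Strictness therefore needs the structural ingredient supplied by the paper's branch-counting argument---that some selection over $\mathbf{Neig}(Y)$ (at the top level or deeper in the recursion) incompatible with $\langle\mathbf{C},\mathbf{L}\rangle$ is actually discarded, which is where the connectedness of the graphs in the paper's applications enters---and your injection, as it stands, establishes only $N_G\ge N_{M_G}$, not $N_G>N_{M_G}$.
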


\begin{proof}
Recall that Alg.~\ref{alg: recover g-equivalence} traverses over every $S^\prime\subseteq \mathrm{Neig}(Y)$ and constructs $2^{\mathrm{deg}(Y)}$ MAGs, and $N_G$ is the summation of the number of equivalence classes in the $2^{\mathrm{deg}(Y)}$ MAGs.

Now, modified Alg.~\ref{alg: recover g-equivalence} in the following way. For element $S^\prime\subseteq \mathrm{Neig}(Y)$, if $S^\prime$ matches $<{\mathbf{C}},{\mathbf{L}}>$\footnote{$S^\prime$ matches $<{\mathbf{C}},{\mathbf{L}}>$ means $\mathbf{Neig}(Y)\cap {\mathbf{C}} \subseteq \mathbf{X}_{S^\prime}$ and $(\mathbf{Neig}(Y) \cap {\mathbf{L}} )\subseteq (\mathbf{Neig}(Y) \backslash \mathbf{X}_{S^\prime}$.}, construct a MAG and recover the equivalence classes in it; Otherwise, ignore $S^\prime$ and continue. In this regard, the modified algorithm recovers the equivalence classes in $M_G$. Since parts of the $2^{\mathrm{deg}(Y)}$ MAGs are ignored, we have $N_{G}>N_{M_G}$.
\end{proof}

\begin{claim}[Complexity of tree]
For any causal graph $G$ whose skeleton is a tree with $d_L$ leaves, $N_G=\omega(c^{d_L})$ for some $1<c<2$.
\label{example: tree}
\end{claim}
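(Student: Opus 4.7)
The proof goes by structural induction on the tree $T$ rooted at $Y$, using the recursion $N_T = \sum_{S' \subseteq \mathbf{Neig}(Y)} N_{M_{S'}}$ from Alg.~\ref{alg: recover g-equivalence}. Let $V_1,\ldots,V_k$ denote $Y$'s children and $T_1,\ldots,T_k$ the corresponding subtrees. The key tree-specific observation is that removing $Y$ disconnects the subtrees, so in each MAG $M_{S'}$ no edge connects vertices in different $T_l\setminus\{V_l\}$'s: any such path in $T$ traverses the observed vertex $Y$, and hence does not give rise to an MAG edge (whose interior must consist only of melted vertices). Consequently, the $\sim$-equivalence on subsets of $M_{S'}$ factors subtree-wise:
\begin{equation*}
    N_{M_{S'}} \;=\; \prod_{l=1}^{k} g_l\bigl(\mathbbm{1}[V_l\in S']\bigr), \qquad N_T \;=\; \prod_{l=1}^k \bigl(g_l(0)+g_l(1)\bigr),
\end{equation*}
where $g_l(1)$ and $g_l(0)$ count classes in the sub-MAG on $\{Y\}\cup(T_l\setminus\{V_l\})$ with $V_l$ selected or latent, respectively.

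Reading the inner sum as itself an Alg.~\ref{alg: recover g-equivalence} recursion, $g_l(0)+g_l(1)$ is exactly $N_{T_l^+}$, where $T_l^+$ is the smaller tree obtained by attaching $T_l$ to a fresh pseudo-target $Y^*$ via the edge $Y^*\!-\!V_l$ (inheriting the orientation of $Y\!-\!V_l$ in $T$); this tree has $d_L(T_l^+)=d_L(T_l)+1$ leaves. In the branching case $k\ge 2$, each $T_l^+$ is strictly smaller than $T$, and the inductive hypothesis gives $g_l(0)+g_l(1)\ge c^{d_L(T_l)+1}$; taking the product and using $\sum_l d_L(T_l)=d_L(T)$ yields $N_T\ge c^{d_L(T)+k}\ge c^{d_L(T)}$. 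In the chain case $k=1$, strict induction on tree size fails (since $T_1^+\cong T$), so instead one iterates Alg.~\ref{alg: recover g-equivalence} along the chain of degree-2 vertices until the first branching vertex is reached, at which point the above multiplicative estimate applies and the accumulated chain contributes an additive factor that does not hurt the bound. Picking $c=\sqrt{2}$ and verifying the base case of a single-edge tree ($N_T=2$, $d_L=2$), we obtain $N_G\ge(\sqrt{2})^{d_L}$, and hence $N_G=\omega(c^{d_L})$ for any $c\in(1,\sqrt{2})\subset(1,2)$.

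The principal technical obstacle is the subtree-wise factorization of $N_{M_{S'}}$: verifying, via a careful m-separation argument, that no MAG edge connects vertices in different subtrees after melting $\mathbf{Neig}(Y)$ (because $Y$ being observed ``cuts'' every cross-subtree path), so that the joint independence $Y\ind\bigcup_l A_l \mid \bigcup_l C_l$ underlying $\sim$-equivalence decomposes into the per-subtree conjunction $\bigwedge_l(Y\ind A_l \mid C_l)$. A second subtlety is the chain case $k=1$, where one must separately argue that iterating the recurrence through chain vertices preserves the exponential growth and ultimately reduces to a branching vertex (guaranteed to exist whenever $d_L$ is large, since by Lemma~\ref{lemma: property of tree} a large leaf count forces many non-chain vertices).
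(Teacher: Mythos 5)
Your inductive strategy (factorizing the class count over the subtrees hanging off $Y$ and recursing) is genuinely different from the paper's proof, which instead uses the melting monotonicity of Lemma~\ref{lemma: melting property} and constructs, layer by layer, a vertex set $\mathbf{O}$ with $|\mathbf{O}|=\Theta(d_L)$ such that every vertex of $\mathbf{O}$ is joined to $Y$ by an inducing path, so that the induced MAG already has $2^{\Theta(d_L)}$ classes. However, the step you yourself single out as the principal technical obstacle is justified by a false claim. You assert that in each MAG $M_{S'}$ no edge can connect vertices in different subtrees because the interior of an inducing path ``must consist only of melted vertices.'' That is not the definition used in the paper (and in \cite{zhang2008completeness}): a non-endpoint vertex of an inducing path may alternatively be a \emph{collider} that is an ancestor of an endpoint or of a member of the selection set. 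Cross-subtree MAG edges therefore do occur. Concretely, take the tree with edges $V_1\to Y$, $V_2\to Y$, $Y\to V_3$, $V_1\to X_1$, $V_2\to X_2$, and the branch of Alg.~\ref{alg: recover g-equivalence} with selection set $\{V_3\}$ and latent set $\{V_1,V_2\}$: the path $X_1\leftarrow V_1\to Y\leftarrow V_2\to X_2$ is an inducing path ($V_1,V_2$ are latent, and the collider $Y$ is an ancestor of $V_3\in\mathbf{C}$), so $X_1$ and $X_2$ are adjacent in $M_{S'}$ although they lie in different subtrees. Hence the argument you give for $N_{M_{S'}}=\prod_l g_l$ does not go through as stated, and this is the load-bearing step of your proof.

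The factorization itself is salvageable for trees, but by a different argument that you would need to supply: work directly in $G$ rather than in the MAGs, note that the path from $Y$ to any vertex of subtree $T_l$ is the unique tree path and lies in $T_l\cup\{Y\}$, and that a collider on such a path cannot have $Y$ (hence no vertex of another subtree) as a descendant, because the edge on the $Y$-side of a collider points into it while a directed path from it to $Y$ would have to leave along that same edge; therefore $d$-separation from $Y$ given a stratum $T\subseteq\mathbf{Neig}(Y)$ decomposes coordinate-wise across subtrees (cross-subtree MAG edges exist but are harmless for separation \emph{from} $Y$). Two further points also need tightening: the leaf count $d_L(T_l^+)=d_L(T_l)+1$ fails when $V_l$ is a leaf of $T_l$ (you only get $d_L(T_l^+)\ge d_L(T_l)$, which still suffices since $\sum_l d_L(T_l)\ge d_L(T)$ when $k\ge 2$), and the chain case $k=1$ is only sketched --- the cleanest repair is to contract the initial chain into $Y$ and invoke the paper's Lemma~\ref{lemma: merging property} and Lemma~\ref{lemma: add delete edges}, losing at most one leaf, which your ``additive factor'' remark does not establish. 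As written, the proposal has a genuine gap at its central step, even though the overall route could likely be repaired into a correct alternative proof.
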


\begin{proof}
We first prove the following claim. Suppose the skeleton of $G$ is a tree with $d_L$ leaves, every internal vertex of the tree is a non-chain vertex, then $N_G=\omega(c^{d_L})$ for some $1<c<2$.

Recall that an inducing path $p$ with respect to $<{\mathbf{C}},{\mathbf{L}}>$ between $V_1$ and $V_2$ is a path where every non-endpoint vertex on $p$ is either in ${\mathbf{L}}$ or a collider, and every collider on $p$ is an ancestor\footnote{$V_1$ is called an ancestor of $V_2$ if $V_1=V_2$ or there is a directed path from $V_1$ to $V_2$.} of either $V_1, V_2$, or a member of ${\mathbf{C}}$. Two vertices in the MAG are adjacent if there is an inducing path between them with respect to $<{\mathbf{C}},{\mathbf{L}}>$.

\textbf{1.} To show $N_G=\omega(c^{d_L})$, we can use Lemma \ref{lemma: melting property} and show $\exists\,$ sets ${\mathbf{O}}, {\mathbf{C}}, {\mathbf{L}}$ such that:

\begin{enumerate}[label=(\alph*),leftmargin=*,align=left]
    \item there is an inducing path w.r.t. $<{\mathbf{C}},{\mathbf{L}}>$ between $Y$ and every vertex in ${\mathbf{O}}$, and 
    \item $|{\mathbf{O}}|=\Theta(d_L)$.
\end{enumerate}

\textbf{2.} Put vertices in $G$ into different layers according to their distances from $Y$. To prove \textbf{1.}, we can construct $G$ layer by layer and show that every time the number of leaves increases by $r$, $\exists\,$ rules to adjust the sets ${\mathbf{O}}, {\mathbf{C}}, {\mathbf{L}}$ such that:

\begin{enumerate}[label=(\alph*),leftmargin=*,align=left]
    \item there is an inducing path w.r.t. $<{\mathbf{C}},{\mathbf{L}}>$ between $Y$ and every vertex in ${\mathbf{O}}$, and 
    \item $|{\mathbf{O}}|$ increases by at least $\left \lfloor \frac{r}{2} \right \rfloor$.
\end{enumerate}

\textbf{3.} Note that any newly added vertex is connected to an existing leaf vertex, otherwise, we should have added it in the previous layer. Any newly added vertex is only connected to one existing leaf vertex, otherwise, the graph is not a tree. As a result, to prove \textbf{2.}, we can consider adding vertices $X_{i_1},...,X_{i_r} (r\geq 2)$ to an existing leaf vertex $V_L$ and provide rules satisfied properties (a) and (b) in \textbf{2.}. 

We first introduce some notations that will be used in the rules. Denote the edge between $V_L$ and its parent $Pa_L$ in the tree as $E_L$, edges between $V_L$ and its children in the tree, \emph{i.e.}, $X_{i_1},...,X_{i_r}$, as $E_{i_1},...,E_{i_r}$, respectively. Call $V_L$ as a complete (non-)collider if it is a (non-)collider on any path $p_j:=<Pa_L,V_L,X_{i_j}>$ for $j=1,2,...,r$.

The rules are given in Alg. \ref{alg: rules to melt a tree}, and their validity is explained as follows:

\begin{algorithm}[ht!]
\caption{Rules to adjust the sets.}
\begin{algorithmic}[1]
    \IF{$V_L$ is $Y$} \alglinelabel{rule.1.start}
        \STATE ${\mathbf{O}}.\mathrm{add}(X_{i_1},...,X_{i_r})$. \alglinelabel{rule.1.end}
    \ELSIF{$V_L \in {\mathbf{O}}$} \alglinelabel{rule.2.start}
        \IF{$V_L$ is a complete collider} \alglinelabel{rule.2.1.start}
            \STATE ${\mathbf{O}}.\mathrm{remove}(V_L)$,  ${\mathbf{C}}.\mathrm{add}(V_L)$. 
            \STATE ${\mathbf{O}}.\mathrm{add}(X_{i_1},...,X_{i_r})$. \alglinelabel{rule.2.1.end}
        \ELSIF{$V_L$ is a complete non-collider} \alglinelabel{rule.2.2.start}
            \STATE ${\mathbf{O}}.\mathrm{remove}(V_L)$, ${\mathbf{L}}.\mathrm{add}(V_L)$.
            \STATE ${\mathbf{O}}.\mathrm{add}(X_{i_1},...,X_{i_r})$. \alglinelabel{rule.2.2.end}
        \ELSE
            \IF{$r=2$} \alglinelabel{rule.2.3.start}
                \STATE suppose $E_{i_1}$ has a tail on $V_L$, $E_{i_2}$ has an arrowhead on $V_L$.
                \STATE ${\mathbf{O}}.\mathrm{add}(X_{i_2})$.
                \IF{$E_{i_1}$ has a tail on $X_{i_1}$}
                    \STATE ${\mathbf{O}}.\mathrm{remove}(V_L)$, ${\mathbf{L}}.\mathrm{add}(V_L)$.
                    \STATE ${\mathbf{O}}.\mathrm{add}(X_{i_1})$.
                \ELSE
                    \STATE keep $V_L$ in ${\mathbf{O}}$.
                    \STATE ${\mathbf{C}}.\mathrm{add}(X_{i_1})$.
                \ENDIF \alglinelabel{rule.2.3.end}
            \ELSE \alglinelabel{rule.2.4.start}
                \STATE suppose $E_{i_1}$ has a tail on $V_L$.
                \STATE ${\mathbf{O}}.\mathrm{remove}(V_L)$, ${\mathbf{L}}.\mathrm{add}(V_L)$.
                \IF{$E_{i_1}$ has a tail on $X_{i_1}$}
                    \STATE ${\mathbf{O}}.\mathrm{add}(X_{i_1},...,X_{i_r})$.
                \ELSE
                    \STATE ${\mathbf{C}}.\mathrm{add}(X_{i_1})$.
                    \STATE ${\mathbf{O}}.\mathrm{add}(X_{i_2},...,X_{i_r})$ 
                \ENDIF
            \ENDIF \alglinelabel{rule.2.4.end}
        \ENDIF \alglinelabel{rule.2.end}
    \ELSE
        \IF{$V_L$ is a complete collider} \alglinelabel{rule.3.start}
            \STATE keep $V_L$ in ${\mathbf{C}}$.
            \STATE ${\mathbf{O}}.\mathrm{add}(X_{i_1},...,X_{i_r})$. \alglinelabel{rule.3.1.end}
        \ELSE
            \STATE suppose $E_{i_1}$ has a tail on $V_L$.  \alglinelabel{rule.3.2.start}
            \STATE ${\mathbf{C}}.\mathrm{remove}(V_L)$, ${\mathbf{L}}.\mathrm{add}(V_L)$.
            \IF{$E_{i_1}$ has a tail on $X_{i_1}$}
                \STATE ${\mathbf{O}}.\mathrm{add}(X_{i_1},...,X_{i_r})$.
            \ELSE
                \STATE ${\mathbf{C}}.\mathrm{add}(X_{i_1})$.
                \STATE ${\mathbf{O}}.\mathrm{add}(X_{i_2},...,X_{i_r})$.
            \ENDIF
        \ENDIF \alglinelabel{rule.3.end}
    \ENDIF
\end{algorithmic}
\label{alg: rules to melt a tree}
\end{algorithm}

\textbf{i)} Rule-1 (line \ref{rule.1.start} to line \ref{rule.1.end}). Because ${\mathbf{L}}={\mathbf{C}}=\emptyset$ and $X_{i_1},...,X_{i_r}$ are adjacent to $Y$, the requirement in \textbf{2.} (a) is satisfied. The number of leaves increases by $r-1$, and $|{\mathbf{O}}|$ increases by $r$, the requirement in \textbf{2.} (b) is also satisfied.

\textbf{ii)} Rule-2 (line \ref{rule.2.start} to line \ref{rule.2.end}), where $V_L$ is a covariate vertex. $V_L \in {\mathbf{O}}$ indicates there is an inducing path between $Y$ and $V_L$. When $V_L$ is a complete collider (line \ref{rule.2.1.start} to line \ref{rule.2.1.end}) and put into ${\mathbf{C}}$, the inducing path is extended to each vertex in $X_{i_1},...,X_{i_r}$ and \textbf{2.} (a) holds. The number of leaves increase by $r-1$, and $|{\mathbf{O}}|$ increases by $r-1$, which means \textbf{2.} (b) also holds. When $V_L$ is a complete non-collider (line \ref{rule.2.2.start} to line \ref{rule.2.2.end}), the proof is similar. 

Otherwise, \emph{i.e.}, $V_L$ is a collider on some paths, and a non-collider on the other paths, which indicates $E_L$ has an arrowhead on $V_L$. We first look at the case where $r=2$ (line \ref{rule.2.3.start} to line \ref{rule.2.3.end}), then look at the cases where $r\geq 3$ (line \ref{rule.2.4.start} to line \ref{rule.2.4.end}). 

When $r=2$ (line \ref{rule.2.3.start} to line \ref{rule.2.3.end}), without loss of generality, suppose $E_{i_1}$ has a tail on $V_L$, $E_{i_2}$ has an arrowhead on $V_L$. If $E_{i_1}$ also has a tail on $X_{i_1}$ and the edge is $V_L-X_{i_1}$, which means $V_L$ is ancestor of a member of selection set. Besides, $V_L$ is a non-collider on path $<Pa_L,V_L,X_{i_1}>$ and a collider on the path $<PA_L,V_L,X_{i_2}>$. As a result, when $V_L$ is put into ${\mathbf{L}}$ and $X_{i_1}, X_{i_2}$ are put into ${\mathbf{O}}$, there are inducing paths between $Y$ and $X_{i_1}, X_{i_2}$ (\textbf{2.} (a) holds). Similarly, when $E_{i_1}$ is $V_L \to X_{i_1}$, putting $X_{i_1}$ into ${\mathbf{C}}$ makes sure $V_L$ is a collider on the path $<PA_L,V_L,X_{i_2}>$ with a descendant in ${\mathbf{C}}$, thus \textbf{2.} (a) holds. In both situations, the number of leaves increases by $1$, and $|{\mathbf{O}}|$ also increases by $1$, which indicates \textbf{2.} (b) holds.

When $r \geq 3$ (line \ref{rule.2.4.start} to line \ref{rule.2.4.end}), because $V_L$ is neither a complete collider nor a complete non-collider, one edge of $E_{i_1},...,E_{i_r}$ has a tail on $V_L$. Without loss of generality, we suppose this edge is $E_{i_1}$. Then, similarly to the scenario where $r=2$, we put $V_L$ into ${\mathbf{L}}$, $X_{i_1}$ into ${\mathbf{O}}$ if $E_{i_1}$ is $V_L-X_{i_1}$ and into ${\mathbf{C}}$ if $V_L \to X_{i_1}$. This makes sure the existence of inducing paths between $Y$ and vertices newly added to ${\mathbf{O}}$ (\textbf{2.} (a) holds). In addition, the number of leaves increases by $r-1$, and $|{\mathbf{O}}|$ increases by at least $r-2$, which indicates \textbf{2.} (b) holds.

\textbf{iii)} Rule-3 (line \ref{rule.3.start} to line \ref{rule.3.end}). Firstly note that if $V_L$ is not in ${\mathbf{O}}$, then $V_L \in {\mathbf{C}}$ because a leaf is never put into ${\mathbf{L}}$. Besides, note that we only put a vertex into ${\mathbf{C}}$ when its parent vertex in the tree has an arrowhead on it. These analyses mean $V_L \in {\mathbf{C}}$ and the edge $E_L$ has an arrowhead on $V_L$.

When $V_L$ is a complete collider (line \ref{rule.3.start} to line \ref{rule.3.1.end}), keeping $V_L \in {\mathbf{C}}$ ensures that existing inducing paths are not damaged (any ancestor of $V_L$ still has a descendant in ${\mathbf{C}}$). It also ensures that there are inducing paths between $Y$ and vertices newly added to ${\mathbf{O}}$, \emph{i.e.}, $X_{i_1},...,X_{i_r}$. These indicate \textbf{2.} (a) holds. In addition, the number of leaves increases by $r-1$, and $|{\mathbf{O}}|$ increases by $r$, which indicates \textbf{2.} (b) holds.

When $V_L$ is not a complete collider (line \ref{rule.3.2.start} to line \ref{rule.3.end}), there is an edge in $E_{i_1},...,E_{i_r}$ such that it has a tail on $V_L$. Without loss of generality, suppose the edge is $E_{i_1}$. When $E_{i_1}$ also has a tail on $X_{i_1}$, $V_L$ is an ancestor of a member of the selection set. As a result, putting $V_L$ into ${\mathbf{L}}$ ensures that existing inducing paths are not damaged and there are inducing paths between $Y$ and vertices newly added to ${\mathbf{O}}$, \emph{i.e.}, $X_{i_1},...,X_{i_r}$. When $E_{i_1}$ has an arrowhead on $X_{i_1}$, putting $V_L$ into ${\mathbf{C}}$, $X_{i_1}$ into ${\mathbf{C}}$ ensures that existed inducing paths are not damaged (any ancestor of $V_L$ still has a descendant $X_{i_1}$ in ${\mathbf{C}}$) and there are inducing paths between $Y$ and vertices newly added to ${\mathbf{O}}$, \emph{i.e.}, $X_{i_2},...,X_{i_r}$. Hence, in both scenarios, \textbf{2.} (a) holds.  In addition, the number of leaves increases by $r-1$, and $|{\mathbf{O}}|$ increases by at least $r-1$, which indicates \textbf{2.} (b) holds.

To conclude, we have proved the claim that any causal graph $G$ whose skeleton is a tree with $d_L$ leaves, and every internal vertex of the tree is a non-chain vertex, has $N_G=\omega(c^{d_L})$ for some $1<c<2$. Next, we prove Claim~\ref{example: tree}, \emph{i.e.}, any causal graph $G$ whose skeleton is a tree with $d_L$ leaves, has $N_G=\omega(c^{d_L})$ for some $1<c<2$.

\textbf{1.} Any interval vertex in $G$ is either a trunk vertex or a non-chain vertex. Use Lemma \ref{lemma: melting property} to melt all chunk vertices in $G$ (put a chunk vertex into ${\mathbf{L}}$ if it is a non-collider, into ${\mathbf{C}}$ if otherwise) and call the resulted graph as $\underline{G}$. Because $G$'s skeleton is a tree and there is no cycle in it, $\underline{G}$'s skeleton is also a tree where all interval vertices are non-chain vertices and there are still $d_L$ leaves.

\textbf{2.} By Lemma \ref{lemma: melting property}, we have $N_G > N_{\underline{G}}=\omega(c^{d_L})$ and thus $N_G=\omega(c^{d_L})$ for some $1<c<2$.
\end{proof}

\begin{lemma}[Maximum leaf spanning tree]
In a connected undirected graph $G$ with $d$ vertices, if every vertex is either a non-chain vertex or a vertex of deg$=1$, then $G$ has a spanning tree with $\Theta(d)$ leaves.
\label{lemma: max leaf spaninng tree}
\end{lemma}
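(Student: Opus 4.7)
The plan is to split into two cases according to the balance between $d_1 := |\{v \in V(G) : \deg_G(v) = 1\}|$ and $d_{>2} := |\{v : \deg_G(v) \geq 3\}|$, noting that $d = d_1 + d_{>2}$ under the hypothesis that no vertex has degree exactly $2$.

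First, if $d_1 \geq d/2$, the conclusion is immediate: every degree-$1$ vertex has a unique incident edge which must appear in any spanning tree, so it is automatically a leaf of that tree. Hence any spanning tree $T$ of $G$ already satisfies $|L(T)| \geq d_1 \geq d/2 = \Theta(d)$, and we may take $T$ to be, e.g., a BFS tree rooted at any fixed vertex.

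Second, if $d_1 < d/2$ and hence $d_{>2} > d/2$, I aim to exhibit a spanning tree $T^{*}$ of $G$ with $|L(T^{*})| = \Omega(d_{>2}) = \Omega(d)$. I take $T^{*}$ to be a spanning tree of $G$ that maximizes the number of leaves and decompose $V(T^{*})$ into leaves $L_{T^{*}}$, chain vertices $C_{T^{*}}$ (those with $T^{*}$-degree $2$), and branch vertices $B_{T^{*}}$ (those with $T^{*}$-degree $\geq 3$). From the identity $|L_{T^{*}}| + |C_{T^{*}}| + |B_{T^{*}}| = d$ together with Prop.~\ref{lemma: property of tree} giving $|L_{T^{*}}| \geq |B_{T^{*}}| + 2$, one deduces $|L_{T^{*}}| \geq (d - |C_{T^{*}}|)/2 + 1$. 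It therefore suffices to upper-bound $|C_{T^{*}}|$ by $c \cdot d$ for some constant $c < 1$, which would immediately yield $|L_{T^{*}}| = \Theta(d)$.

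The main obstacle is this last step, and it is where the hypothesis on degrees is genuinely used. Each chain vertex of $T^{*}$ has $T^{*}$-degree $2$ but $G$-degree $\geq 3$ (no degree-$2$ vertex exists in $G$ by assumption), so it owns at least one non-tree $G$-edge. I plan to argue via a local edge-exchange on $T^{*}$: if $|C_{T^{*}}|$ exceeded a suitable fraction of $d$, there would exist a non-tree edge $(v,w)$ with $v \in C_{T^{*}}$ whose fundamental cycle in $T^{*} \cup \{(v,w)\}$ contains a tree-edge whose removal strictly increases the leaf count of the resulting spanning tree, contradicting the maximality of $T^{*}$. The technical heart of the argument is to verify that such a profitable swap always exists; I plan to carry this out by charging each chain vertex to a nearest branch vertex along its fundamental cycles and performing a double-counting on the non-tree edges incident to $C_{T^{*}}$, which is where I expect the calculation to be most delicate.
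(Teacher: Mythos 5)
There is a genuine gap. Your framework (take a maximum-leaf spanning tree $T^*$, split its vertices into leaves, $T^*$-degree-$2$ ``chain'' vertices $C_{T^*}$, and branch vertices, and use $|L_{T^*}|\geq|B_{T^*}|+2$ to reduce everything to the bound $|C_{T^*}|\leq c\,d$ with $c<1$) is the same as the paper's, and your Case~1 observation and the inequality $|L_{T^*}|\geq (d-|C_{T^*}|)/2+1$ are correct. But the entire mathematical content of the lemma is exactly the step you defer: showing that $|C_{T^*}|$ is at most a constant fraction of $d$ (equivalently, $O(|L_{T^*}|)$). Your sketched mechanism --- ``a non-tree edge $(v,w)$ with $v\in C_{T^*}$ whose fundamental cycle contains a tree edge whose removal strictly increases the leaf count'' --- does not exist in general for an arbitrary chain vertex: if you add $(v,w)$ and delete a cycle edge incident to $v$ whose other endpoint is a branch vertex, no new leaf is created, and if $w$ is already a leaf of $T^*$ you may even lose one. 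So the existence of a profitable swap is not a consequence of $|C_{T^*}|$ being large per se, and the ``charge each chain vertex to a nearest branch vertex and double-count'' plan is not yet an argument.

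The paper's proof supplies precisely the missing structure. It isolates \emph{pipe} vertices: chain vertices of $T^*$ both of whose $T^*$-neighbors are also chain vertices. For a pipe vertex $U$, any non-tree edge $U\,$--$\,V$ must end at a leaf $V$ of $T^*$ (otherwise adding it and deleting one of $U$'s tree edges turns $U$'s degree-$2$ neighbor into a leaf with no compensating loss, contradicting maximality); since the hypothesis forces every $T^*$-chain vertex to have $G$-degree $\geq 3$, each pipe vertex emits at least one such edge to a leaf. A second, two-edge exchange shows no leaf can receive two such edges, so the number of pipe vertices is at most $d_1:=|L_{T^*}|$. Finally, bounding the number of \emph{non-pipe} chain vertices by $8d_1$ requires its own counting argument: contract the maximal chain paths of $T^*$ to get a tree $T'$ whose internal vertices all have degree $\geq 3$ (hence $\leq 2d_1$ edges, via Prop.~\ref{lemma: property of tree}), compare edge counts to conclude the graph $G'$ induced on the chain vertices has at most $4d_1$ path components, hence at most $8d_1$ chain vertices lacking two chain neighbors. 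Combining gives $d_2\leq 9d_1$ and $d_1=\Theta(d)$. None of these three ingredients (pipe vertices, the leaf-destination property of their non-tree edges together with the two-edge exchange, and the contraction/counting bound on non-pipe chain vertices) appears in your proposal, so as written it does not establish the lemma.
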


\begin{proof}
The proof is similar to Lemma 1 in \cite{neal2022maxleaf}. We first discuss a property that any $G$'s spanning tree satisfies, then focus on the number of leaves in $G$'s maximum leaf spanning tree.

Suppose $T$ is a spanning tree of $G$, denote the number of leaves, chunk vertices, and non-chain vertices in $T$ as $d_1, d_2, d_{>2}$, respectively. Note that a leaf in $T$ is not necessarily a vertex of deg$=1$ in $G$, however, a chunk/non-chain vertex in $T$ must be a non-chain vertex in $G$.

Let $G^\prime$ be a subgraph of $T$ consisting of: \textbf{i)} all the $d_2$ chunk vertices in $T$, and \textbf{ii)} edges among these chunk vertices in $T$. As a result, $G^\prime$ is a subgraph (maybe not a connected one) of $T$ with maximum degree $2$.

We claim the number of edges in $G^\prime$ is at least $d-4d_1-1$. The proof is as follows. \textbf{i)} Construct a tree $T^\prime$ from $T$ by slicing out all chunk vertices in $T$. Specifically, for each maximal path $p_i:=<X_{i_1},X_{i_2},...,X_{i_{(l-1)}},X_{i_l}>$ in $T$ such that all the intermediate vertices $X_{i_2},...,X_{i_{(l-1)}}$ are chunk vertices in $T$, remove the edge $X_{i_j}-X_{i_{(j+1)}}$ for $j=2,3,...,l-2$ and the intermediate vertices. Then, replace them with an edge $X_{i_1}-X_{i_l}$ (which is not necessarily also in $G$). \textbf{ii)} In $T^\prime$, every internal vertex is a non-chain vertex and there are still $d_1$ leaves. As a result, in $T^\prime$, the number of edges is at most $2d_1$. \textbf{iii)} Now compare edges in $T^\prime$ and $G^\prime$. (a) If an edge in $T^\prime$ is also in $T$, then this edge is not in $G^\prime$ because it is incident with a non-chain vertex in $T$ and $G^\prime$ does not contain any of such edge. (b) If an edge in $T^\prime$ is constructed by slicing out chunk vertices in $T$, then there is two edges missing in $G^\prime$ compared with $T$.  \textbf{iv)} As a result, for each edge in $T^\prime$, $G^\prime$ is missing at most two edges compared with $T$. Hence, $G^\prime$ is missing at most $2 \cdot 2d_1 = 4d_1$ edges compared with $T$. Because there are $d-1$ edges in $T$, there are at least $d-4d_1-1$ edges in $G^\prime$.

Because there is at most $d$ vertices in $G^\prime$, we know in $G^\prime$, the number of vertices minuses the number of edges $\leq 4d_1$. Hence, $G^\prime$ contains at most $4d_1$ paths\footnote{counting each isolated vertex in $G^\prime$ as a path, a path with $l$ vertices has $l-1$ edges.}. A path with a single vertex indicates there is a vertex in $G^\prime$ without two neighbors in $G^\prime$, a path with $\geq 2$ vertices indicates there are two vertices in $G^\prime$ without two neighbors in $G^\prime$. Because $G^\prime$ contains at most $4d_1$ paths, we know there are at most $8d_1$ vertices in $G^\prime$ without two neighbors in $G^\prime$.

Because $G^\prime$ contains all the $d_2$ chunk vertices in $T$, there are at least $d_2-8d_1$ vertices in $G^\prime$ having two neighbors in $G^\prime$. Back to $T$, we have at least $d_2-8d_1$ vertices having the following properties: \textbf{i)} They are chunk vertices in $T$, and \textbf{ii)} both of their two neighbors are chunk vertices in $T$. Call such vertices pipe vertices.

Now we discuss the number of leaves in $G$'s maximum leaf spanning tree.

Suppose $T$ is a maximum leaf spanning tree. Then, there are at least $d_2-8d_1$ pipe vertices in $T$. Let $U$ be a pipe vertex in $T$ and consider any edge $U*\!-\!*V$ from $U$ that is not in $T$. We show $V$ must be a leaf in $T$. Prove by contradiction. Suppose $V$ is not a leaf in $T$, then add the edge $U*\!-\!*V$ into $T$ and delete one of the other edges incident to $U$ to break the cycle (so the result is still a spanning tree). This makes one of $U$'s neighbors in $T$ a leaf, which indicates the number of leaves increases. Because $T$ is a maximum spanning tree, this is a contradiction and $V$ is a leaf in $T$. To conclude, for every pipe vertex $U$, every edge incident to $U$ except the two in $T$ goes to a leaf in $T$.

Each pipe vertex in $T$ is a non-chain vertex in $G$, so we have at least $d_2-8d_1$ pipe vertices in $T$ having $\mathrm{deg}\geq 3$ in $G$. As a result, there are at least $d_2-8d_1$ edges incident to pipe vertices to the $d_1$ leaves in $T$.

Next, we prove $d_2-8d_1 \leq d_1$, which indicates $d_1 \geq \frac{1}{9}d_2$ and together with Lemma \ref{lemma: property of tree} indicates $d_1 = \Theta(d)$. Prove by contradiction. Suppose $d_2-8d_1 > d_1$. Then at least one leaf in $T$, say $V$, is connected to two pipe vertices in $T$, say $U_1,U_2$. Then, add the edges $V*\!-\!*U_1$, $V*\!-\!*U_2$ to $T$ and delete one of the incident edges of $U_1, U_2$ each, we lose one leaf vertex $V$ in $T$, however, obtain two more (one of $U_1$'s neighbors and one of $U_2$'s neighbors), which contradict with $T$ being a maximum leaf spanning tree.
\end{proof}

\begin{lemma}
For a connected causal graph $G$, if $d_{>2}=\omega(\mathrm{log}(d))$, then $N_G=\mathrm{NP}(d)$.
\label{lamma: large n >= 3}
\end{lemma}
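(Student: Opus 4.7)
The strategy is to reduce the general graph $G$ to a tree with $\Theta(d_{>2})$ leaves, and then invoke the exponential lower bound on $N_T$ from Claim~\ref{example: tree} together with Lemma~\ref{lemma: property of small omega}. The reduction proceeds in three monotone steps: melt the chain vertices, extract a spanning tree, and lower bound its leaves.

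First, I would apply Lemma~\ref{lemma: melting property} to ``melt'' every chunk (degree-$2$) vertex of $G$: put it in the selection set $\mathbf{C}$ if it is a collider and in the latent set $\mathbf{L}$ otherwise. Let $\underline{G}$ denote the resulting MAG over the remaining vertices. By the melting lemma, $N_G > N_{\underline{G}}$. After this reduction, every vertex of $\underline{G}$ either has degree $1$ or is a non-chain vertex, the graph is still connected (melting does not disconnect), and all $d_{>2}$ non-chain vertices of $G$ are retained. In particular, $\underline{G}$ satisfies the hypothesis of Lemma~\ref{lemma: max leaf spaninng tree}.

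Second, by Lemma~\ref{lemma: max leaf spaninng tree}, $\underline{G}$ admits a spanning tree $T$ with $\Theta(|\underline{G}|)$ leaves, where $|\underline{G}| = d_{>2} + d_L \ge d_{>2}$ and $d_L$ is the number of degree-$1$ vertices in $\underline{G}$. Since deleting edges can only decrease $N$ (Lemma~\ref{lemma: add delete edges}), we get $N_{\underline{G}} \ge N_T$. Moreover, the number of leaves of $T$ is at least $c_1 \, d_{>2}$ for some absolute constant $c_1 > 0$.

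Third, Claim~\ref{example: tree} yields $N_T = \omega(c^{d_L(T)})$ for some $1 < c < 2$, where $d_L(T)$ denotes the leaf count of $T$. Chaining the three steps,
\begin{equation*}
    N_G \;>\; N_{\underline{G}} \;\ge\; N_T \;=\; \omega\!\left(c^{\,d_L(T)}\right) \;=\; \omega\!\left(2^{\,c_1 \log_2(c)\, d_{>2}}\right).
\end{equation*}
Since $c > 1$ so $c_1 \log_2(c)$ is a positive constant, and $d_{>2} = \omega(\log d)$ by hypothesis, the exponent $c_1 \log_2(c) \cdot d_{>2}$ is also $\omega(\log d)$. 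Lemma~\ref{lemma: property of small omega} then gives $2^{c_1 \log_2(c)\, d_{>2}} = \omega(d^m)$ for every constant $m$, i.e., this quantity is $\mathrm{NP}(d)$. Hence $N_G = \mathrm{NP}(d)$, as required.

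\textbf{Main obstacle.} The delicate step is the second one: converting ``many non-chain vertices'' into ``many leaves of some spanning tree.'' A non-chain vertex need not itself be a leaf in a spanning tree, so one cannot read off $d_{>2}$ leaves directly; this is exactly where Lemma~\ref{lemma: max leaf spaninng tree} does the heavy lifting by guaranteeing that some spanning tree realises leaf count proportional to the total size of $\underline{G}$. The remaining book-keeping is just verifying that melting preserves both connectivity and the set of non-chain vertices, and that the $\omega(\log d)$ assumption is strong enough to push the exponential past every polynomial via Lemma~\ref{lemma: property of small omega}.
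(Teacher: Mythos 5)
Your overall route is the same as the paper's: melt the degree-$2$ vertices via \cref{lemma: melting property} to get a lower-bound graph $\underline{G}$, apply \cref{lemma: max leaf spaninng tree} to extract a spanning tree with many leaves, invoke \cref{example: tree} for the exponential lower bound, delete edges via \cref{lemma: add delete edges}, and finish with \cref{lemma: property of small omega}. The gap is in your first step, which you state as a bald assertion: after naively melting \emph{every} chunk vertex (collider $\to \mathbf{C}$, non-collider $\to \mathbf{L}$), it is \emph{not} true in general that every vertex of $\underline{G}$ has degree $1$ or degree $\geq 3$, nor that all $d_{>2}$ non-chain vertices of $G$ survive as non-chain vertices. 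The failure occurs exactly when a chunk vertex $X$ lies in a triangle, i.e.\ its two neighbours $A,B$ are already adjacent: melting $X$ then removes one edge from each of $A$ and $B$ without creating a new adjacency, so neighbours of degree exactly $3$ drop to degree $2$. Concretely, take a ``ladder of triangles'' $Y-A_1$, with triangles $A_i,X_i,B_i$ and links $B_i-A_{i+1}$, where each $X_i$ has degree $2$ and each $A_i,B_i$ has degree $3$, so $d_{>2}=\Theta(d)=\omega(\log d)$. Melting all the $X_i$ collapses $\underline{G}$ to the path $Y-A_1-B_1-A_2-\cdots$, and by \cref{example: chain} your chain of inequalities $N_G>N_{\underline{G}}\geq N_T$ then only yields a \emph{linear} lower bound, which cannot give $\mathrm{NP}(d)$. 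So \cref{lemma: max leaf spaninng tree} is not applicable to your $\underline{G}$, and the leaf count $c_1 d_{>2}$ you need is unsubstantiated.

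This is precisely the point where the paper does extra work: it melts chunk vertices \emph{iteratively} with case-dependent rules, and in the triangle case it \emph{deletes} the edge $A*\!-\!*X$ (using \cref{lemma: add delete edges}) instead of melting $X$, exactly so that the surviving non-chain vertices keep degree $\geq 3$ and the hypothesis of \cref{lemma: max leaf spaninng tree} holds. To repair your proposal you must add an analogous case analysis (or some other argument showing that a graph with only degree-$1$ and non-chain vertices and $\Omega(d_{>2})$ vertices can be reached by operations that do not increase $N$). Relatedly, the ``main obstacle'' you flag is misplaced: converting non-chain vertices into leaves is handled wholesale by \cref{lemma: max leaf spaninng tree}; the delicate step is making the melting preserve the degree structure in the first place.
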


\begin{proof}
We prove the lemma by showing that any $G$ with $d_{>2}=\omega(\log(d))$ has a maximum leaf spanning tree with $\omega(d_{>2})$ leaves. In the regard, by Lemma~\ref{example: tree}, the maximum leaf spanning tree has at least $\omega(2^{\log(d)})=\mathrm{NP}(d)$ equivalence classes. Then, we can delete the edges in $G$ until $G$ becomes its maximum leaf spanning tree and use Lemma~\ref{lemma: add delete edges} to show $N_G=\mathrm{NP}(d)$.

We first construct a lower bound graph $\underline{G}$ of $G$ such that $N_{\underline{G}}\leq N_G$, by keeping all vertices of deg$=1$, deg$>2$, and removing all the chunk vertices which have deg$=2$. Specifically, use Lemma \ref{lemma: melting property} and iteratively melt the chunk vertex $X_i$, whose two neighbors denoted as $A_i,B_i$, with the following rules: \textbf{i)} If $\mathrm{deg}(A_i)=1$, or $\mathrm{deg}(B_i)=1$, or $\mathrm{deg}(A_i)=\mathrm{deg}(B_i)=2$, or $\mathrm{deg}(A_i)=\mathrm{deg}(B_i)=3$, put $X_i$ into ${\mathbf{L}}$ if it is a non-collider, ${\mathbf{C}}$ if otherwise. \textbf{ii)} Otherwise, one of $\mathrm{deg}(A_i), \mathrm{deg}(B_i)$ is $2$, the other one is $3$. Without loss of generality, suppose $\mathrm{deg}(A_i)=2, \mathrm{deg}(B_i)=3$. If $A_i$ is adjacent to $B_i$ and $A_i, X_i, B_i$ form a cycle, then delete the edge $A_i*\!-\!*X_i$. Otherwise, melt the vertex $X_i$ (put it into ${\mathbf{L}}$ if it is a non-collider, ${\mathbf{C}}$ if otherwise). 

Next, we show that the $\underline{G}$ is a connected graph with at least $d_{>2}$ vertices, such that every vertex in $\underline{G}$ is either of deg$=1$ or a non-chain vertex. This is because all vertices of deg$=1$ in $G$ are still of deg$=1$ in $\underline{G}$, all non-chain vertices in $G$ are still of deg$\geq 3$ in $\underline{G}$. 

Hence, $\underline{G}$ has a maximum leaf spanning tree $\underline{T}$ with at least $\Theta(d_{>2})$ leaves, according to Lemma \ref{lemma: max leaf spaninng tree}. By Claim \ref{example: tree}, we have $N_{\underline{T}}=\omega(c^{d_{>2}})$ for some $1<c<2$. By Lemma \ref{lemma: melting property}, we have $N_G > N_{\underline{G}} > N_{\underline{T}}$, which together with $d_{>2}=\omega(\mathrm{log}(d))$ and Lemma \ref{lemma: property of small omega} indicates $N_G=\mathrm{NP}(d)$.
\end{proof}

\begin{lemma}
\label{lemma: property 3 of d-separation}
Consider two vertex sets $\mathbf{V}^i, \mathbf{V}^j$, and a path $p$ between two vertices $V_1,V_l$. If $p$ can be blocked by $\mathbf{V}^i$, but can not be blocked by $\mathbf{V}^i \cup \mathbf{V}^j$, then, we have $V_1 \not \ind_G \mathbf{V}^j |\mathbf{V}^i$. 
\end{lemma}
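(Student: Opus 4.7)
The idea is to extract from $p$ an explicit active path connecting $V_1$ to some vertex in $\mathbf{V}^j$ given $\mathbf{V}^i$, which witnesses the desired dependence. I would first read off two structural consequences from the hypothesis that $p$ is not blocked by $\mathbf{V}^i\cup\mathbf{V}^j$: (a) no non-collider on $p$ lies in $\mathbf{V}^i\cup\mathbf{V}^j$ (in particular, none lies in $\mathbf{V}^i$); and (b) every collider on $p$ has a descendant (including itself) in $\mathbf{V}^i\cup\mathbf{V}^j$. Combining (a) with the assumption that $p$ is blocked by $\mathbf{V}^i$, the blockage cannot come from a non-collider, so there must exist at least one collider $C$ on $p$ for which $\{C\}\cup\mathbf{De}(C)$ is disjoint from $\mathbf{V}^i$.

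The key step is to choose such a collider carefully. Let $C^\ast$ be the collider on $p$ that is closest to $V_1$ (along $p$) among those with no descendant in $\mathbf{V}^i$. By (b), $C^\ast$ then has some descendant $D \in \mathbf{V}^j$. I would construct the walk $q$ obtained by concatenating the sub-path $p[V_1{:}C^\ast]$ with a directed path from $C^\ast$ down to $D$. Activeness of $q$ given $\mathbf{V}^i$ follows by inspecting each intermediate vertex: non-colliders on the $p$-portion avoid $\mathbf{V}^i$ by (a); colliders on the $p$-portion strictly between $V_1$ and $C^\ast$ must have a descendant in $\mathbf{V}^i$ by the minimality of $C^\ast$, so they are opened; at the junction, $C^\ast$ becomes a non-collider on $q$ (one arrowhead in from the $p$-side, one tail out to the directed side) and lies outside $\mathbf{V}^i$ since $C^\ast \notin \mathbf{De}(C^\ast)\cap\mathbf{V}^i$; and every intermediate vertex on the directed portion is a descendant of $C^\ast$, hence also outside $\mathbf{V}^i$. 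Appealing to the standard walk-to-path reduction for $d$-connection then yields an active path from $V_1$ to $D\in\mathbf{V}^j$, i.e.\ $V_1\not\ind_G \mathbf{V}^j\mid\mathbf{V}^i$.

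The main obstacle I anticipate is the minimality argument that justifies the choice of $C^\ast$: without taking the collider closest to $V_1$, a different collider lying between $V_1$ and the chosen one could still have no descendant in $\mathbf{V}^i$ and would then re-block the $p$-portion of $q$. The extension to the MAG/$m$-separation case is analogous once ``descendant'' is interpreted in the ancestral sense appropriate to MAGs, since the collider/non-collider classification along $p$ and the activation criterion take the same form; no additional case analysis should be needed beyond restating the choice of $C^\ast$.
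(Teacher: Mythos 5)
Your proposal is correct and follows essentially the same route as the paper's proof: extract the collider on $p$ closest to $V_1$ whose closure (itself and its descendants) avoids $\mathbf{V}^i$, note it must have itself or a descendant in $\mathbf{V}^j$, and splice the initial segment of $p$ with a directed path down to that $\mathbf{V}^j$-vertex, using minimality to keep intermediate colliders open given $\mathbf{V}^i$. Your explicit appeal to the walk-to-path reduction (and the remark on the MAG case) only tightens details the paper leaves implicit.
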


\begin{proof}
To prove the lemma, we construct a path $p_1$ between $V_1$ and a vertex in $\mathbf{V}^j$ such that $p_1$ can not be blocked by $\mathbf{V}^i$.

We first prove the following properties \textbf{i)}-\textbf{iv)}:

\textbf{i)} $p$ must contain a collider. Prove by contradiction. Suppose all vertices on $p$ are non-colliders. Then, since $\mathbf{V}^i$ can block $p$, we have $\mathbf{V}^j$ contains at least one non-collider on $p$. Hence, the union set $\mathbf{V}^i \cup \mathbf{V}^j$ also contains a non-collider on $p$. Therefore, $p$ can be blocked by $\mathbf{V}^i\cup \mathbf{V}^j$, which is a contradiction.

\textbf{ii)} In a similar way, we can prove that $\mathbf{V}^i$ and $\mathbf{V}^i \cup \mathbf{V}^j$ do not contain any non-collider on $p$.

\textbf{iii)} Since $p$ can be blocked by $\mathbf{V}^i$ and \textbf{ii)}, we have: $\exists\,$ a collider on $p$ such that the collider and its descendants are all in $\mathbf{V}^i$.

\textbf{iv)} For any collider $V_c$ on $p$, if $V_c$ and any vertex in $\mathbf{De}(V_c)$ are all not in $\mathbf{V}^i$, then, either $V_c$ or a vertex in $\mathbf{De}(V_c)$ is in $\mathbf{V}^j$. This is because if otherwise, $V_i$ and all vertices in $\mathbf{De}(V_i)$ are not in $\mathbf{V}^i \cup \mathbf{V}^j$. Therefore, the path $p$ can be blocked by $\mathbf{V}^i \cup \mathbf{V}^j$, which is a contradiction.

We then construct the path $p_1$ in the following way:

Denote those colliders on $p$ such that themselves and their descendants are all not in $\mathbf{V}^i$ as (in the order of their distance to $V_1$) as $\{V_{c_1},V_{c_2},...,V_{c_l}\}$. 

Now, consider the subpath $p^\prime$ of $p$ with $p^\prime:=<V_1,V_2,...,V_{c_1-1},V_{c_1}>$. We have the following analyses: \textbf{i)} By the definition of $V_{c_1}$, among $V_1,V_2,...,V_{c_1-1}$, all colliders and their descendants are in $\mathbf{V}^i$. \textbf{ii)} Among $V_1,V_2,...,V_{c_1-1}$, all non-colliders are not in $\mathbf{V}^i$. \textbf{iii)} Either $V_{c_1}$ or a vertex in $\mathbf{De}(V_{c_1})$ is in $\mathbf{V}^j$. 

If it is $V_{c_1} \in \mathbf{V}^j$, then we have the path $p_1=<V_1,V_2,...,V_{c_1}>$ between $V_1$ and a vertex in $\mathbf{V}^j$ satisfying that $p_1$ can not be blocked by $\mathbf{V}^i$, because of \textbf{i)} and \textbf{ii)}; If it is a vertex in $\mathbf{De}(\mathbf{V}^j)$ that is in $\mathbf{V}^j$, then, we have the path $p_1=<V_1,V_2,...,V_{c_1}\to \cdots \to V_j)$ for $V_j \in \mathbf{De}(V_{c_1}>$, between $V_1$ and a vertex in $\mathbf{V}^j$ satisfying that $p_1$ can not be blocked by $\mathbf{V}^i$, because of \textbf{i)}, \textbf{ii)}, and the definition of $V_{c_1}$.
\end{proof}

\begin{lemma}[Merging property]
For any causal graph $G$ where $Y$ is adjacent to a vertex $X_0$, and vertices in $\mathbf{V}\backslash\{Y,X_0\}$ are adjacent to at most one vertex in $\{Y,X_0\}$, merge\footnote{The merging operation means contradicting the edge $(Y,X_0)$ and merging $Y, X_0$ into a new vertex $\tilde{Y}$. Edges incident to $\tilde{Y}$ in $\tilde{G}$ are edges incident to either $Y$ or $X_0$ in $G$, their orientations on the $\tilde{Y}$ side can be randomly assigned, and do not influence $N_{\tilde{G}}$, while orientations on the other side keep the same as in $G$.} $Y, X_0$ into a new vertex $\tilde{Y}$ and denote the resulted graph as $\tilde{G}$. Then, we have $N_{\tilde{G}}+1 \leq N \leq 2N_{\tilde{G}}$.
\label{lemma: merging property}
\end{lemma}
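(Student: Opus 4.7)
The plan is to set up a surjection from equivalence classes of $G$ onto equivalence classes of $\tilde{G}$ and then bound the fiber sizes. Writing $0$ for the index of $X_0$, define $\psi: \mathrm{Pow}(S)/\!\sim_G \to \mathrm{Pow}(S\setminus\{0\})/\!\sim_{\tilde{G}}$ by $\psi([S^i]_G) := [S^i \setminus \{0\}]_{\tilde{G}}$. First I would verify that $\psi$ is well-defined (a separating witness $S_\cap$ for $S^i \sim_G S^j$ should descend to a separating witness $S_\cap \setminus \{0\}$ for $S^i\setminus\{0\} \sim_{\tilde{G}} S^j\setminus\{0\}$) and that it is surjective (every $T \subseteq S \setminus \{0\}$ is itself an element of $\mathrm{Pow}(S)$, so $\psi([T]_G) = [T]_{\tilde{G}}$). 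Both of these rely on translating $d$-separation statements between $G$ and $\tilde{G}$, which in turn uses the hypothesis that no vertex outside $\{Y, X_0\}$ is adjacent to both of them.

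For the upper bound $N_G \leq 2 N_{\tilde{G}}$, I would show that each fiber $\psi^{-1}([T]_{\tilde{G}})$ contains at most two $\sim_G$-classes, parametrized by whether $0$ belongs to the subset. Concretely, given $S^i, S^j$ with $S^i \setminus \{0\} \sim_{\tilde{G}} S^j \setminus \{0\}$ via some witness $T_\cap$, I would construct a $G$-witness by taking $S_\cap := T_\cap \cup \{0\}$ if both $S^i, S^j$ contain $0$, and $S_\cap := T_\cap$ if neither does. Verifying that $Y \ind_G \mathbf{X}_{S_\cap^c} \mid \mathbf{X}_{S_\cap}$ then reduces to translating each $G$-path that violates the separation into a corresponding $\tilde{G}$-path violating $\tilde{Y} \ind_{\tilde{G}} \mathbf{X}_{T_\cap^c} \mid \mathbf{X}_{T_\cap}$, contradicting the hypothesis.

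For the lower bound $N_G \geq N_{\tilde{G}} + 1$, it suffices to exhibit a single fiber of size $2$. I would take $\psi^{-1}([\emptyset]_{\tilde{G}})$, which contains both $[\emptyset]_G$ and $[\{0\}]_G$. These are distinct because $Y$ and $X_0$ are adjacent in $G$, so no $S_\cap \subseteq \emptyset \cap \{0\} = \emptyset$ can yield $Y \ind_G X_0$; hence $\emptyset \not\sim_G \{0\}$ by Def.~\ref{def:equi}. Surjectivity of $\psi$ makes every other fiber contribute at least $1$, giving $N_G \geq (N_{\tilde{G}} - 1) + 2 = N_{\tilde{G}} + 1$.

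The main obstacle is the path-translation lemma underlying both the well-definedness of $\psi$ and the upper bound. The structural hypothesis that each $V \in \mathbf{V} \setminus \{Y, X_0\}$ is adjacent to at most one of $\{Y, X_0\}$ is precisely what puts the neighbors of $\tilde{Y}$ in $\tilde{G}$ into bijection with the neighbors of $\{Y, X_0\}$ in $G$ (excluding the edge $Y*\!-\!*X_0$ itself), so each path incident to $\tilde{Y}$ in $\tilde{G}$ lifts unambiguously to a path incident to either $Y$ or $X_0$ in $G$, and vice versa. The cleanest argument should proceed by case analysis on whether $0 \in S_\cap$: including $0$ in the conditioning set blocks any $G$-path that enters $\{Y, X_0\}$ through $X_0$, which mirrors exactly what happens at $\tilde{Y}$ in $\tilde{G}$ since $\tilde{Y}$ is the separation target rather than a conditioning vertex.
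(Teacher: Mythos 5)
Your upper bound is fine, and it is essentially the paper's own argument: grouping subsets according to $\mathbf{T}:=S^i\cap\{0\}$ and lifting a $\tilde{G}$-witness $T_\cap$ to the $G$-witness $T_\cap$ (if $\mathbf{T}=\emptyset$) or $T_\cap\cup\{0\}$ (if $\mathbf{T}=\{0\}$) is exactly how the paper gets $N_G\le 2N_{\tilde{G}}$, with the $\mathbf{T}=\{0\}$ case handled by an auxiliary path lemma showing that enlarging the conditioning set by $X_0$ cannot unblock a path to $Y$ without violating $X_0\ind_G \mathbf{X}_{T_\cap^c}\mid\mathbf{X}_{T_\cap}$.

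The lower bound, however, has a genuine gap: the map $\psi([S^i]_G):=[S^i\setminus\{0\}]_{\tilde{G}}$ on which your whole fiber count rests is not well-defined, so there is no partition of the $G$-classes into fibers to sum over. Concretely, take $G$ to be $Y\to X_0\to X_1$ (so $X_1$ is adjacent only to $X_0$, and the hypotheses of the lemma hold). Then $\{0\}\sim_G\{0,1\}$ with witness $S_\cap=\{0\}$, since $Y\ind_G X_1\mid X_0$; but after removing $0$ the images $\emptyset$ and $\{1\}$ are \emph{not} $\sim_{\tilde{G}}$-equivalent, because $\tilde{Y}$ is adjacent to $X_1$ in $\tilde{G}$. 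So a single $G$-class can straddle several $\tilde{G}$-classes and the witness $S_\cap\setminus\{0\}$ does not ``descend'' as you hope. The natural repair, mapping in the other direction $[T]_{\tilde{G}}\mapsto[T]_G$ (which \emph{is} well-defined by your lifting argument), fails to be injective: with $G$ given by $Y\to X_0\leftarrow X_1$ one has $\emptyset\sim_G\{1\}$ (the collider $X_0$ blocks the only path) while $\emptyset\not\sim_{\tilde{G}}\{1\}$, so two distinct $\tilde{G}$-classes collapse into one $G$-class. This loss of injectivity is precisely why the paper's proof of $N_{\tilde{G}}+1\le N_G$ is the long part of the argument: it splits on whether $X_0$ is a complete collider or complete non-collider (where taking $\{X_0\}$ as a selection set reproduces $\tilde{G}$ and the melting lemma applies), and in the remaining case partitions subsets by their intersection with $\{X_0\}\cup\mathbf{De}(X_0)$ and proves two \emph{two-sided} equivalences (its Claims 1 and 2, valid only on those restricted families) to conclude $N_1\ge\tilde{N}_2$, $N_3=\tilde{N}_1$, $N_4\ge 1$. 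Your observation that $[\emptyset]_G\neq[\{0\}]_G$ is correct but, without a well-defined surjection with controlled fibers, it only shows $N_G\ge 2$; the proposal contains no substitute for the injectivity/case analysis that the left inequality actually requires.
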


\begin{proof}
During the proof, we omit the subscript and denote $N_G$ as $N$, $N_{\tilde{G}}$ as $\tilde{N}$, $\mathbf{X}_{S_i}$ as $\mathbf{X}^i$ for brevity.

Proof of the right side. We show for any $\mathbf{T}\subseteq \{X_0\}, \mathbf{X}^{i}, \mathbf{X}^{j} \subseteq \mathbf{V}\backslash \{Y, X_0\}$, if $\mathbf{X}^i \sim_{\tilde{G}} \mathbf{X}^j$, then $\mathbf{X}^{i} \cup \mathbf{T} \sim_G \mathbf{X}^j \cup \mathbf{T}$. In this regard, for any subset $\mathbf{T}$, there are at most $\tilde{N}$ equivalent classes in $G$, thus $N\leq 2^{|\mathbf{T}|} \tilde{N}=2\tilde{N}$.

\textbf{1.} By $\mathbf{X}^i \sim_{\tilde{G}} \mathbf{X}^j$, we have $\exists\, \mathbf{X}^{ij} \subseteq \mathbf{X}^i \cap \mathbf{X}^j$ such that $\tilde{Y} \ind_{\tilde{G}}  
 \mathbf{X}^{({ij})^c} | \mathbf{X}^{ij}$. In other word, we have $\{Y,X_0\} \ind_G \mathbf{X}^{({ij})^c} | \mathbf{X}^{ij}$.

\textbf{2.} When $\mathbf{T}=\emptyset$, by \textbf{1.}, we have $Y \ind_G  \mathbf{X}^{({ij})^c}  | \mathbf{X}^{ij}$ and thus $\mathbf{X}^i \sim_G \mathbf{X}^j$ holds.

\textbf{3.} When $\mathbf{T}=\{X_0 \}$, by \textbf{1.}, we have: \textbf{i)} any path in $G$ between $\mathbf{X}^{({ij})^c}$ and $Y$ can be blocked by $\mathbf{X}^{ij}$, and \textbf{ii)} any path in $G$ between $\mathbf{X}^{({ij})^c}$ and $\mathbf{T}$ can be blocked $\mathbf{X}^{ij}$. Next, we show any path in $G$ between $\mathbf{X}^{({ij})^c}$ and $Y$ can also be blocked by $\mathbf{X}^{ij}\cup \mathbf{T}$, which indicates $\mathbf{X}^i \cup \mathbf{T} \sim_G \mathbf{X}^j \cup \mathbf{T}$.

Prove by contradiction. Suppose there is a path between $\mathbf{X}^{({ij})^c}$ and $Y$ that can be blocked by $\mathbf{X}^{ij}$ and can not be blocked by $\mathbf{X}^{ij} \cup \mathbf{T}$. By Lemma. \ref{lemma: property 3 of d-separation}, we can construct a path between $\mathbf{X}^{({ij})^c}$ and $\mathbf{T}$ such that it can not be blocked by $\mathbf{X}^{ij}$, which contradicts with \textbf{3. ii)}.

Proof of the left side. We first prove the in-equation under the case when $X_0$ is a complete collider\footnote{$X_0$ is called a complete (non-)collider if it is a (non-)collider on any path $p:=<X_i,X_0,Y>$ with $X_i \in \mathbf{Neig}_G(X_0)\backslash \{Y_0\}$.}. With $\{X_0\}$ as the selection set, the induced MAG is $\tilde{G}$, since there is at least one equivalent class when not conditioning on $X_0$, we have $\tilde{N}+1 \leq N$ by Alg.~\ref{alg: recover g-equivalence} and Lemma~\ref{lemma: melting property}. 

For the cases when $X_0$ is a complete non-collider, or $X_0$ is a partial collider and $\exists\, X_i \in \mathbf{De}(X_0)$ such that $X_i$ is incident to a tail-tail\footnote{A tail-tail edge is an edge $*\!-\!*$ with orientations at both sides being tails, \emph{i.e.}, $-$. According to the definition of a tail-tail edge \cite{zhang2008completeness}, a vertex is incident to a tail-tail edge means it is an ancestor of a member of the selection set.} edge, we can prove $\tilde{N}+1 \leq N$ in a similar way.

Next, we discuss the case where $X_0$ is a partial collider and $\forall X_i \in \mathbf{De}(X_0)$, $X_i$ is not incident to a tail-tail edge. We first show the following properties \textbf{1.} and \textbf{2.}.

\textbf{1.} In $G$, for two vertex sets $\mathbf{X}^i, \mathbf{X}^j$, if $X_0 \in \mathbf{X}^i$ and $X_0 \not \in \mathbf{X}^j$, then $\mathbf{X}^i \not \sim_G \mathbf{X}^j$. This is because $Y$ is adjacent to $X_0$ in $G$.

Further, we show for two vertex sets $\mathbf{X}^i, \mathbf{X}^j$, if $\mathbf{X}^i \cap \mathbf{De}(X_0) \neq \emptyset$ and $\mathbf{X}^j \cap \mathbf{De}(X_0) = \emptyset$, then $\mathbf{X}^i \not \sim_G \mathbf{X}^j$. This is proved as follows. For $X_i \in \mathbf{X}^i \cap \mathbf{De}(X_0)$, there is a path $p:=<Y*\!\to X_0 \to \cdots \to X_i>$ from $Y$ to $X_i$. Since $\mathbf{X}^j \cap \mathbf{De}(X_0)=\emptyset$, $\forall\, \mathbf{X}^{ij} \subseteq \mathbf{X}^i \cap \mathbf{X}^j$, we have $\mathbf{X}^{ij} \cap \mathbf{De}(X_0) = \emptyset$ and $X_i \in \mathbf{X}^{({ij})^c}$. As a result, $\forall\, \mathbf{X}^{ij} \subseteq \mathbf{X}^i \cap \mathbf{X}^j$, there is a path $p$ between $Y$ and $\mathbf{X}^{({ij})^c}$ such that $p$ can not be blocked by $\mathbf{X}^{ij}$, \emph{i.e.}, $\mathbf{X}^i \not \sim_G \mathbf{X}^j$.

In $\tilde{G}$, similarly, for two vertex sets $\mathbf{X}^i, \mathbf{X}^j$, if $\mathbf{X}^i \cap \mathbf{De}(X_0) \neq \emptyset$ and $\mathbf{V}_j \cap \mathbf{De}(X_0) = \emptyset$, then $\mathrm{X}_i \not \sim_{\tilde{G}} \mathbf{X}^j$.

\textbf{2.} In $G$, by \textbf{1.}, divide subsets of $\mathbf{X}$ into those that contain $X_0$ and those that do not contain $X_0$. Denote the number of equivalent classes in them as $N_1, N_2$, respectively, we have $N=N_1+N_2$. Further, divide those subsets that do not contain $X_0$ into those that have an intersection with $\mathbf{De}(X_0)$ and those that have no intersection with $\mathbf{De}(X_0)$. Denote the number of equivalent classes in them as $N_3, N_4$, respectively. We have $N_2=N_3+N_4$ and thus $N=N_1+N_3+N_4$.

Similarly, in $\tilde{G}$, divide subsets of $\mathbf{X}\backslash \{X_0\}$ into those that have an intersection with $\mathbf{De}(X_0)$ and those that have no intersection with $\mathbf{De}(X_0)$. Denote the number of equivalent classes in them as $\tilde{N}_1, \tilde{N}_2$, respectively. We have $\tilde{N}=\tilde{N}_1+\tilde{N}_2$.

It is straightforward to have $N_4 \geq 1$. In the following, we will show $N_1 \geq \tilde{N}_2, N_3=\tilde{N}_1$ and thus $N\geq \tilde{N}+1$.

\textbf{Claim. 1.} For two subsets of vertex $\mathbf{X}^i, \mathbf{X}^j$ such that $\mathbf{X}^i \cap (X_0 \cup \mathbf{De}(X_0))=\emptyset$ and $\mathbf{X}^j \cap (X_0 \cup \mathbf{De}(X_0))=\emptyset$, then $\mathbf{X}^i \sim_{\tilde{G}} \mathbf{X}^j \Leftrightarrow  \mathbf{X}^i \cup X_0 \sim_G \mathbf{X}^j \cup X_0$, which indicates $N_1 \geq \tilde{N}_2$.

Proof of \textbf{Claim. 1}. $\Rightarrow$ can be proved similarly as the proof of the right side.

$\Leftarrow$ Suppose $\mathbf{X}^i \cup X_0 \sim_G \mathbf{X}^j \cup X_0$, we will show $\mathbf{X}^i \sim_{\tilde{G}} \mathbf{X}^j$, given the fact that $\mathbf{X}^i, \mathbf{X}^j$ do not contain $X_0$ nor its descendants, and any member of $\{X_0\}\cup \mathbf{De}(X_0)$ is not incident to a tail-tail edge.

\textbf{1.} For $X_i \in \mathbf{De}(X_0)$ and $X_j \not\in \mathbf{De}(X_0)$, since $X_i - X_j$ and $X_i \to X_j$ is not allowed, it must be $X_i \leftarrow\!* X_j$. Similarly, we have $X_0 \to X_i$ and $X_0 \leftarrow\!* X_j$.

\textbf{2.} By $\mathbf{X}^i \cup X_0 \sim_G \mathbf{X}^j \cup X_0$, we have $\exists\, \mathbf{X}^{ij} \subseteq (\mathbf{X}^i \cup X_0) \cap (\mathbf{X}^j \cup X_0)$ such that $Y \ind_G  (\mathbf{X}^i \cup \mathbf{X}^j \cup X_0) \backslash \mathbf{X}^{ij}  | \mathbf{X}^{ij}$. Since $Y$ is adjacent to $X_0$, $\mathbf{X}^{ij}$ must contain $X_0$. That is, $\exists\, \mathbf{X}^{ij} \subseteq \mathbf{X}^i \cap \mathbf{X}^j$ such that $ Y \ind_G (\mathbf{X}^i \cup \mathbf{X}^j \cup X_0) \backslash (\mathbf{X}^{ij} \cup X_0)| \mathbf{X}^{ij} \cup X_0$, which is equivalent to $Y \ind_G  \mathbf{X}^{({ij})^c}  | \mathbf{X}^{ij} \cup X_0$.

\textbf{3.} We first show $Y \ind_G  \mathbf{X}^{({ij})^c}  | \mathbf{X}^{ij}$, which is equivalent to showing any path between $Y$ and $\mathbf{X}^{({ij})^c}$ can be blocked by $\mathbf{X}^{ij}$. Prove by contradiction. Suppose there is a path $p_0:=<X_{k_1},X_{k_2},...,Y>$ between $Y$ and $\mathbf{X}^{({ij})^c}$ that can be blocked by $\mathbf{X}^{ij} \cup X_0$, and can not be blocked by $\mathbf{X}^{ij}$. By Lemma. \ref{lemma: union d/m-separation}, the set $\{X_0\}$ contains a non-collider on $p_0$, which means $X_0$ is a non-collider on $p_0$.

Then, $X_0$ is incident to at least one tail on $p_0$, since $X_0 \leftarrow\!* X_j$ for $X_j \not\in \mathbf{De}(X_0)$, $p_0$ must contain a member of $\mathbf{De}(X_0)$ and $p_0=<X_{k_1},X_{k_2},...,X_0 \to X_i, ..., Y>$ for $X_i \in \mathbf{De}(X_0)$. Since $Y$ is not a member of $\mathbf{De}(X_0)$, there is $X_0 \to \cdots \to X_i \leftarrow\!* X_j$ for $X_i \in \mathbf{De}(X_0)$ and $X_j \not\in \mathbf{De}(X_0)$ on $p_0$. As a result, $p_0$ contains a collider that itself nor its descendants are in $\mathbf{X}^{ij}$. This means $p_0$ can be blocked by $\mathbf{X}^{ij}$ and thus a contradiction.

\textbf{4.} We then show $X_0 \ind_G \mathbf{X}^{({ij})^c} | \mathbf{X}^{ij}$, which together with \textbf{3.} means $\{Y,X_0\} \ind_G  \mathbf{X}^{({ij})^c}| \mathbf{X}^{ij}$ and thus $\mathbf{X}^i \sim_{\tilde{G}} \mathbf{X}^j$. We show this by proving any path between $\mathbf{X}^{({ij})^c}$ and $X_0$ can be blocked by $\mathbf{X}^{ij}$. Prove by contradiction. Suppose there is a path $p_1:=<X_{k_1},X_{k_2},...,X_{k_l},X_0>$ that can not be blocked by $\mathbf{X}^{ij}$.

Then, if $X_{k_l} \not\in \mathbf{De}(X_0)$, we have a path $p_2:=<X_{k_1},...,X_{k_l} *\!\to X_0 \leftarrow * Y>$ such that $p_2$ can not be blocked by $\mathbf{X}^{ij} \cup X_0$, which contradicts with $Y \ind_G  \mathbf{X}^{({ij})^c}  | \mathbf{X}^{ij} \cup X_0$. Otherwise $X_{k_l} \in \mathbf{De}(X_0)$, then we have a path $p_2:=<X_{k_1},...,X_{k_l} \leftarrow X_0 \leftarrow\!* Y>$. Since $X_{k_1} \in \mathbf{X}^{({ij})^c}$ and thus $X_{k_1} \not\in \mathbf{De}(X_0)$, there is $X_j \to X_i \leftarrow \cdots \leftarrow X_0$, with $X_j \not\in \mathbf{De}(X_0)$, $X_i \in \mathbf{De}(X_0)$, between $X_{k_1}$ and $X_0$. Hence, we have $X_i$ is a collider on $p_1$, itself nor its descendants are in $\mathbf{X}^{ij}$, which means $p_1$ can be blocked by $\mathbf{X}^{ij}$ and thus a contradiction.

To conclude, \textbf{3.} and \textbf{4.} mean $\Leftarrow$ is true.

\textbf{Claim. 2.} Two subsets of vertices $\mathbf{X}^i, \mathbf{X}^j$ such that $\mathbf{X}^i, \mathbf{X}^j$ do not contain $X_0$, $\mathbf{X}^i \cap \mathbf{De}(X_0) \neq \emptyset$, and $\mathbf{X}^j \cap \mathbf{De}(X_0) \neq \emptyset$, then, $\mathbf{X}^i \sim_{\tilde{G}} \mathbf{X}^j \Leftrightarrow \mathbf{X}^i \sim_G \mathbf{X}^j$, which indicates $N_3=\tilde{N}_1$.

Proof of \textbf{Claim. 2}.  $\Rightarrow$ can be proved similarly as the proof of the right side.

$\Leftarrow$ \textbf{1.} By $\mathbf{X}^i \sim_G \mathbf{X}^j$, we have $\exists\, \mathbf{X}^{ij} \subseteq \mathbf{X}^i \cap \mathbf{X}^j$ such that $Y \ind_G  \mathbf{X}^{({ij})^c}  | \mathbf{X}^{ij}$.  

We show that $\mathbf{X}^{ij}$ must contain a member of $\mathbf{De}(X_0)$. Prove by contradiction. Suppose $\mathbf{X}^{ij} \cap \mathbf{De}(X_0) = \emptyset$, which means $\mathbf{X}^{ij}$ does not contain $X_0$ nor its descendants. Since $\mathbf{X}^i \cap \mathbf{De}(X_0) \neq \emptyset$ and $\mathbf{X}^j \cap \mathbf{De}(X_0) \neq \emptyset$, $(\mathbf{X}^i \cup \mathbf{X}^j) \backslash (\mathbf{X}^{ij} \cap \mathbf{De}(X_0)) \neq \emptyset$. As a result, there is a path $p_0:=<Y*\!\to X_0 \to \cdots \to X_i>$, for $X_i \in \mathbf{De}(X_0)$, between $Y$ and $\mathbf{X}^{({ij})^c}$ that can not be blocked by $\mathbf{X}^{ij}$, which contradicts with $Y \ind_G  \mathbf{X}^{({ij})^c}  | \mathbf{X}^{ij}$. As a result, $\mathbf{X}^{ij}$ must contain a member of $\mathbf{De}(X_0)$.

\textbf{2.} Next, we show $X_0 \ind_G \mathbf{X}^{({ij})^c} | \mathbf{X}^{ij}$, which together with \textbf{1.} indicates $\{Y,X_0\} \ind_G  \mathbf{X}^{({ij})^c} | \mathbf{X}^{ij}$ and thus $\mathbf{X}^i \sim_{\tilde{G}} \mathbf{X}^j$. We prove this by showing any path between $\mathbf{X}^{({ij})^c}$ and $X_0$ can be blocked by $\mathbf{X}^{ij}$.

Prove by contradiction. Suppose there is a path $p_1:=<X_{k_1},X_{k_2},...,X_{k_l},X_0>$ that can not be blocked $\mathbf{X}^{ij}$. Consider the path $p_2:=<X_{k_1},X_{k_2},...,X_{k_l},X_0\leftarrow\!* Y>$ constructed from $p_1$. If $X_0$ is a non-collider, since $\mathbf{X}^{ij}$ does not contain $X_0$, $p_2$ can not be blocked by $\mathbf{X}^{ij}$. Otherwise $X_0$ is a collider, since $\mathbf{X}^{ij}$ contains a member of $\mathbf{De}(X_0)$, $p_2$ can not be blocked by $\mathbf{X}^{ij}$ neither. These results contradict with $Y \ind_G  \mathbf{X}^{({ij})^c}  | \mathbf{X}^{ij}$. Hence, we have $X_0 \ind_G \mathbf{X}^{({ij})^c}| \mathbf{X}^{ij}$.

To conclude, \textbf{1.} and \textbf{2.} mean $\Leftarrow$ is true. \textbf{Claim.1} indicates $N_1 \geq \tilde{N}_2$, \textbf{Claim.2} indicates $N_3 = \tilde{N}_1$, and thus $N\geq \tilde{N}+1$.
\end{proof}

\begin{corollary}[Merging property for multiple vertices]
For any causal graph $G$ where $Y$ is adjacent to a connected vertex sets $\mathbf{X}_0$, and vertices in $\mathbf{V}\backslash (\mathbf{X}_0 \cup Y)$ is adjacent to at most one vertex in $\mathbf{X}_0 \cup Y$. Merge $Y, \mathbf{X}_0$ into a new vertex $\tilde{Y}$ and call the resulted graph $\tilde{G}$. Then, $\tilde{N}+|\mathbf{X}_0| \leq N \leq 2^{|\mathbf{X}_0|} \tilde{N}$.
\end{corollary}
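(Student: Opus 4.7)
The plan is to prove the corollary by induction on $k := |\mathbf{X}_0|$, using Lemma (Merging property) for the single-vertex case as the inductive tool. The base case $k=1$ is exactly that lemma, giving $\tilde N + 1 \le N \le 2\tilde N$.

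For the inductive step with $|\mathbf{X}_0|=k+1 \ge 2$, since $\mathbf{X}_0 \cup \{Y\}$ is connected in $G$, I can select some $X_0 \in \mathbf{X}_0$ that is adjacent to $Y$. Let $G_1$ be the graph obtained by merging $Y$ and $X_0$ into a new vertex $Y_1$, and set $\mathbf{X}_0' := \mathbf{X}_0 \setminus \{X_0\}$. First I would verify that $G_1$ still satisfies the hypotheses of the corollary relative to $\mathbf{X}_0'$ and $Y_1$: $(i)$ $\mathbf{X}_0' \cup \{Y_1\}$ remains connected because every edge previously incident to $Y$ or $X_0$ is inherited by $Y_1$ under the merging operation; $(ii)$ every vertex $V_i \in V(G_1) \setminus (\mathbf{X}_0' \cup \{Y_1\})$ was originally in $V(G) \setminus (\mathbf{X}_0 \cup \{Y\})$, hence was adjacent to at most one vertex of $\mathbf{X}_0 \cup \{Y\}$, so it remains adjacent to at most one vertex of $\mathbf{X}_0' \cup \{Y_1\}$ in $G_1$.

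Having verified the inductive hypothesis applies to $G_1$, I would then apply Lemma (Merging property) to the single merge $Y \leftrightarrow X_0$ inside $G$ to obtain $N_{G_1} + 1 \le N \le 2 N_{G_1}$, and combine this with the inductive hypothesis $\tilde N + k \le N_{G_1} \le 2^{k}\tilde N$ applied to $G_1$ (whose cluster has size $k$). Chaining the two-sided bounds yields $\tilde N + (k+1) \le N \le 2^{k+1}\tilde N$, which closes the induction with the desired bounds $\tilde N + |\mathbf{X}_0| \le N \le 2^{|\mathbf{X}_0|}\tilde N$.

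The main technical obstacle is justifying the invocation of the single-vertex Lemma inside the inductive step: that lemma requires every vertex in $V\setminus\{Y,X_0\}$ to be adjacent to at most one of $\{Y,X_0\}$, whereas the corollary only guarantees this for vertices \emph{outside} the cluster $\mathbf{X}_0 \cup \{Y\}$. A sibling $X_j \in \mathbf{X}_0 \setminus \{X_0\}$ could be adjacent to both $Y$ and $X_0$ within the cluster, which would violate the lemma's precondition. To handle this, I would either refine the choice of $X_0$ by taking it to be a leaf of a spanning tree of the induced subgraph on $\mathbf{X}_0 \cup \{Y\}$ rooted at $Y$ (so that its unique neighbor in that spanning subgraph is its tree parent, reducing the chance of double-adjacency), or extract a mild strengthening of Lemma (Merging property) by inspecting its proof — the partition-by-$X_0$-membership argument for the upper bound and the selection-set MAG construction for the lower bound do not essentially use the ``at most one'' condition on cluster-internal vertices — to make the hypothesis match what is available after peeling off $X_0$.
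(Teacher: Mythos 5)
Your induction reproduces, in spirit, the paper's treatment of the lower bound, but the step you yourself flag is a genuine gap, and neither of your proposed repairs closes it. Applying Lemma~\ref{lemma: merging property} to merge $Y$ with a chosen $X_0\in\mathbf{X}_0$ requires that \emph{every} vertex of $\mathbf{V}\setminus\{Y,X_0\}$ --- including the remaining cluster vertices $\mathbf{X}_0\setminus\{X_0\}$ --- be adjacent to at most one of $\{Y,X_0\}$, whereas the corollary's hypothesis only controls vertices outside $\mathbf{X}_0\cup\{Y\}$. Take $\mathbf{X}_0=\{X_0,X_j\}$ with $Y,X_0,X_j$ pairwise adjacent: whichever cluster vertex you peel off first, the other one is adjacent to both merge endpoints, so the lemma's precondition fails and no ordering avoids this. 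Your fix (1) (choose $X_0$ as a leaf of a spanning tree of the cluster) does not help, because picking a tree leaf does not remove the offending non-tree edges of $G$ (and a tree leaf need not even be adjacent to $Y$); you concede it only ``reduces the chance,'' which is not a proof. Your fix (2) is an unverified assertion: the lower-bound half of Lemma~\ref{lemma: merging property} runs a case analysis on whether $X_0$ is a complete collider, on $\mathbf{De}(X_0)$, and on how paths through the pair $\{Y,X_0\}$ translate to $\tilde G$, and the ``at most one'' hypothesis is what keeps the contraction well defined and separation-preserving; one cannot simply declare it inessential without redoing that argument.

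The paper closes this gap differently in the two directions, and the difference matters. For the lower bound it first deletes edges inside $\mathbf{X}_0\cup\{Y\}$ until the induced subgraph is a spanning tree rooted at $Y$: by Lemma~\ref{lemma: add delete edges} deletion can only decrease $N$, it leaves $\tilde G$ unchanged, and the tree structure rules out intra-cluster double adjacencies, so the single-vertex lemma can then be applied iteratively to give $N\ge\tilde N+|\mathbf{X}_0|$. This preprocessing trick is unavailable for the upper bound, since deleting edges moves $N$ in the wrong direction; accordingly the paper does not chain factor-$2$ bounds at all, but re-runs the partition argument of Lemma~\ref{lemma: merging property} directly with $\mathbf{T}\subseteq\mathbf{X}_0$, showing that for each of the $2^{|\mathbf{X}_0|}$ choices of $\mathbf{T}$ the equivalences of $\tilde G$ lift to equivalences among the sets $\mathbf{X}^i\cup\mathbf{T}$, whence $N\le 2^{|\mathbf{X}_0|}\tilde N$; that argument never needs the problematic precondition. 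So to repair your proposal you would need both to add the edge-deletion step before chaining the lower bounds and to replace the upper-bound chaining by the direct argument (or genuinely prove the strengthened single-vertex lemma), not merely refine the choice of $X_0$.
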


\begin{proof}
Proof of the right side is the same as Lemma. \ref{lemma: merging property}. 

Proof of the left side. Since $\mathbf{X}_0$ is a connected set and $Y$ is adjacent to $\mathbf{X}_0$, we can delete edges among $\mathbf{X}_0 \cup Y$ until the subgraph over $\mathbf{X}_0 \cup Y$ becomes a spanning tree over $\mathbf{X}_0 \cup Y$ with $Y$ as the root vertex. Then, iteratively merge vertices and use Lemma. \ref{lemma: merging property}, Lemma~\ref{lemma: add delete edges}, we have $N \geq \tilde{N}+|\mathbf{X}_0|$.
\end{proof}

\subsection{Details of Prop.~\ref{prop:complexity(informal)}: Complexity}

In this section, we discuss the complexity of searching $N_G$ equivalence classes. We show that compared to the exponential cost $O(2^{d_S})$ of exhaustive search, our search strategy enjoys a polynomial cost $\mathrm{P}(d_S)$ when $G_S$ is mainly composited of chain vertices. Our analysis mainly uses the results in Lemma~\ref{lamma: large n >= 3} and Lemma~\ref{lemma: merging property}. The idea is briefed as follows:

Lemma~\ref{lamma: large n >= 3} shows that any $G_S$ with $d_{>2}=\omega(\log(d))$ has $N_G=\mathrm{NP}(d_S)$. Hence, we need to look at cases when $d_{>2}=O(\log(d))$. For these cases, Lemma~\ref{lemma: merging property} shows that the non-chain vertices in $G_S$ do not influence the rank of $N_G$. This is because we can iteratively merge the non-chain vertices into $Y$ and have $N_G$ being squeezed within $N_{\tilde{G}}\sim 2^{d_{>2}}N_{\tilde{G}}$. Since $2^{d_{>2}}=\mathrm{P}(d_S)$, we have $N_G=\mathrm{P}(d_S)$ if and only if $N_{\tilde{G}}=\mathrm{P}(d_S)$. Therefore, the rank of $N_G$ when $d_{>2}=O(\log(d_S))$ is decided by $N_{\tilde{G}}$, in other words, by the chain vertices with deg$\leq 2$ in $G_S$.

Intuitively, when the chain vertices compose different chains that do not intersect each other, by Claim~\ref{example: chain}, the $N_{\tilde{G}}$ is the product of the chains' lengths. Hence, the more ``intensive'' the chain vertices distribute, the smaller $N_{\tilde{G}}$ and thus $N_G$ will be. In the following, we will provide a formal metric $F_G$ to measure the intensity of chain vertices.

We first define the following structures on $G_S$. For brevity, we omit the subscript and denote $G_S$ as $G$, $d_S$ as $d$, respectively.
\begin{enumerate}
    \item A path is a sequence of distinct vertices $<V_1,V_2,...,V_l>$ where $V_i, V_{i+1} (i=1,2,...,l-1)$ are adjacent in $G$. The length of the path is $l$. Define the distance between a vertex and $Y$ and the length of the shortest path between them.
    \item A chain is a path where every vertex on it has $\mathrm{deg} \leq 2$ in $G$. The head of a chain is the vertex in it that is closest to $Y$. A maximal chain is a chain that can not be made longer by adding new vertices. 
    \item For a maximal chain $c$, let $c \in \mathbf{Ch}^\prime(Y)$ if there is no other maximal chain in the shortest path between the head of $c$ and $Y$. For two maximal chains $c_1, c_2$, let $c_2 \in \mathbf{De}^\prime(c_1)$ if there is a path between a vertex in $c_2$ and $Y$ that contains $c_1$.
    \item For a maximal chain $c \in \mathbf{Ch}^\prime(Y)$, define a set of operations $\mathrm{opt}_c(G)$ on $G$. Specifically, if $\mathbf{De}^\prime(c) = \emptyset$, define $\mathrm{opt}_{c}(G):=\{\text{remove} \, c\}$; Otherwise, define $\mathrm{opt}_{c}(G):=\{\text{remove} \, \mathbf{X}_{c}^{1:i} | i=1,2,...,l\} \cup \{ \text{replace} \, c \, \text{with an edge}\}$, with $l$ the number of vertices on $c$, $X^i_{c}$ the $i$-th one (in the order of the distance to $Y$), and $\mathbf{X}^{1:i}_c:=\{X_c^1,X_c^2,...,X_c^i\}$.
    \item For a maximal chain with $l$ vertices, define $\mathrm{cost}(c):=l+1$ if $c$ has one head vertex, $(l^2+l+2)/2$ if $c$ has two head vertices (that is both sides of $c$ have equal distance to $Y$).
\end{enumerate}

\noindent\textbf{Proposition~\ref{prop:complexity(informal)}} (Complexity)\textbf{.} Let $F_G$ be an recursive metric defined over maximal chains in $G$:
\begin{equation*}
    F_G:=\prod_{\substack{c\in \mathbf{Ch}^\prime_G(Y) \\ \mathbf{De}^\prime(c) = \emptyset}} \mathrm{cost}(c) \sum_{\mathrm{opt}\in \prod_{c\in \mathbf{Ch}^\prime_G(Y)}\mathrm{opt}_{c}} F_{\mathrm{opt}(G)}.
\end{equation*}
Then, $N_G = \mathrm{P}(d)$ if and only if $d_{>2} = O(\mathrm{log}(d))$ and $F_G = \mathrm{P}(d)$.

\begin{proof}
To prove the proposition, we will show \textbf{i)} if $d_{>2}=\omega(\mathrm{log}(d))$, then $N_G=\mathrm{NP}(d)$; \textbf{ii)} if $d_{>2}=O(\mathrm{log}(d))$, then $N_G=\mathrm{P}(d) \Leftrightarrow F_G = \mathrm{P}(d)$. 

Specifically, \textbf{ii)} means if $d_{>2} = O(\mathrm{log}(d))$ and $F_G = \mathrm{P}(d)$, then $N_G=\mathrm{P}(d)$, which shows $\Leftarrow$ of the proposition. \textbf{i)} means if $N_G = \mathrm{P}(d)$, then $d_{>2} = O(\mathrm{log}(d))$, together with \textbf{ii)}, it means if $N_G=\mathrm{P}(d)$, then $d_{>2}=O(\mathrm{log}(d))$ and $F_G=\mathrm{P}(d)$, which shows $\Rightarrow$ of the proposition.

The proof of \textbf{i)} is at Lemma. \ref{lamma: large n >= 3}. The proof of \textbf{ii)} is as follows:

\textbf{Claim. 1.} $F_G \leq N_G \leq 2^{d_{>2}} F_G$.

Proof of \textbf{Claim. 1}. Modify Alg.~\ref{alg: recover g-equivalence} in the following way:

\textbf{i)} Add before line-2: if $\mathbf{Neig}(Y)$ contains non-chain vertices, then merge them into $Y$ until $\mathbf{Neig}(Y)$ only contains non-chain vertices. Call the resulting graph as $\tilde{G}$. \textbf{ii)} Replace the $G$ with $\tilde{G}$, $G^\prime$ with $\tilde{G}^\prime$, and $N_G$ with $N_{\tilde{G}}$, in lines 4-12. Call the modified algorithm as $N_{\tilde{G}}=\mathrm{count}^\prime(G)$.

Next, we first show $F_G=N_{\tilde{G}}$, then prove \textbf{Claim. 1} via the $\mathrm{count}^\prime(G)$ algorithm.

After merging non-chain vertices around $Y$, in $\tilde{G}$, $Y$'s neighbors are the head vertices of the maximal chains in $\mathbf{Ch}^\prime(Y)$. Recursively conduct the $\mathrm{count}^\prime$ algorithm on $\tilde{G}$ and its induced MAGs $\tilde{M}_G$, until all vertices in all maximal chains in $\mathbf{Ch}^\prime(Y)$ have been traversed.

For maximal chains without descendants, since they are disjoint with the other maximal chains, we have $N_{\tilde{G}}=(\prod_{c\in \mathbf{Ch}^\prime_{\tilde{G}}(Y), \mathbf{De}^\prime_{\tilde{G}}(c)=\emptyset} \mathrm{cost}(c))N_{\mathrm{opt}_1(\tilde{G})}$, where $\mathrm{cost}(c)=l+1$ if $c$ has one head vertex (see Claim~\ref{example: chain}) and $(l^2+l+2)/2$ if $c$ has two head vertices (see Claim~\ref{example: circle}), and $\mathrm{opt}_1:=\prod_{c\in \mathbf{Ch}^\prime_{\tilde{G}}(Y), \mathbf{De}^\prime_{\tilde{G}}(c)=\emptyset} \mathrm{cost}(c) \mathrm{opt}_{c}$, and $\mathrm{opt}_1(\tilde{G})$ is the causal graph after removing all maximal chains without descendants.

In $\mathrm{opt}_1(\tilde{G})$, denote the remained maximal chains in $\mathbf{Ch}^\prime(Y)$ as $\{c_i\}_{i=1:r}$ and the vertices on them as $\{X^i_1, ..., X^i_{l_i}\}_{i=1:r}$. To obtain $N_{\mathrm{opt}_1(\tilde{G})}$, for each $c_i$, similarly as the analysis of the Claim \ref{example: chain}, we need to consider the following $l_i+1$ situations: $X^i_1$ is blocked\footnote{A vertex of deg=2 is blocked if it is a non-collider and is put in the selection set, or it is a collider and is put in the latent set. A vertex is open if it is not blocked}; $X^i_1$ is open, $X^i_2$ is blocked; ...; $X^i_1, ...,X^i_{l_i-1}$ are blocked, $X^i_{l_i}$ is open; and $X^i_1,...,X^i_{t_{i}}$ are open. Because vertices in the $r$ maximal chains are disjoint, we in total need to consider $\prod_{i=1}^r l_i+1$ situations, and $N_{\mathrm{opt}_1(\tilde{G})}=\sum_{j=1}^{\prod_{i=1}^r l_i+1} N_{\mathrm{opt}_1 (\tilde{G})^\prime_j}$, where $\mathrm{opt}_1 (\tilde{G})^\prime_j$ denotes the induced subgraph from $\mathrm{opt}_1(\tilde{G})$ in the $j$-th situation.

Note that each subgraph $\mathrm{opt}_1 (\tilde{G})^\prime_j$ corresponds to an operation in $\prod_{c\in \mathbf{Ch}^\prime_{\tilde{G}}(Y)} \mathrm{opt}_{c}$ on $\tilde{G}$, we have $N_{\mathrm{opt}_1(\tilde{G})}=\sum_{\mathrm{opt}\in \prod_{c\in \mathbf{Ch}^\prime_{\tilde{G}}(Y)}\mathrm{opt}_{c}}N_{\mathrm{opt}(\tilde{G})}$. Hence, $N_{\tilde{G}}=\prod_{c\in \mathbf{Ch}^\prime_{\tilde{G}}(Y),\mathbf{De}^\prime_{\tilde{G}}(c)=\emptyset} \mathrm{cost}(c) \sum_{\mathrm{opt} \in \prod_{c\in \mathbf{Ch}^\prime_{\tilde{G}}(Y)}\mathrm{opt}_{c}} N_{\mathrm{opt}(\tilde{G})}$ and $N_{\tilde{G}}=F_G$.

During the recursive execution of the $\mathrm{count}^\prime(G)$ to obtain $N_{\tilde{G}}$, there are at most $d_{>2}$ non-chain vertices merged into $Y$. As a result, by Lemma. \ref{lemma: merging property}, we have $N_G \leq 2^{d_{>2}} N_{\tilde{G}}$. The number of non-chain vertices merged into $Y$ is at least $0$, so we also have $N_{\tilde{G}} \leq N_G$.

To conclude, we have $F_G \leq N_G \leq 2^{d_{>2}}F_G$, which means \textbf{Claim. 1.} and hence the proposition is true.
\end{proof}

\begin{remark}
If the skeleton of $G$ is a tree, for two maximal chains $c_1, c_2$, define $c_2 \in \mathbf{Ch}^\prime(c_1)$ if $c_1$ contains the first non-chain vertex in the path from the head of $c_2$ to $Y$. The $F_G$ degenerates to:

\begin{equation*}
    F_G = \prod_{\tilde{c}\in \mathbf{Ch}^\prime(Y)} f(\tilde{c}),
\end{equation*}
with $f(c):=\mathrm{cost}(c)+\prod_{\tilde{c}\in \mathbf{Ch}^\prime(c)}f(\tilde{c})$, $\mathrm{cost}(c)=\mathrm{len}(c)+ \mathbbm{1}(\mathbf{Ch}^\prime(c) = \emptyset)$.
\end{remark}

\newpage
\section{Experiment}
\label{sec. appendix-exp}
\subsection{Implementation details}
\label{sec. appendix implementation}
All codes are implemented with PyTorch 1.10 and run on an Intel Xeon E5-2699A v4@2.40GHz CPU.

\noindent\textbf{Baselines.}
    \begin{enumerate}
    \item Vanilla. $\mathbb{E}[Y|\boldsymbol{x}]$ is implemented by the same neural network as $f_{S^\prime}$.
    \item ICP (\url{https://github.com/juangamella/icp}). The level of the test procedure is set to $0.05$. The estimator is implemented by the same neural network as $f_{S^\prime}$.
    \item IC (\url{https://github.com/mrojascarulla/causal_transfer_learning}). The level of the test procedure is set to $0.05$. Levene test is used. The estimator is implemented by the same neural network as $f_{S^\prime}$.
    \item DRO (\url{https://github.com/duchi-lab/certifiable-distributional-robustness}). The $\gamma$ is set to $2$. The estimator is implemented by the same neural network as $f_{S^\prime}$.
    \item Surgery estimator. Since there is no official implementation available, we implement it based on our method. Specifically, we pick $2\sim 3$ validation environments from $\mathcal{E}_{\text{tr}}$ and use the validation loss to select $S^*$.
    \item IRM (\url{https://github.com/facebookresearch/InvariantRiskMinimization}). The best $\phi$ is chosen by comparing the validation loss of $\mathrm{reg}=0,10^{-5},10^{-4},10^{-3},10^{-2},10^{-1}$.
    \item HRM (\url{https://github.com/LhSthu/HRM}). The cluster number is set to the number of deployment environments. $\sigma$ and $\lambda$ are both set to $0.1$. The overall threshold for subset selection is set to $0.25$.
    \item IB-IRM (\url{https://github.com/ahujak/IB-IRM}). The $\lambda_{\text{ib}}$ is set to $0.1$, the $\lambda_{\text{irm}}$ is set to $0.75$. The estimator is implemented by the same neural network as $f_{S^\prime}$.
    \item Anchor regression (\url{https://github.com/rothenhaeusler/anchor-regression}). The $\gamma$ is set to $1.5$.
\end{enumerate}

\noindent\textbf{Synthetic study.} The neural networks to implement $f_{S^\prime}$ and $h_\theta$ are two-layers MLPs. We use a sigmoid activation function in the hidden layer to add non-linearity. We use the Adam optimizer. The learning rate is set to $0.02$, and epochs are set to $10000$ with an early stop.

\noindent\textbf{Alzheimer's disease diagnosis.} The neural networks to implement $f_{S^\prime}$ and $h_\theta$ are the same as the synthetic study. We use the SGD optimizer. For the estimation of $f_{S^\prime}$, the epochs are set to $5000$, the learning rate is set to $0.25$ in the first $4000$ epochs, and decreased to $0.1$ in the last $1000$ epochs. For the estimation of $\mathcal{L}$, the epochs are set to $12000$ with an early stop, the learning rate is set to $0.4$.

\noindent\textbf{Gene function prediction.} The neural networks to implement $f_{S^\prime}$ and $h_\theta$ are the same as the synthetic study. We use the SGD optimizer. The epochs are set to $10000$. For the estimation of $f_{S^\prime}$, the learning rate is set to $0.01$. For the estimation of $\mathcal{L}$, the learning rate is set to $0.05$.

\begin{figure}[ht!]
    \centering
    \includegraphics[width=\textwidth]{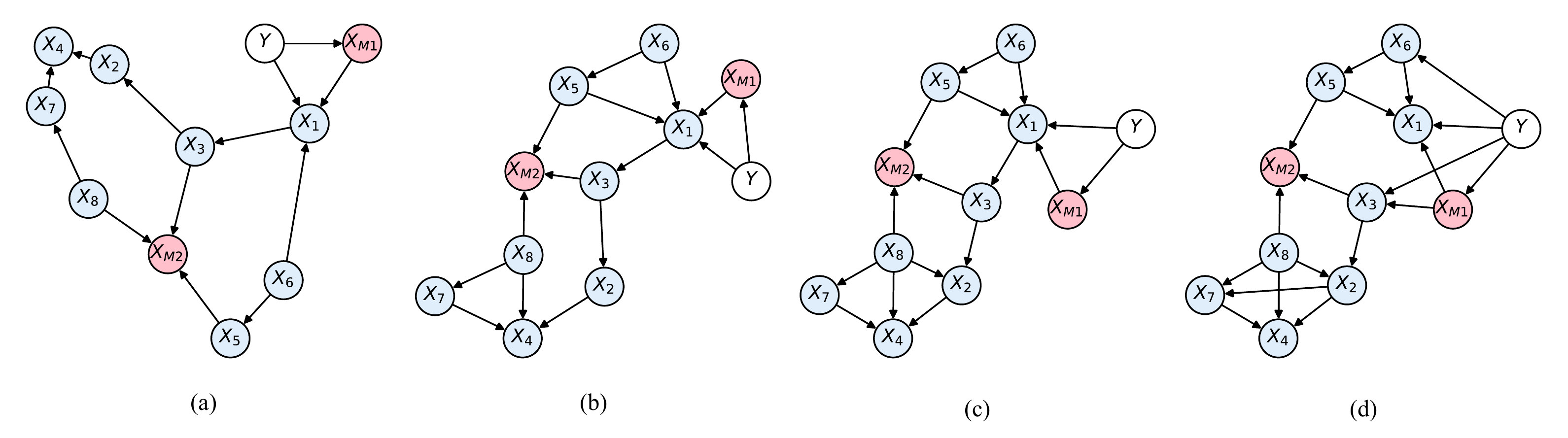}
    \caption{The synthetic causal graphs for complexity analysis. Stable and mutable variables are respectively marked blue and red. We have $d_{>2}=1,2,5,6$ in (a), (b), (c), (d), respectively. The sparse graphs (a), (b) are generated by deleting edges from Fig.~\ref{fig:simg3_ad_causal_graphs} (a). The dense graphs (c), (d) are generated by adding edges to Fig.~\ref{fig:simg3_ad_causal_graphs} (a).}
    \label{fig:simu1245_causal_graph}
\end{figure}

\begin{table}[h!]
  \caption{Indices for brain region partition.}
  \label{tab: aal indices}
  \centering
  \begin{tabular}{llc}
     \toprule
     Abbreviation & Brain region & AAL index \cite{tzourio2002automated} \\
     \midrule
     FSL & Frontal superior lobe & 2101,2102,2111,2112,2601,2602 \\
     FML & Frontal middle lobe & 2201,2202,2211,2212,2611,2612 \\
     FIL & Frontal inferior lobe & 2301,2302,2311,2312,2321,2322 \\
     \hline
     TSL & Temporal superior lobe & 8111,8112 \\
     TML & Temporal medial lobe & 8201,8202 \\
     TIL & Temporal inferior lobe & 8301,8302 \\
     TP & Temporal pole & 8121,8122,8211,8212 \\
     \hline 
     PSL & Parietal superior lobe & 6101,6102 \\
     PIL & Parietal inferior lobe & 6201,6202 \\
     \hline
     OSL & Occipital superior lobe & 5101,5102 \\
     OML & Occipital middle lobe & 5201,5202 \\
     OIL & Occipital inferior lobe & 5301,5302 \\
     \hline
     CA & Cingulum anterior & 4001,4002 \\
     CM & Cingulum middle & 4011,4012 \\
     CP & Cingulum posterior & 4021,4022 \\
     \hline
     INS & Insula & 3001,3002 \\
     AMY & Amygdala & 4201,4202 \\
     CAU & Caudate & 7001,7002 \\
     HP & Hippocampus & 4101,4102 \\
     PAL & Pallidum & 7021,7022 \\
     PUT & Putamen & 7011,7012 \\
     THA & Thalamus & 7101,7102 \\
     \bottomrule
  \end{tabular}
\end{table}

\clearpage
\subsection{Extra results}
\label{sec. appendix extra results}
\begin{table}[h]
\caption{Max. MSE evaluation on synthetic and IMPC datasets. The first column notes the methods we compare. The second column represents the max. MSE over deployment environments. Data for Syn-a,b,c,d are respectively generated by the causal graphs (a) ,(b), (c), and (d) shown in Fig.~\ref{fig:simu1245_causal_graph}. The best results are \textbf{boldfaced}.}
    \label{tab: maxmse, numerical value, synthetic}
    \centering
    \scalebox{1}{
    \begin{tabular}{c|c|c|c|c|c}
    \hline
    \multirow{2}{*}{Method} &  \multicolumn{5}{c}{max. MSE ($\downarrow$)}\\
    \cline{2-6} 
     &  Syn-a & Syn-b & Syn-c & Syn-d & IMPC\\
    \hline
    Vanilla & $15.946_{\pm 2.7}$ & $3.033_{\pm 2.7}$ & $5.613_{\pm 3.5}$ & $1.814_{\pm 0.4}$ & $1.227_{\pm 0.1}$ \\ 
    ICP \cite{peters2016causal} & $1.777_{\pm 0.6}$ & $1.629_{\pm 0.6}$ & $1.631_{\pm 0.6}$ & $1.097_{\pm 0.1}$ & $1.291_{\pm 0.3}$\\
    IC \cite{rojas2018invariant} & $5.580_{\pm 0.3}$ & $1.631_{\pm 0.4}$ & $2.322_{\pm 0.7}$ & $1.665_{\pm 0.3}$ & $1.253_{\pm 0.2}$\\
    DRO \cite{sinha2018certifiable}& $4.511_{\pm 1.8}$ & $1.628_{\pm 0.4}$ & $2.311_{\pm 0.7}$ & $1.827_{\pm 0.4}$ & $1.196_{\pm 0.1}$\\
    Surgery \cite{subbaswamy2019preventing} & $1.325_{\pm 0.0}$ & $1.086_{\pm 0.0}$ & $1.005_{\pm 0.1}$ & $1.190_{\pm 0.2}$ & $1.071_{\pm 0.1}$\\
    \hline
    IRM \cite{arjovsky2019invariant}& $6.328_{\pm 2.3}$ & $1.439_{\pm 0.2}$ & $3.067_{\pm 0.9}$ & $1.601_{\pm 0.5}$ & $1.296_{\pm 0.1}$\\
    HRM \cite{liu2021heterogeneous}& $4.511_{\pm 1.8}$ & $1.537_{\pm 0.4}$ & $1.019_{\pm 0.0}$ & $1.427_{\pm 0.3}$ & $1.205_{\pm 0.1}$\\
    IB-IRM \cite{ahuja2021invariance}& $1.194_{\pm 0.0}$ & $1.177_{\pm 0.0}$ & $1.111_{\pm 0.1}$ & $1.108_{\pm 0.0}$ & $1.288_{\pm 0.1}$\\
    AncReg \cite{rothenhausler2021anchor}& $1.482_{\pm 0.4}$ & $\mathbf{1.032_{\pm 0.0}}$ & $1.117_{\pm 0.7}$ & $1.631_{\pm 0.5}$ & $1.127_{\pm 0.1}$\\
    \hline
    Ours (Alg.~\ref{alg:identify-f-star}) & $\mathbf{1.037_{\pm 0.0}}$ & $1.046_{\pm 0.0}$ & $\mathbf{0.689_{\pm 0.0}}$ & $\mathbf{1.067_{\pm 0.0}}$ & $\mathbf{0.952_{\pm 0.0}}$\\
    \hline
    \end{tabular}}
\end{table}

\begin{table}[h]
\caption{Std. MSE evaluation on synthetic and IMPC datasets. The first column notes the methods we compare. The second column represents the max. MSE over deployment environments. Data for Syn-a,b,c,d are respectively generated by the causal graphs (a) ,(b), (c), and (d)  shown in Fig.~\ref{fig:simu1245_causal_graph}. The best results are \textbf{boldfaced}.}
    \label{tab: stdmse, numerical value, synthetic}
    \centering
    \scalebox{1}{
    \begin{tabular}{c|c|c|c|c|c}
    \hline
    \multirow{2}{*}{Method} &  \multicolumn{5}{c}{std. MSE ($\downarrow$)}\\
    \cline{2-6} 
     &  Syn-a & Syn-b & Syn-c & Syn-d & IMPC\\
    \hline
    Vanilla & $3.184_{\pm 1.2}$ & $0.552_{\pm 0.5}$ & $1.543_{\pm 1.0}$ & $0.463_{\pm 0.3}$ & $0.257_{\pm 0.0}$\\ 
    ICP \cite{peters2016causal} & $0.132_{\pm 0.1}$ & $0.145_{\pm 0.0}$ & $0.219_{\pm 0.1}$ & $0.055_{\pm 0.0}$& $0.289_{\pm 0.1}$ \\ 
    IC \cite{rojas2018invariant} & $0.421_{\pm 0.0}$ & $0.324_{\pm 0.2}$ & $0.635_{\pm 0.4}$ & $0.376_{\pm 0.1}$ & $0.302_{\pm 0.1}$\\  
    DRO \cite{sinha2018certifiable}& $0.329_{\pm 0.1}$ & $0.321_{\pm 0.2}$ & $0.633_{\pm 0.4}$ & $0.474_{\pm 0.3}$ & $0.266_{\pm 0.0}$\\ 
    Surgery \cite{subbaswamy2019preventing} & $0.367_{\pm 0.0}$ & $0.071_{\pm 0.0}$ & $0.147_{\pm 0.0}$ & $0.101_{\pm 0.1}$ & $0.237_{\pm 0.0}$\\  
    \hline
    IRM \cite{arjovsky2019invariant}& $1.560_{\pm 0.7}$ & $0.205_{\pm 0.1}$ & $0.803_{\pm 0.4}$ & $0.369_{\pm 0.3}$ & $0.319_{\pm 0.1}$\\ 
    HRM \cite{liu2021heterogeneous}& $0.328_{\pm 0.1}$ & $0.280_{\pm 0.2}$ & $\mathbf{0.011_{\pm 0.0}}$ & $0.195_{\pm 0.1}$& $0.278_{\pm 0.0}$ \\ 
    IB-IRM \cite{ahuja2021invariance}& $0.104_{\pm 0.0}$ & $0.063_{\pm 0.0}$ & $0.082_{\pm 0.0}$ & $\mathbf{0.055_{\pm 0.0}}$ & $0.259_{\pm 0.0}$\\ 
    AncReg \cite{rothenhausler2021anchor}& $0.210_{\pm 0.1}$ & $\mathbf{0.030_{\pm 0.0}}$ & $0.285_{\pm 0.3}$ & $0.410_{\pm 0.3}$ & $0.275_{\pm 0.0}$\\ 
    \hline
    Ours (Alg.~\ref{alg:identify-f-star}) & $\mathbf{0.034_{\pm 0.0}}$ & $0.043_{\pm 0.0}$ &  $0.073_{\pm 0.0}$ & $0.062_{\pm 0.0}$ & $\mathbf{0.017_{\pm 0.0}}$\\ 
    \hline
    \end{tabular}}
\end{table}

\begin{table}[ht!]
\caption{Std. over equivalent subsets on synthetic and IMPC datasets. Data for Syn-a,b,c,d are respectively generated by the causal graphs (a) ,(b), (c), and (d)  shown in Fig.~\ref{fig:simu1245_causal_graph}.}
    \label{tab: appendix verify equivalence}
    \centering
    \scalebox{1}{
    \begin{tabular}{c c c c c c}
    \hline
    \multirow{2}{*}{Metric} &  \multicolumn{5}{c}{Dataset}\\
    \cline{2-6} 
     &  Syn-a & Syn-b & Syn-c & Syn-d & IMPC\\
    \hline
     Inter-class std. & 0.118 & 1.178 & 0.697 & 1.668 & 0.288\\
     Intra-class std. & 0.005 & 0.017 & 0.001 & 0.015 & 0.023 \\
    \hline
    \end{tabular}}
\end{table}

\clearpage
\begin{figure}[h]
    \centering
    \includegraphics[width=0.6\textwidth]{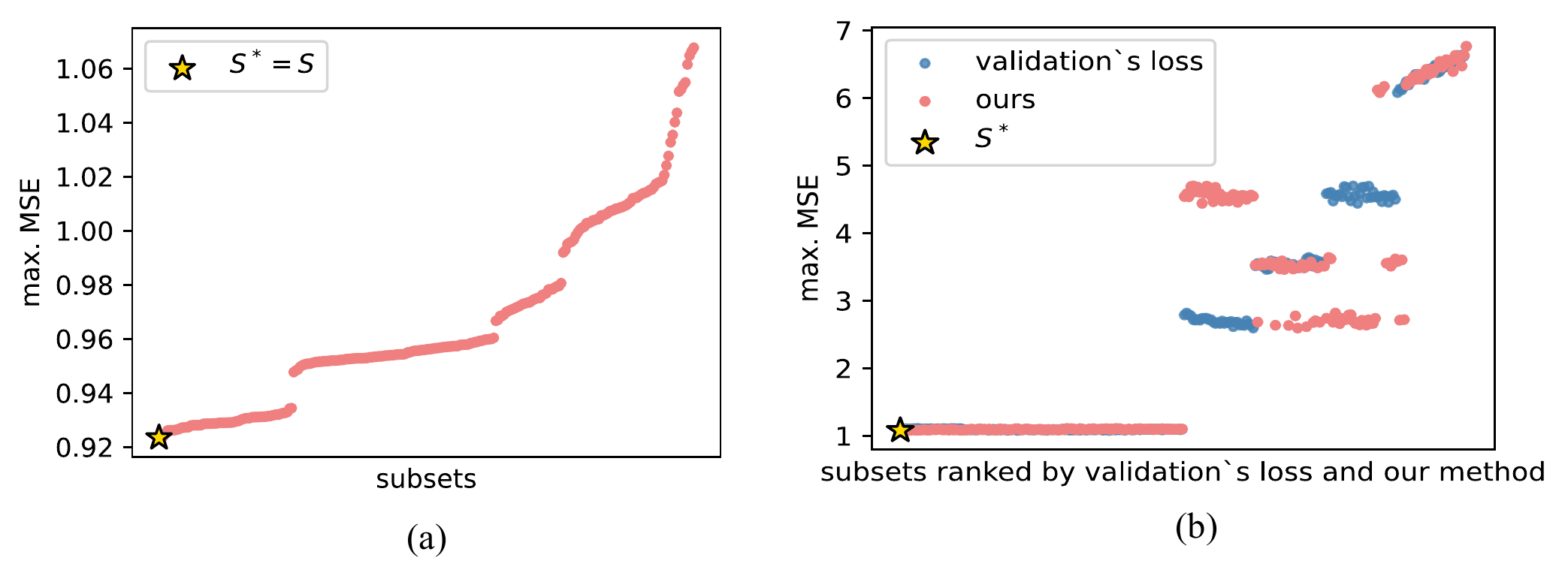}
    \caption{Detailed results for Fig.~\ref{fig:simg3_scatters_search_cost}. (a) and (b) respectively show the max. MSE of all the 256 subsets for Fig.~\ref{fig:simg3_scatters_search_cost} (a) and (b).}
    \label{fig:simg1245_minimax_validation}
\end{figure}

\begin{figure}[h]
    \centering
    \includegraphics[width=\textwidth]{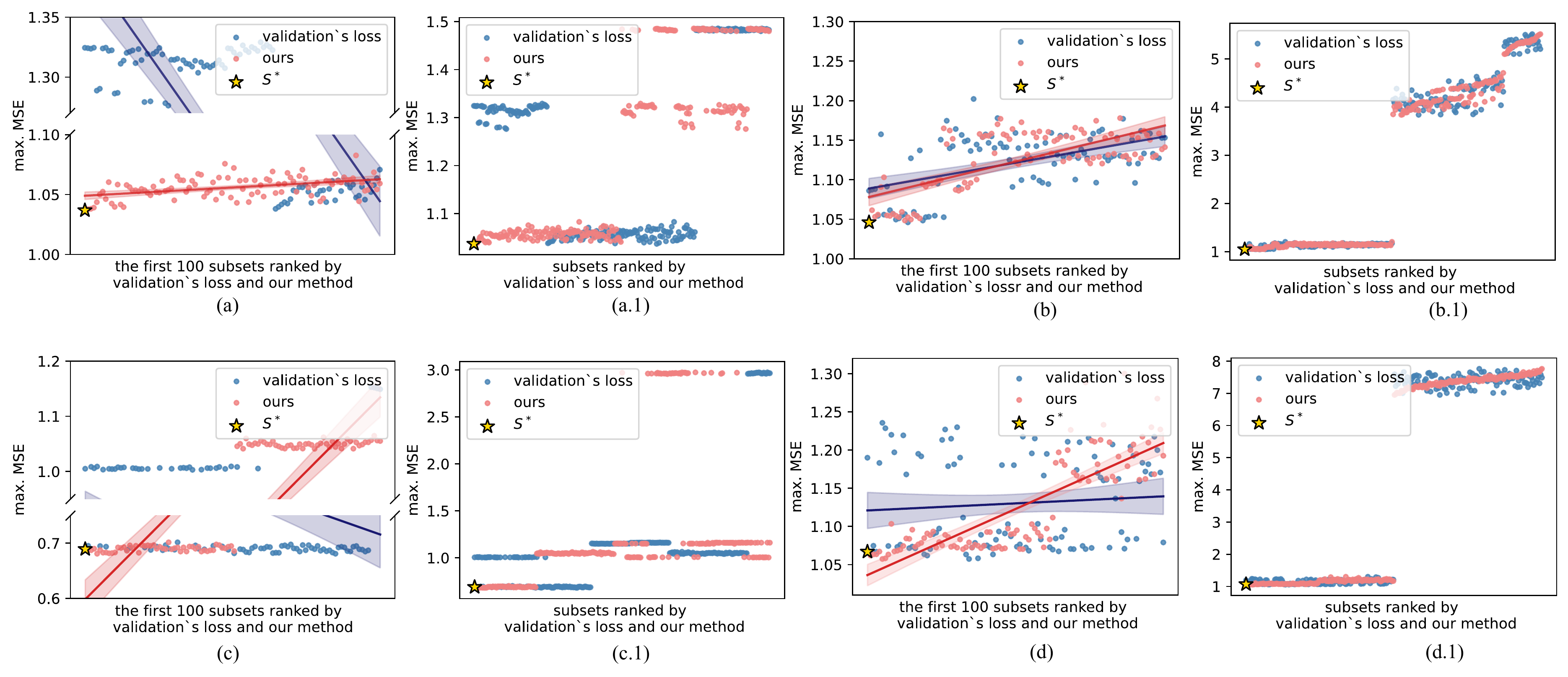}
    \caption{Results on synthetic data. (a), (b), (c), and (d) respectively show the max. MSE of the first 100 subsets ranked respectively according to our method and the validation`s loss, with data generated by the causal graphs in Fig.~\ref{fig:simu1245_causal_graph} (a), (b), (c), and (d). Detailed results of all the 256 subsets are shown in (a.1), (b.1), (c.1), and (d.1), respectively.}
    \label{fig:appdx_synthetic_reults}
\end{figure}

\clearpage
\subsubsection{Gene function prediction}
\label{sec.gene}
In this section, we evaluate our method on gene function prediction, which can potentially help better understand the human-disease progress \citep{munoz2018international}.

\begin{figure}[ht!]
    \centering
    \includegraphics[width=\textwidth]{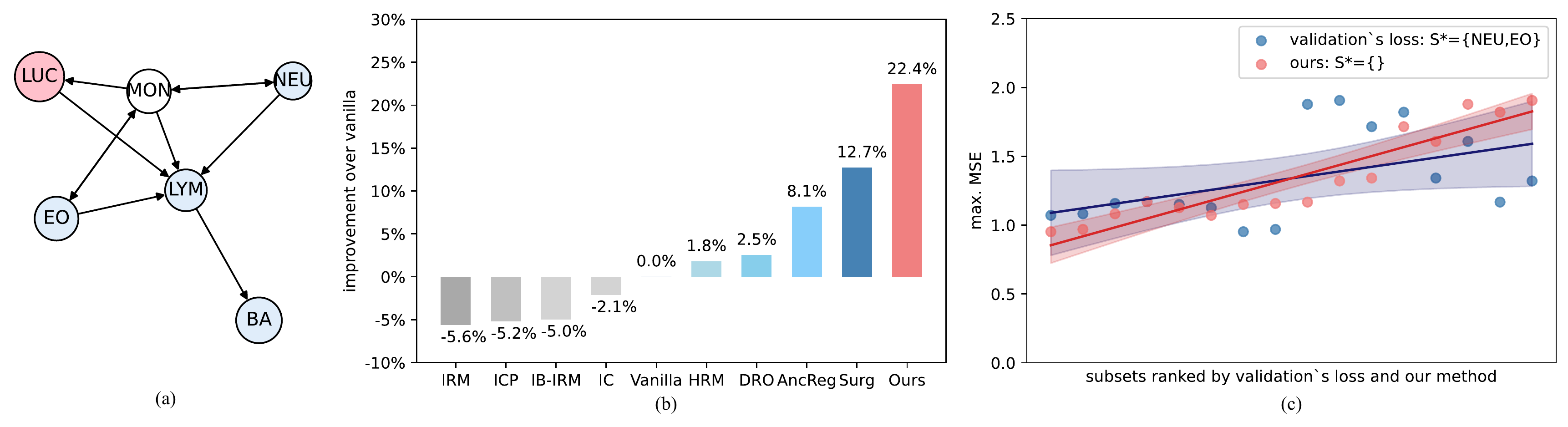}
    \caption{Gene function prediction. (a) The learned causal graph, where $\leftrightarrow$ denotes undirected edges. (b) Comparison of max. MSE over deployed environments. (c) Max. MSE of subsets that are ranked respectively according to the estimated worst-case risk of our method and the validation's loss.}
    \label{fig:gene_results}
\end{figure}

\noindent\textbf{Dataset.} We consider the International Mouse Phenotyping Consortium (IMPC) dataset\footnote{\url{http://www.crm.umontreal.ca/2016/Genetics16/competition_e.php}} that was originally published in a causal inference challenge and later used as a benchmark for domain generalization \cite{magliacane2018domain}. The IMPC contains the hematology phenotypes of both wild-type mice and mutant mice with $13$ kinds of single-gene knockout. To predict the gene function, we knock out this gene and assess the cell counts of monocyte (MON), with cell counts of neutrophil (NEU), lymphocyte (LYM), eosinophil (EO), basophil (BA), and large unstained cell (LUC) as covariates. We use the kind of knocked-out gene to divide environments. The training environments contain wild-type mice and five randomly picked gene knockouts. The deployed environments contain the rest nine kinds of gene knockouts. This random train-test split is repeated $45$ times.

\noindent\textbf{Causal discovery and $\mathrm{Pow}(S)/\!\sim_G$ recovery.} The learned causal graph is shown in Fig.~\ref{fig:gene_results} (a). As we can see,  we have $\text{LUC} \to \text{LYM}$ and $\text{LYM} \to \text{BA}$, which respectively echo the existing studies that monocyte can activate the lymphocyte \citep{carr1994monocyte} and increase the number of LUC \citep{lee2021clinical}, and that the lymphocyte participates in activating basophil \citep{goetzl1984basophil}. Since MON is mutable and $\text{LYM} \in \mathbf{De}(\text{MON})$ is pointed by LUC, the condition in Thm.~\ref{thm:graph degenerate} is violated. We need to compare the equivalence classes to find the optimal subset. Applying Alg.~\ref{alg: recover g-equivalence}, we find that there are $12$ equivalence classes out the $2^4$ subsets ($d_S|=4$ as shown in Fig.~\ref{fig:gene_results} (a)).

\noindent\textbf{Results.} Fig.~\ref{fig:gene_results} (b) and Tab.~\ref{tab: maxmse, numerical value, synthetic}, \ref{tab: stdmse, numerical value, synthetic} respectively report the max. MSE and std. MSE of our method and baselines. As we can see, our method can outperform the others by a significant margin. Besides, Fig.~\ref{fig:gene_results} (c) shows that our $\mathcal{L}$ can well reflect the worst-case risk.

\noindent\textbf{Analysis of $\sim_G$ equivalence.} We 
compute the intra-class std. versus the inter-class std. The results are shown in Tab.~\ref{tab: appendix verify equivalence} (IMPC), which show that equivalence subsets have comparable max. MSE.


\end{document}